\title[Experimental Design for Semiparametric Bandits]{Experimental Design for Semiparametric Bandits}
\newcommand{\algoa}{\texttt{DEO}}
\newcommand{\algob}{\texttt{SBE}}
\newcommand{\op}{\operatorname{op}}
\newcommand{\opt}{\operatorname{opt}}
\newcommand{\bSigma}{\bm{\Sigma}}
\newcommand{\des}{\operatorname{deo}}
\begin{document}

\maketitle
\begin{abstract}
We study finite-armed semiparametric bandits, where each arm's reward combines a linear component with an unknown, potentially adversarial shift. This model strictly generalizes classical linear bandits and reflects complexities common in practice. We propose the first experimental-design approach that simultaneously offers a sharp regret bound, a PAC bound, and a best-arm identification guarantee. Our method attains the minimax regret $\tilde{\mathcal{O}}(\sqrt{dT})$, matching the known lower bound for finite-armed linear bandits, and further achieves logarithmic regret under a positive suboptimality gap condition. These guarantees follow from our refined non-asymptotic analysis of orthogonalized regression that attains the optimal $\sqrt{d}$ rate, paving the way for robust and efficient learning across a broad class of semiparametric bandit problems.
\end{abstract}

\begin{keywords}%
semiparametric bandits, experimental design, 
exploration-exploitation, G-optimal design
\end{keywords}

\begingroup 
\renewcommand{\thefootnote}{}
\footnotetext{Corresponding authors: Gi-Soo Kim (gisookim@unist.ac.kr) and  Min-hwan Oh (minoh@snu.ac.kr).}%
\endgroup

\section{Introduction}
\noindent
Linear bandits, in which each arm is identified with a feature vector
\(x\in\mathbb{R}^d\) and the expected reward takes the form
\(x^\top \theta^\star\), have been extensively studied for both regret
minimization and pure-exploration objectives
\citep{abbasi2011improved,
soare2014best,jedra2020optimal,degenne2020gamification,xu2018fully,fiez2019sequential}.
Thanks to the linear structure, design-based methods such as the G-optimal design
provide strong theoretical guarantees, enabling
optimal \(\tilde{\mathcal{O}}(\sqrt{dT \log K})\)\footnote{\(\tilde{\mathcal{O}}\) suppresses logarithmic factors of $d$ and $T$ but \emph{does not hide dependence on the number of arms, $K$.}}
 regret bounds  \citep{LS19bandit-book} and tight sample
complexities for \textit{probably approximately correct} (PAC) learning \citep{li2022instance,chaudhuri2019pac,sakhi2023pac} and \textit{best arm
identification} (BAI) \citep{soare2014best,fiez2019sequential,jedra2020optimal,komiyama2022minimax}.

However, many practical systems face adversarial or unpredictable effects that
cannot be faithfully captured by a purely linear model. Baseline shifts
or adversarial interventions often introduce reward
variations beyond the reach of linear assumptions. To address these
complexities, a semiparametric reward model
\citep{greenewald2017action,krishnamurthy2018semiparametric,kim2019contextual}
incorporates an additional shift term. Specifically, if \(r_t\) denotes the reward obtained from taking action 
\(a_t\) whose feature vector is $x_{a_t} \in  \mathbb{R}^d$
at time \(t\), 
the model posits that
\[
r_t = x_{a_t}^\top \theta^\star + \nu_t + \eta_t,
\]
where \(\theta^\star\) is an unknown parameter, \(\nu_t\) is an arbitrary (potentially adversarial) shift that is not governed by a parametric form and is determined prior to action selection, and \(\eta_t\) is noise.

This formulation subsumes linear bandits as a
special case (by taking \(\nu_t=0\)) and captures extra complexities found in
real-world applications, such as recommender systems \citep{mladenov2020demonstrating}.
However, many real-world bandit problems involve fixed features with time-varying effects—for example, clinical trials with fixed treatments but varying subjects (but without subject information) \citep{kazerouni2021best}.
A similar scenario appears in ad selection for a landing page 
(without access to personal information).
The term $\nu_t$ accounts for such time-varying baselines, including changes in varying user propensity, external interference, or broader trends. Moreover, BAI is commonly studied under fixed features, where incorporating $\nu_t$ is still meaningful.

Existing analyses of semiparametric bandits currently provide only
\(\tilde{\mathcal{O}}(d\sqrt{T})\) bounds
\citep{greenewald2017action,krishnamurthy2018semiparametric,chowdhury2023gbose}
or \(\tilde{\mathcal{O}}(d^{3/2}\sqrt{T})\) bounds
\citep{kim2019contextual}. 
However, both rates fall short of the 
\(\tilde{\mathcal{O}}(\sqrt{dT \log K})\) bound known to be optimal in 
linear bandits with a finite action set~\citep{li2019nearly}. 
This gap leaves open the question of whether more sophisticated 
design-based methods can attain a sharper \(\sqrt{d}\)-type regret 
bound in semiparametric settings, even under adversarial shifts. 
Moreover, no gap-dependent logarithmic regret result has been established, 
further contrasting with the well-studied linear bandit framework.

Another key limitation in the existing literature is the complete absence of work 
on experimental design in semiparametric bandits.
Whereas G-optimal design is routinely employed in linear bandits to obtain 
PAC and BAI guarantees, no analogous design-based technique exists for semiparametric bandits. 
All prior approaches 
\citep{greenewald2017action,krishnamurthy2018semiparametric,kim2019contextual,chowdhury2023gbose} 
have utilized \emph{orthogonalized regression}, which introduces substantial 
new challenges to experimental design and prevents the direct extension 
of existing linear-bandit methods.
Hence, the following research questions arise:
\begin{itemize}
    \setlength\itemsep{0em}
    \item \textit{Can we develop an experimental design for semiparametric bandits?}
    \item \textit{Can we develop an efficient algorithm that leverages this experimental design 
          to simultaneously achieve sharp regret bounds, PAC, and BAI guarantees?
          }
\end{itemize}

In this paper, we address these questions by proposing new algorithmic 
and analysis frameworks for semiparametric bandits that incorporate optimal design methods 
into orthogonalized regression. 
We propose the first procedure that tackles
the non-convex design problem of orthogonalized regression efficiently, enabling a sharp regret
bound of \(\tilde{\mathcal{O}}(\sqrt{dT \log K})\). Moreover, by utilizing a dependency on the suboptimality gap, our approach also achieves logarithmic regret, bridging the gap between the theory of linear bandits and that of semiparametric bandits.
Beyond regret minimization, we provide the first PAC and BAI  results for semiparametric bandits by establishing sample complexities that match those of \citet{soare2014best} for linear bandits.
Finally, we propose our main algorithm, which exhibits low regret as well as PAC and BAI guarantees.
These new results are driven by a
sharper non-asymptotic analysis of orthogonalized regression, which replaces
the loose Cauchy--Schwarz arguments and delivers dimension-optimal
statistical rates.

Our result can also be applied to a multi-armed bandit (MAB) problem with a semiparametric reward model. 
In this setting, each arm $i$ has a base reward distribution with a mean of $\mu_i \subset \RR$ for $i=1, 2, \dots, K$, but the reward observed at each time step $t$ includes an additional, time-varying shift, $\nu_t$. This setup can be framed within our model by defining the feature set $\cX$ as the standard basis of $\RR^K$.
We discuss this application in greater detail in Section~\ref{section; application to MAB}. For this bandit problem, our results achieve a regret bound of $\tilde{O}(\sqrt{KT})$ as well as logarithmic regret when there is a suboptimality gap.

\subsection{Our Contributions}
\noindent
We summarize our key contributions as follows:
\begin{enumerate}[1.]
\item \textbf{Experimental design for semiparametric bandits.} \;
Prior to our work, experimental design for semiparametric reward models had not been investigated. 
We develop a novel design-based approach that, for the first time, enables both sharp regret bounds and exploration-based guarantees (PAC, BAI) in semiparametric bandits. In particular, our proposed algorithm, built around this new experimental design, attains a regret bound of  
\(\tilde \cO(\sqrt{dT\log K})\) while simultaneously ensuring PAC and BAI performance.

\item \textbf{\(\tilde{\cO}(\sqrt{dT \log K})\) regret bound for semiparametric bandits.} \;
We propose a phase-elimination-based algorithm that achieves 
\(\min(\tilde{\cO}(\sqrt{dT \log K}), \tilde{\cO}(d \sqrt{T}))\) cumulative regret 
(Theorem~\ref{theorem; regret bound without gap} and Theorem~\ref{theorem; adaptive regret bound}).
To our best knowledge, this is the first result attaining the \(\sqrt{d}\)-rate regret in a semiparametric bandit setting.
It matches the known lower bounds for linear bandits with finite arms \citep{LS19bandit-book}, 
and since the linear model is a special case of our framework, 
this bound is information-theoretically minimax optimal (up to logarithmic factors).

\item \textbf{Gap-dependent logarithmic regret bound for semiparametric bandits.} \,
By allowing for suboptimality gap dependence, we also derive the regret bound logarithmic in \(T\) (Theorem~\ref{theorem; regret bound with gap}), 
marking the first  logarithmic regret result for semiparametric bandits.
Hence, when the suboptimality gap is reasonably small, our result significantly improves over previous results \citep{krishnamurthy2018semiparametric}.

\item \textbf{PAC and BAI results for semiparametric bandits.} \;
We provide the first exploration-based guarantees for semiparametric bandits 
(Corollary~\ref{corollary; dimension optimal sample complexity} and Theorem~\ref{Theorem; Exp of PE}) including the PAC bound and BAI guarantee. 
It is important to note that our proposed algorithm simultaneously possesses the properties of low regret, PAC, and BAI.
Our sample complexities match the results for linear bandits studied in \citet{soare2014best}.

\item \textbf{Novel policy design for orthogonalized regression.} \;
Orthogonalized regression is a key technique for semiparametric reward estimation \citep{krishnamurthy2018semiparametric,kim2019contextual,choi2023semi}, 
requiring a G-optimal design step that is typically non-convex and challenging to solve. 
We propose an efficient algorithmic procedure (Theorem~\ref{theorem; optimal variance design}) 
that yields a suitable design solution while remaining computationally tractable, 
thereby enabling effective policy construction in practice.

\item \textbf{Sharper non-asymptotic analysis of orthogonalized regression.} \;
Existing semiparametric bandit approaches often rely on the analysis based on the Cauchy--Schwarz inequality, 
which do not yield the optimal \(\sqrt{d}\) dependence in estimation error. 
We develop a new analysis framework,
achieving dimension-optimal statistical rates for orthogonalized regression (Theorem~\ref{theorem; dimension optimal error bound}) and 
thereby improving upon the standard estimation error bounds, which can be of independent interest.

\end{enumerate}

\subsection{Related Work}

\paragraph{Semiparametric Reward Model in Bandits.}
Several prior studies have explored the semiparametric reward model 
in various settings, including contextual and multi-agent bandit
\citep{greenewald2017action,krishnamurthy2018semiparametric,kim2019contextual,chowdhury2023gbose,choi2023semi}. 
\citet{krishnamurthy2018semiparametric} introduced an algorithm, 
achieving \(\tilde{\mathcal{O}}(d\sqrt{T})\) regret, 
while \citet{kim2019contextual} proposed a Thompson sampling-based method 
with regret \(\tilde{\mathcal{O}}(d^{3/2}\sqrt{T})\). 
\citet{chowdhury2023gbose} then developed a more computationally efficient algorithm 
with \(\tilde{\mathcal{O}}(d\sqrt{T})\) regret, and \citet{choi2023semi} 
studied multi-agent bandits under the same semiparametric framework. 
However, \citet{krishnamurthy2018semiparametric} left open 
the possibility of achieving \(\tilde{\cO}(\sqrt{dT \log K})\) regret. 
All of the prior analyses rely on the Cauchy-Schwarz inequality-based approach, 
which cannot deliver the desired \(\sqrt{d}\)-rate in regret.
Moreover, there has been the absence of exploration-based guarantees 
(such as PAC or BAI) in the semiparametric bandit literature.

\paragraph{Pure Exploration in Linear Bandits.}
\noindent
PAC algorithms in bandits~\citep{sakhi2023pac,chaudhuri2019pac,wagenmaker2022instance} and RL \citep{strehl2009reinforcement} are a key objective, alongside regret minimization, and  have long been a prominent line of research.
Particularly in linear bandits, the PAC and Best Arm Identification (BAI) objectives 
have been thoroughly investigated in works such as 
\citet{soare2014best,degenne2020gamification,li2022instance,jedra2020optimal,fiez2019sequential,komiyama2022minimax, yang2022minimax}, 
where G-optimal design plays a key role in achieving near-optimal sample complexity.
Unlike regret minimization, these methods aim to identify the best action 
or to bound the performance relative to the optimum, 
leading to rich theoretical and practical implications. 
However, in the semiparametric reward model, 
orthogonalized regression is used in place of standard linear regression, 
and these existing design-based strategies for linear bandits 
do not extend to the semiparametric setting. 
As a result, there has been no prior work offering PAC or BAI guarantees 
for semiparametric bandits.

\paragraph{G-optimal Design for Linear Bandits.}
\noindent
Let \(\Xcal = \{x_1, \dots, x_K\} \subset \RR^d\) be a set of features for actions in $\{1, ..., K\}$.
The G-optimal design is defined as the solution to the following optimization problem \citep{LS19bandit-book,soare2014best}: 
\begin{equation} \label{equation; g-optimal design}
v^\star :=  \min_{(p_1, \dots, p_K) \in \Delta^{(K)}} \max_{i \in [K]} \|x_i\|_{\bigl(\sum_i p_ix_ix_i^\top\bigr)^{-1}},
\end{equation}
and a known result gives \(v^\star = \sqrt{d}\).
The problem can be solved efficiently, and the support of the solution has bounded cardinality: \(\bigl|\{i \mid p_i >0\}\bigr| \leq \frac{d(d+1)}{2}\).
The resulting policy minimizes the maximum prediction variance over \(\mathcal{X}\), thereby efficiently solving the PAC and best arm identification problems for linear bandits. 
In semiparametric bandits, however, existing methods rely on 
\emph{orthogonalized regression} (e.g., \citealp{krishnamurthy2018semiparametric}), 
leading to a different, non-convex design objective. 
We formulate a suitable G-optimal design for orthogonalized regression 
and propose an efficient algorithm to solve it, 
thus extending the benefits of experimental designs 
to semiparametric bandits.

\subsection{Notations}\label{subsection; notations}
\noindent
We define \([n] := \{1, 2, \ldots, n\}\) for a positive integer $n$.
We write \(\Ocal(\cdot)\) or \(\lesssim\) to hide absolute constants, and \(\tilde{\Ocal}(\cdot)\) to hide constants and logarithmic factors in \(d\) and \(T\). 
We do \emph{not} hide the factor \(\log K\). 
We use \(a \asymp b\) if \(a \lesssim b\) and \(b \lesssim a\).
We define \(\Delta^{(n)}\) as the \(n\)-dimensional simplex.
For a vector \(x \in \mathbb{R}^d\), we let \(\|x\|_p\) be the \(\ell_p\) norm.
For any positive semidefinite matrix \(\Ab\), we define \(\|x\|_\Ab = \sqrt{x^\top \Ab x}\).
\paragraph{Matrix-inverse-weighted Norm.}
In what follows, we generalize the definition of the matrix-inverse-weighted norm (e.g., \(\| x\|_{\Ab^{-1}}\)) to positive semidefinite matrices, regardless of invertibility. 
To define a matrix-inverse-weighted norm for non-invertible matrices, 
we introduce the following extended definition.
Let \(\Ab\) be a positive semidefinite matrix. We define
\[
\|x\|_{\Ab^{-1}} := \lim_{\lambda \to 0} \|x\|_{(\Ab + \lambda \Ib_d)^{-1}}.
\]
This reduces to the usual definition of \(\|x\|_{\Ab^{-1}}\) when \(\Ab\) is full rank. 
By allowing \(\lambda \to 0\), we extend the definition to handle the non-invertible case 
in a well-defined way.
Even though \(\Ab\) is non-invertible, if \(x\) is contained in the subspace spanned by \(\Ab\)'s eigenvectors, then this value is finite and well-defined.

\section{Preliminaries}\label{section; problem formulation}
\noindent
In this section, we first present our problem setup of semiparametric bandits. Unlike the linear model, our framework includes an additional shift \(\nu_t\), determined prior to action selection, which may be adversarial. 
After that, we introduce orthogonalized regression—a common approach for obtaining the estimator used in all prior works \citep{krishnamurthy2018semiparametric,kim2019contextual,choi2023semi}.

\subsection{Problem Setup: Semiparametric Bandits}\label{subsection; semiparametric bandits with fixed contexts}
\noindent
We consider a finite-armed bandit problem in which each arm \(i\in[K]\) is represented by 
a feature vector \(x_i \in \mathbb{R}^d\). Collectively, these vectors form the set
\(\mathcal{X} = \{x_1, \dots, x_K\}\subset \mathbb{R}^d\). At each round \(t = 1, 2, \ldots, T\),
the learner selects an arm \(a_t \in [K]\) and observes a reward 
\[
r_t \;=\; x_{a_t}^\top \theta^\star \;+\; \nu_t \;+\; \eta_t.
\]
Here, \(\theta^\star \in \mathbb{R}^d\) is an unknown parameter vector, 
\(\nu_t\in\mathbb{R}\) is a bounded shift (which can be adversarially chosen),
and \(\eta_t\) is a noise term. Specifically, we assume that \(\eta_t\)
is independent and sub-Gaussian with variance proxy \(1\) \citep{wainwright2019high}.
We set \(\mathcal{H}_{t}\) as the sigma-algebra generated by
\(\{a_1, r_1, \dots, a_t, r_t\}\). 
We allow the shift \(\nu_t\) to be any \(\mathcal{H}_{t-1}\)-measurable random variable satisfying the boundedness condition in Assumption~\ref{assumption; boundedness}. 
This means \(\nu_t\) may be adversarial, as long as it is determined before the choice of arm at time \(t\).

\paragraph{Feature Span.}
Without loss of generality, we assume that \(\{x_1, \dots, x_K\}\) spans a \(d\)-dimensional subspace of \(\mathbb{R}^d\). 
Should the actual rank be \(d' < d\), we may re-parameterize the problem using 
a \(d'\times d\) matrix \(A\) and a \(d'\times 1\) vector \(\theta^{\star '}\) 
so that for all \(i\in[K]\), \(x_i^\top \theta^\star = (A x_i)^\top \theta^{\star '}\). 
Henceforth, we assume full column rank (i.e.\ rank = \(d\)) without loss of generality.

\paragraph{Optimal Arm and Suboptimality Gap.}
We define the optimal arm as 
\(
a^\star \;:=\; \arg\max_{i\in[K]} \; x_i^\top \theta^\star,
\)
and let 
\(
\Delta_\star \;:=\; x_{a^\star}^\top \theta^\star \;-\; \max_{j \neq a^\star} \, x_j^\top \theta^\star
\)
denote the suboptimality gap. Throughout, we assume the optimal arm is unique, 
though our methods readily extend to settings with multiple optimal arms.

\paragraph{Cumulative Regret.}
Our primary performance criterion for the semiparametric bandit problem is the \emph{cumulative regret}, 
defined by
\[
\Regret(T)
\;:=\;
\sum_{t=1}^T \left( x_{a^\star}^\top \theta^\star \;-\; x_{a_t}^\top \theta^\star \right).
\]
Since \(a^\star\) maximizes the expected reward in the absence of the shift \(\nu_t\), 
the regret measures how much reward is lost by playing suboptimal arms over time.

\paragraph{Fixed Policies and Experimental Design.}
In addition to regret minimization, we consider exploration problems and, more generally, 
estimation tasks that require carefully chosen sampling policies. 
We use the term \emph{design} to denote a (possibly randomized) policy
that chooses arm \(a_t\in[K]\) at each time \(t\). 
Sections~\ref{section; optimal design} and \ref{section; error analysis} focus on 
evaluating the estimation error when a fixed policy (design) is executed, i.e., 
when the policy is determined independently of the data. 
Our aim there is to identify designs that control the prediction variance 
and enable strong PAC or best arm identification guarantees. 
Section~\ref{section; low regret algorithm} then develops an adaptive, data-dependent 
policy that simultaneously achieves low cumulative regret, PAC, and BAI guarantees.

\begin{assumption}[Boundedness of parameter and features]
\label{assumption; boundedness}
We assume \(\|\theta^\star\|_2 \le 1\), \(\|x_i\|_2 \le 1\) for all \(i \in [K]\), 
and \(|\nu_t|\le 1\) for all \(t \ge 1\). 
\end{assumption}
Using $1$ as the bound is without loss of generality and maintains consistency with prior work 
\citep{krishnamurthy2018semiparametric,kim2019contextual,choi2023semi,chowdhury2023gbose}. 
No additional assumption is placed on \(\nu_t\) beyond boundedness; in particular, it may be adversarially chosen, state-dependent, or stochastic, provided it is fixed before each action is selected.

Next, we introduce the orthogonalized regression method for estimating \(\theta^\star\), which is used in all the aforementioned prior works.

\subsection{Estimation of \(\theta^\star\) by Orthogonalized Regression}\label{subsection; orthogonalized regression}
\noindent
For action \(a_s\) sampled at time \(s\) according to a random policy, we define the \emph{centered feature} \(\tilde{x}_{a_s} := x_{a_s} - \EE[x_{a_s} \mid \Hcal_{s-1}]\) so that \(\EE[\tilde{x}_{a_s} \mid \cH_{s-1}] = 0\). 
This expectation is taken with respect to the distribution of \( a_s \), namely, \( \mathbf{p}_s = \{p_{i,s}\}_{i=1}^K \). 
Since \( a_s \) is sampled from a multinomial distribution \( \operatorname{Multinomial}(1, \mathbf{p}_s) \), we have \( \mathbb{E}[x_{a_s} | \mathcal{H}_{s-1}] = \sum_{i=1}^K p_{i,s} x_i \). 
At time \(t\), we define the \emph{centered Gram matrix} \(\widehat{\Vb}_t := \sum_{s=1}^t \tilde{x}_{a_s}\tilde{x}_{a_s}^\top\) and set the \emph{empirical covariance} \(\widehat{\bSigma}_t := \frac{1}{t}\widehat{\Vb}_t\).
We adopt the estimator proposed in \citet{krishnamurthy2018semiparametric}, obtained by regressing the rewards on the centered features via ridge regression as follows,
\begin{equation}\label{equation; estimator}
\hat{\theta}_t = (\widehat{\Vb}_t + \beta_t \Ib_d)^{-1} \sum_{s=1}^t \tilde{x}_{a_s} r_{s},
\end{equation}
where \(\beta_t\) denotes the ridge regularizer.
Roughly speaking, we perform ridge regression on the independent variables \(\{\tilde{x}_{a_s}\}_{s=1}^t\) and the responses \(\{r_s\}_{s=1}^t\).

Consistency of the estimator can be inferred from the following decomposition of the estimation error:
\[
\hat{\theta}_t - \theta^\star
= (\widehat{\Vb}_t + \beta_t \Ib_d)^{-1} \sum_{s=1}^t \tilde{x}_{a_s} \bigl(\EE[x_{a_s} \mid \Hcal_{s-1}]^\top \theta^\star + \nu_s + \eta_s\bigr)
- \beta_t (\widehat{\Vb}_t + \beta_t \Ib_d)^{-1} \theta^\star.
\]
We observe that \(\sum_{s=1}^t \tilde{x}_{a_s}\bigl(\EE[x_{a_s} \mid \Hcal_{s-1}]^\top \theta^\star + \nu_s + \eta_s\bigr)\) is a martingale adapted to the filtration \(\{\mathcal{H}_{s-1}\}_{s=1}^{t}\). 
Therefore, the first term converges to 0 in probability under a suitably well-conditioned centered Gram matrix and covariance, \(\widehat{\Vb}_t\) and \(\widehat{\bSigma}_t\).
Compared to the first term, the second term is negligible and converges to 0 even faster.
Later, we present a sharp analysis of the estimation error in Section~\ref{section; error analysis}, which is one of our main contributions.

\section{Experimental Design for Orthogonalized Regression}\label{section; optimal design}
\noindent
In this section, we discuss efficient experimental design for orthogonalized regression. 
In our work, "design" refers to the policy \(\pb =(p_1, \dots p_K) \in \Delta^{(K)}\), where \(p_i\) denotes the probability of selecting action \(a_i\). 
Specifically, we consider a scenario in which a fixed policy \(\pb\) is employed for pure exploration over a predetermined period. 
Our goal is to identify an effective design for estimation. First, we establish the necessary notations and setup, and then we formulate the optimization problem for the design.

Throughout Sections \ref{section; optimal design} and \ref{section; error analysis}, we consider the case where we pull arms up to time \(t\) according to a fixed policy \(\pb =(p_1, \dots p_K)\) and obtain samples \(\{x_{a_s}, r_s\}_{s=1}^t\).
We then have \(\EE[x_{a_s} \mid \Hcal_{s-1}] = \bar{x}_{\pb}\) and \(\EE[\widehat{\bSigma}_t ] = \EE[\tilde{x}_{a_s}\tilde{x}_{a_s}^\top] := \bSigma_{\pb}\), where \(\bar{x}_{\pb}\) and \(\bSigma_{\pb}\) are defined as follows. 
\begin{definition}[Covariance and mean of policy \(\pb\)]
We define the feature mean of a policy \(\pb\) as 
\[
\bar{x}_{\pb} := \sum_{i=1}^K p_i x_i. 
\]
We also define the covariance of a policy \(\pb\) as
\[
\bSigma_{\pb} = \sum_{i=1}^K p_i\bigl(x_i - \bar{x}_{\pb}\bigr)\bigl(x_i - \bar{x}_{\pb}\bigr)^\top.
\]
\end{definition}
Then we have \(\tilde{x}_{a_s} := x_{a_s} - \bar{x}_{\pb}\) and additionally define \(\Vb_t := \EE[\widehat{\Vb}_t ] = t \bSigma_\pb\).

\paragraph{Ridge Regularizer Selection.}
In Lemma~\ref{lemma; second moment concentration}, we prove that when we choose \(\beta_t = \log(t/\delta)\), we have 
\[
\frac{1}{c}\bigl(\Vb_t + \beta_t \Ib_d \bigr) 
\preceq  \widehat{\Vb}_t + \beta_t \Ib_d  
\preceq c\bigl(\Vb_t + \beta_t \Ib_d \bigr)
\]
with probability at least \(1-\frac{\delta}{10}\) for some absolute constant \(c>0\).
From now on, we use the ridge regularizer \(\beta_t = \log(t/\delta)\) when we perform regression with \(t\) samples.
We also define the \emph{normalized ridge regularizer} \(\lambda_t = \frac{\beta_t}{t}\).
Later, we prove in Appendix~\ref{subsection; computational efficiency} that for our design (presented in Section~\ref{section; optimal design}), the constant $c$ can be $c \in [1,2]$ once $t \gtrsim d \log (dt/\delta)$.

\subsection{Key Quantity to Bound Estimation Error}
\noindent
Consider the situation where we aim to estimate the value of 
\(
  z^\top \theta^\star
\)
at some 
\(z \in \mathbb{R}^d \).
We observe below that \(\| z\|_{\bSigma_{\pb}^{-1}}\) is the key quantity that must be controlled for minimizing the estimation error at \(z \in \RR^d\).
For the extended definition of matrix-inverse-weighted norm (e.g. \(\| \cdot \|_{\bSigma_{\pb}^{-1}}\)), see Section~\ref{subsection; notations}.

To motivate this, we begin by presenting an error analysis based on the Cauchy–Schwarz inequality, as used in all prior works.
Using Lemma 11 of \citet{krishnamurthy2018semiparametric} and Lemma~\ref{lemma; self normalized bound second moment} (modified version), we can derive, for any \(z \in \RR^d\):
\begin{align}\label{equation; cauchy}
\lvert z^\top (\hat{\theta}_t- \theta^\star)\rvert
\leq
\| z\|_{(\Vb_t + \beta_t \Ib_d)^{-1}} \|(\hat{\theta}_t- \theta^\star) \|_{\Vb_t + \beta_t \Ib_d}
\lesssim
\frac{1}{\sqrt{t}} \sqrt{d  \log\bigl(\frac{t}{\delta}\bigr)}\| z\|_{\bSigma_{\pb}^{-1}},
\end{align}
where the first inequality holds via Cauchy–Schwarz. 
It is evident from this inequality that a low value of \(\| z\|_{\bSigma_{\pb}^{-1}}\) guarantees a tight bound on the estimation error.
This result is analogous to linear bandits, except that the covariance of the policy replaces the standard second moment. 

Later in Theorem~\ref{theorem; dimension optimal error bound}, we replace the Cauchy–Schwarz-based error analysis with a sharper inequality that removes the \(\sqrt{d}\) factor from the bound above. (The suboptimality of the Cauchy–Schwarz-based analysis is discussed in Appendix~\ref{section; suboptimality of Cauchy}.) 
The resulting bound is still proportional to \(\| z\|_{\bSigma_{\pb}^{-1}}\). 

To identify the best arm or minimize the regret, it suffices to estimate the expected reward of each arm up to an additive constant, i.e., it suffices to estimate \(x_i^{\top}\theta^{\star}+c\) for some constant \(c\) for all \(i\in [K] \). Letting \(c=-\mu^{\top}\theta^\star\) for some vector \(\mu\in\mathbb{R}^d\), we have \(x_i^\top \theta^{\star}+c=(x_i-\mu)^\top \theta^{\star}\). We therefore aim to find a design that minimizes the maximum value of \(\| z\|_{\bSigma_{\pb}^{-1}}\) over \(z\in\mathcal{X}-\mu:=\{x_1-\mu,\cdots, x_K-\mu\}\), where the value of \(\mu\in\mathbb{R}^d\) is specified in the next section.

\subsection{Main Challenges in Constructing the Experimental Design}
\noindent
We now formulate our optimization problem, aiming to find the optimal design for orthogonalized linear regression.

\paragraph{Parallel Shifting of Features.}

A natural choice of $\mu$ is $\mu=0$. However, we find that it can sometimes be impossible to finitely bound \(\max_{i \in [K]}\| x_i\|_{\bSigma_{\pb}^{-1}}\). This is because the subspace of \(\bSigma_{\pb}\) is spanned by \(\{x_i - x_1\}_{i \in [K]}\) (by Lemma~\ref{lemma; covariance decomposition}), which can be a strict subspace of \(\operatorname{span}(x_1,\dots,x_K)\). This problem can be circumvented by shifting the contexts using an appropriate nonzero vector $\mu$. Based on this observation, we present our G-optimal design problem for orthogonalized regression.

\begin{definition}[G-optimal design of orthogonalized regression]\label{problem; orthogonalized design}
We aim to find a policy $\pb$ that achieves
\[
\min_{\mu \in \RR^d, \, \pb \in \Delta^{(K)}} \, \max_{i \in [K]} \| x_i -\mu \|_{\bSigma_{\pb}^{-1}},
\]
where
\begin{align*}
    \bSigma_{\pb}= \sum_{i=1}^K p_i (x_i - \bar{x}_{\pb})(x_i - \bar{x}_{\pb})^\top  \,\, \text{  and  }\,\, \bar{x}_\pb  = \sum_{i=1}^K p_i x_i.
\end{align*}
\end{definition}

\paragraph{Challenges and Differences from Linear Bandits.}
Our optimization problem includes the covariance term \(\sum_{i=1}^K p_i (x_i - \bar{x}_{\pb})(x_i - \bar{x}_{\pb})^\top\), which is a \textbf{third-order} polynomial of \(p_i\) and hence is \textbf{non-convex}. In contrast, the optimal design for linear bandits is formulated with the second moment \(\sum_{i=1}^K p_ix_ix_i^\top\), which is linear in \(p_i\). The non-convexity renders our problem much more challenging, calling for non-convex optimization methods.

\paragraph{Our Goal.}
For linear bandits, the G-optimal design \cref{equation; g-optimal design} always has optimal cost $v^\star = \sqrt{d}$.
Since our reward model includes the linear model as a special case (\(\nu_t=0\)), if we solve our optimization problem and obtain \(\Ocal(\sqrt{d})\), we can view it as a sufficiently "nice" solution. We propose a design that achieves $\mathcal{O}(\sqrt{d})$, leading to an optimal regret bound and tight PAC and BAI results.

\subsection{Proposed Experimental Design and Performance Guarantees}
\noindent
Due to non-convexity, the optimization problem in Definition \ref{problem; orthogonalized design} is hard to solve exactly.
In this Section, instead of solving the problem exactly, we propose an algorithm which achieves $\mathcal{O}(\sqrt{d})$, which is sufficiently good for the estimation, which we will discuss more in next Section~\ref{section; error analysis}.

There are two main challenges. The first is handling the covariance of the policy, 
which involves third-order terms of \(\pb\). The second is choosing \(\mu\). 
We need to pick an appropriate \(\mu\) that effectively minimizes 
Problem~\ref{problem; orthogonalized design}.

Our algorithm is surprisingly simple, yet it handles the non-convex optimization problem efficiently.
We define \(b_i = x_i - x_1\) for \(i = 2, \dots, K\).
First, we find the G-optimal design over \(b_2, \dots, b_K\) for standard linear regression (as defined in \cref{equation; g-optimal design}) and denote it by \((\tilde{p}_2, \dots, \tilde{p}_K)\).
Then we return our final policy \(\pb =\bigl(\frac{1}{2}, \frac{\tilde{p}_2}{2},\dots, \frac{\tilde{p}_K}{2}\bigr)\).
Our optimal design algorithm, named \algoa, is presented in Algorithm~\ref{algorithm; ED}.

\begin{algorithm}[]
\caption{\texttt{DEO}:  \textbf{D}esign of \textbf{E}xperiment for \textbf{O}rthogonalized Regression}
\label{algorithm; ED}
\begin{algorithmic}
\Require Feature set $\cX$ and $\delta>0$.
\State Calculate \(b_2 = x_2 - x_1, \dots, b_K = x_K - x_1\).
\State Find the G-optimal design \cref{equation; g-optimal design} for \(b_2, \dots, b_K\), and set the obtained policy as \(\Tilde{p}_2, \dots, \Tilde{p}_K\).
\State Set \(p_1 = \frac{1}{2} \) and 
\( p_i = \frac{\Tilde{p}_i}{2}\) for $i \geq 2$.\\
\Return \(\pb^{\des} = (p_1, \dots, p_K)\).
\end{algorithmic}
\end{algorithm}
\noindent
We denote the design of \algoa\ as \(\pb^{\des} = (p_1^{\des}, \dots, p_K^{\des})\) and redefine the covariance of \(\pb^{\des}\) as \(\bSigma_{\des}\).
We also redefine the feature mean of \(\pb^{\des}\) as \(\bar{x}_{\pb^{\des}} := \bar{x}_{\des}\).

\begin{theorem}[Performance of Algorithm~\ref{algorithm; ED}]\label{theorem; optimal variance design}
Our policy obtained by Algorithm~\ref{algorithm; ED} satisfies
\[
\|x_i - x_1\|_{\bSigma_{\des}^{-1}} \leq 2\sqrt{d},
\]
for all \(i \in [K]\). 
Also, for all \(i \in [K]\),
\begin{align*}
\|x_i - \bar{x}_{\des}\|_{\bSigma_{\des}^{-1}} \leq 4\sqrt{d}.
\end{align*} 
Additionally, the support of the policy satisfies
\(
\bigl|\{i \in [K]\colon p_i^{\des} > 0\}\bigr| \leq \frac{d(d+1)}{2}.
\)
\end{theorem}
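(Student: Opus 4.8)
The plan is to reduce the entire non-convex problem to the ordinary G-optimal design over the difference vectors $b_i = x_i - x_1$, for which the Kiefer--Wolfowitz guarantee is available. Write $\mathbf{M} := \sum_{i=2}^K \tilde p_i\, b_i b_i^\top$ for the second-moment matrix of the inner sub-design $(\tilde p_2,\dots,\tilde p_K)$, so that by the G-optimal property (\cref{equation; g-optimal design}, $v^\star=\sqrt{d}$) we have $\max_i \|b_i\|_{\mathbf{M}^{-1}}^2 \le d$ and, by Carath\'eodory, $\lvert\operatorname{supp}(\tilde\pb)\rvert \le \tfrac{d(d+1)}{2}$. The crux of the whole argument is a single positive-semidefinite comparison: the choice $p_1=\tfrac12$ forces
\[
\bSigma_{\des} \;\succeq\; \tfrac14\,\mathbf{M}.
\]
To establish this I would rewrite the covariance in pairwise form, $\bSigma_{\pb} = \tfrac12\sum_{i,j} p_i p_j (x_i - x_j)(x_i - x_j)^\top$ (equivalently via Lemma~\ref{lemma; covariance decomposition}), and then discard every pairwise term except those pairing index $1$ with some $j\ge 2$; since each discarded term is PSD this only lowers the matrix, leaving $\bSigma_{\des} \succeq p_1\sum_{j\ge2} p_j\, b_j b_j^\top = \tfrac12\cdot\tfrac12\,\mathbf{M}$, where I use $p_1=\tfrac12$ and $p_j=\tilde p_j/2$.

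Given this comparison, the first bound is immediate. By Lemma~\ref{lemma; covariance decomposition} the ranges of $\bSigma_{\des}$ and $\mathbf{M}$ both equal $\operatorname{span}\{b_2,\dots,b_K\}$, so the comparison inverts cleanly under the extended matrix-inverse norm, $\bSigma_{\des}^{-1} \preceq 4\,\mathbf{M}^{-1}$ on that subspace, whence $\|x_i-x_1\|_{\bSigma_{\des}^{-1}}^2 = \|b_i\|_{\bSigma_{\des}^{-1}}^2 \le 4\|b_i\|_{\mathbf{M}^{-1}}^2 \le 4d$. For the centered bound I would write $x_i - \bar x_{\des} = b_i - \bar b$ with $\bar b = \sum_{j} p_j b_j = \tfrac12\sum_{j\ge2}\tilde p_j b_j$, and combine the triangle inequality with convexity of the weighted norm: $\|\bar b\|_{\bSigma_{\des}^{-1}} \le \tfrac12\sum_{j\ge2}\tilde p_j\|b_j\|_{\bSigma_{\des}^{-1}} \le \sqrt d$, so that $\|x_i-\bar x_{\des}\|_{\bSigma_{\des}^{-1}} \le 2\sqrt d + \sqrt d = 3\sqrt d \le 4\sqrt d$.

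For the support, the design places mass only on index $1$ together with $\operatorname{supp}(\tilde\pb)$, so $\lvert\{i:p_i^{\des}>0\}\rvert \le 1 + \lvert\operatorname{supp}(\tilde\pb)\rvert \le 1 + \tfrac{d'(d'+1)}{2}$ with $d' := \dim\operatorname{span}\{b_i\}$; since the $b_i$ lie in the affine-direction subspace of the features one has $d'\le d-1$ outside degenerate configurations, and $1 + \tfrac{(d-1)d}{2} \le \tfrac{d(d+1)}{2}$ closes the bound. The conceptual heart, and the main obstacle, is the PSD inequality $\bSigma_{\des}\succeq\tfrac14\mathbf{M}$: this is exactly where the non-convex, third-order design objective is tamed, and it hinges on the deliberate anchor mass $p_1=\tfrac12$ that preserves enough cross-terms with $x_1$. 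The secondary care points are the bookkeeping around the extended inverse norm (verifying through Lemma~\ref{lemma; covariance decomposition} that $b_i$ and $x_i-\bar x_{\des}$ lie in $\operatorname{range}(\bSigma_{\des})$ so the PSD comparison transfers to inverses), and the dimension accounting that absorbs the extra atom at index $1$ into the $\tfrac{d(d+1)}{2}$ support budget.
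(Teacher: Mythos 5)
Your proof is correct on the two norm bounds and follows essentially the same route as the paper: the pairwise rewriting of the covariance (Lemma~\ref{lemma; covariance decomposition}), dropping all PSD cross-terms except those involving index $1$ to get $\bSigma_{\des}\succeq\tfrac14\bSigma_{\opt,1}$, and then the Kiefer--Wolfowitz value $\sqrt{d}$ plus a triangle inequality for the centered bound (your constant $3\sqrt d$ is in fact slightly tighter than the paper's $4\sqrt d$). The one slip is in the support accounting: the claim that $d'=\dim\operatorname{span}\{x_i-x_1\}\le d-1$ is false in general --- the difference vectors can span all of $\mathbb{R}^d$ in entirely non-degenerate configurations (e.g.\ $x_1=(1,0)$, $x_2=(0,1)$, $x_3=(0,0)$), so the honest count is $1+\tfrac{d(d+1)}{2}$; this is a cosmetic off-by-one that the paper's own proof also does not address, and it does not affect any downstream result.
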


\paragraph{Discussion of Theorem~\ref{theorem; optimal variance design}.}
This theorem shows that our policy obtained by Algorithm~\ref{algorithm; ED} effectively solves the main optimization problem of Problem~\ref{problem; orthogonalized design} and achieves a result up to a constant factor of the golden value, \(\sqrt{d}\).
By simply selecting \(\mu = x_1\) in Problem~\ref{problem; orthogonalized design}, it achieves a 2-approximation with respect to our benchmark quantity \(\sqrt{d}\).
Implementing this design, we can efficiently estimate \(\bigl(x_i - x_1\bigr)^\top \theta^\star\) for all \(i \in [K]\), which is sufficient for obtaining optimal regret bound.
Moreover, even if we choose \(\mu = \bar{x}_{\des}\), we still attain performance on the order of \(\cO(\sqrt{d})\).
Its proof is deferred to Appendix~\ref{section; proof design}.

\section{Analysis of Estimation Error for Orthogonalized Regression}\label{section; error analysis}

\noindent
Previously, we obtained a policy design that successfully bound \(\max_{x \in \cX} \|x-\mu\|_{\bSigma_\pb^{-1}}\lesssim \sqrt{d}\) for \(\mu = x_1\). The next goal is to develop a sharp estimation error bound for orthogonalized regression under the fixed policy. All of the previous studies \citet{krishnamurthy2018semiparametric,kim2019contextual,choi2023semi} obtained error bounds 
using a Cauchy--Schwarz-based analysis. 
However from \cref{equation; cauchy}, we see that this approach leads to a estimation error of order \(\mathcal{O}({d\over \sqrt{t}})\) which is suboptimal in \(d\). 
We discuss this further in Appendix~\ref{section; suboptimality of Cauchy}.
Even in linear bandits, sharp results in dimension $d$ are never obtained via a Cauchy–Schwarz-based analysis \citep{LS19bandit-book}.
In this Section, we provide a novel non-asymptotic estimation error analysis for orthogonalized regression, 
which yields an error bound of order \(\mathcal{O}({\sqrt{d}\over\sqrt{t}})\). 
This rate is optimal for linear reward models. 
Since the linear model is a special case of our setup, the rate cannot be further improved.

\subsection{Novel and Sharp Estimation Error Analysis}
\noindent
We present our novel error analysis, which does not use a Cauchy--Schwarz-based approach.
Because of orthogonalized regression, we devise entirely new techniques for bounding the estimation error.
\begin{theorem}[Estimation error upper bound]\label{theorem; dimension optimal error bound}
Suppose we obtain \(t\) samples \(\{x_{a_s}, r_s\}_{s=1}^t\) from pure exploration with a fixed policy \(\pb\).
For some \(z \in \RR^d\), set \(\| z \|^2_{\bSigma_\pb^{-1}} = L\) and \( \max_{i \in [K]} \| x_i - \bar{x}_\pb\|^2_{\bSigma_\pb^{-1}} = M\).
Then the estimator \(\hat{\theta}_t\) obtained by orthogonalized regression with ridge regularizer \(\beta_t =\log(t/\delta)\) satisfies
\[
\lvert z^\top (\hat{\theta}_t-\theta^\star) \rvert \leq C_1 \biggl(\frac{\sqrt{ L \log\bigl(\frac{t}{\delta}\bigr)}}{\sqrt{t}} + \frac{\sqrt{L}M\log\bigl(\frac{d}{\delta}\bigr)}{t} \biggr)
\]
with probability at least \(1-\delta\) for a universal constant \(C_1>0\).
Also, the leading term \(\frac{\sqrt{L}}{\sqrt{t}}\) matches the lower bound up to some constant and cannot be improved.
\end{theorem}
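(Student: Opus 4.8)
The plan is to work on the high-probability event $\mathcal{E}$ of Lemma~\ref{lemma; second moment concentration}, on which $\frac{1}{c}(\Vb_t+\beta_t\Ib_d)\preceq\widehat{\Vb}_t+\beta_t\Ib_d\preceq c(\Vb_t+\beta_t\Ib_d)$ with $c\in[1,2]$, and to start from the error decomposition already displayed in Section~\ref{subsection; orthogonalized regression}. Under the fixed policy we have $\EE[x_{a_s}\mid\Hcal_{s-1}]=\bar{x}_\pb$, so writing $c_s:=\bar{x}_\pb^\top\theta^\star+\nu_s$ (which is $\Hcal_{s-1}$-measurable and bounded by $2$ under Assumption~\ref{assumption; boundedness}), the error splits into a martingale part $(\widehat{\Vb}_t+\beta_t\Ib_d)^{-1}(S_1+S_2)$ with $S_1=\sum_s\tilde{x}_{a_s}c_s$ and $S_2=\sum_s\tilde{x}_{a_s}\eta_s$, plus a ridge-bias part $-\beta_t(\widehat{\Vb}_t+\beta_t\Ib_d)^{-1}\theta^\star$. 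The ridge bias is immediately absorbed into the leading term: on $\mathcal{E}$, Cauchy--Schwarz in the $(\widehat{\Vb}_t+\beta_t\Ib_d)^{-1}$ norm with $\|z\|_{(\widehat{\Vb}_t+\beta_t\Ib_d)^{-1}}\lesssim\sqrt{L/t}$ and $\|\theta^\star\|_{(\widehat{\Vb}_t+\beta_t\Ib_d)^{-1}}\le 1/\sqrt{\beta_t}$ gives a contribution of order $\sqrt{L\log(t/\delta)/t}$.

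The crux---and the step that removes the spurious $\sqrt{d}$ of the Cauchy--Schwarz analysis in \eqref{equation; cauchy}---is to \emph{not} factor $z^\top(\cdot)^{-1}$ off the martingale. Instead I would replace the \emph{random} inverse by the \emph{deterministic} one and analyze $z^\top(\Vb_t+\beta_t\Ib_d)^{-1}(S_1+S_2)$ directly as a scalar martingale. Because $\Vb_t=t\bSigma_\pb$ is non-random, the vector $u:=(\Vb_t+\beta_t\Ib_d)^{-1}z$ is a fixed (predictable) direction, so $z^\top(\Vb_t+\beta_t\Ib_d)^{-1}(S_1+S_2)=\sum_{s=1}^t u^\top\tilde{x}_{a_s}(c_s+\eta_s)$ is a sum of martingale differences---here $\EE[\tilde{x}_{a_s}\mid\Hcal_{s-1}]=0$ is exactly what makes orthogonalized regression robust to the adversarial $\nu_s$. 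I would then apply a Freedman/Bernstein-type inequality. Each conditional second moment is $\EE[(u^\top\tilde{x}_{a_s})^2(c_s+\eta_s)^2\mid\Hcal_{s-1}]\le 5\,u^\top\bSigma_\pb u$, and the key computation $u^\top\bSigma_\pb u\le\frac{1}{t}z^\top(\Vb_t+\beta_t\Ib_d)^{-1}z\le L/t^2$ (using $\bSigma_\pb\preceq\frac{1}{t}(\Vb_t+\beta_t\Ib_d)$ and $(\Vb_t+\beta_t\Ib_d)^{-1}\preceq\frac{1}{t}\bSigma_\pb^{-1}$) bounds the total conditional variance by $5L/t$. This yields a leading deviation $\sqrt{L\log(1/\delta)/t}$ with \emph{no} dimension factor. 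The increments satisfy $|u^\top\tilde{x}_{a_s}|\le\|u\|_{\bSigma_\pb}\|\tilde{x}_{a_s}\|_{\bSigma_\pb^{-1}}\le\sqrt{LM}/t$ (for the bounded $c_s$ part), with a matching sub-Gaussian tail for the $\eta_s$ part, producing a lower-order $\sqrt{L}\cdot\mathrm{polylog}/t$ correction.

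It remains to control the replacement error $z^\top[(\widehat{\Vb}_t+\beta_t\Ib_d)^{-1}-(\Vb_t+\beta_t\Ib_d)^{-1}](S_1+S_2)=-u^\top(\widehat{\Vb}_t-\Vb_t)(\widehat{\Vb}_t+\beta_t\Ib_d)^{-1}(S_1+S_2)$, and I expect this to be the main obstacle, since both $\widehat{\Vb}_t-\Vb_t$ and $(\widehat{\Vb}_t+\beta_t\Ib_d)^{-1}(S_1+S_2)$ are random and correlated. I would bound it on $\mathcal{E}$ by combining the matrix concentration of $\widehat{\Vb}_t-\Vb_t$ (the source of the $\log(d/\delta)$ factor, since the spectral estimate is over $d$ dimensions and involves $M=\max_i\|x_i-\bar{x}_\pb\|^2_{\bSigma_\pb^{-1}}$) with a self-normalized bound on $\|S_1+S_2\|_{(\widehat{\Vb}_t+\beta_t\Ib_d)^{-1}}$ and the estimate $\|u\|\lesssim\sqrt{L}$. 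This produces the stated $\sqrt{L}\,M\log(d/\delta)/t$ term, which is lower order owing to its $1/t$ decay. Summing the three contributions and taking a union bound over the $\Ocal(1)$ events gives the claim with probability $1-\delta$. Finally, for optimality of the leading $\sqrt{L/t}$ rate I would exhibit two parameters $\theta^\star$ and $\theta^\star+\epsilon v$, with $v$ aligned to the worst direction of $\bSigma_\pb^{-1}$, whose induced reward laws are statistically indistinguishable after $t$ samples while $|z^\top(\epsilon v)|\asymp\sqrt{L/t}$---a standard two-point (Le Cam) argument reducing to Gaussian mean estimation.
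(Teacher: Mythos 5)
Your proposal is correct and follows essentially the same route as the paper's proof: the same three-way decomposition into shift-martingale, noise, and ridge-bias terms; the same key decorrelation idea of swapping the random Gram inverse for the deterministic $(\Vb_t+\beta_t\Ib_d)^{-1}$ so that the leading contribution becomes a scalar martingale in a fixed direction, handled by a Bernstein/Freedman inequality with total conditional variance $\lesssim L/t$ and increments $\lesssim\sqrt{LM}/t$; and the same treatment of the replacement error as a higher-order $\tilde{\Ocal}(1/t)$ term controlled by covariance concentration (the paper performs the identical resolvent-identity split after whitening by $(\bSigma_\pb+\lambda_t\Ib_d)^{-1/2}$, and handles the noise term separately by conditioning on the design, but these are cosmetic differences). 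The one place where your sketch would not land exactly on the stated bound is the assembly of the second-order term: the paper bounds $\bigl\|\sum_s e_s q_s\bigr\|_2\lesssim\sqrt{tM\log(1/\delta)}$ with a dimension-free vector-martingale inequality to obtain $\sqrt{L}\,M\log(d/\delta)/t$, whereas your proposed self-normalized bound on $\|S_1+S_2\|_{(\widehat{\Vb}_t+\beta_t\Ib_d)^{-1}}$ would inject a $\sqrt{d\log(t/\delta)}$ factor and give $\sqrt{LMd}$ in place of $\sqrt{L}\,M$ --- immaterial for the paper's design where $M\asymp d$, but worth replacing to match the theorem as stated.
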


\paragraph{Discussion of Theorem~\ref{theorem; dimension optimal error bound}.}
This theorem provides a novel and sharp non-asymptotic error bound for orthogonalized regression with an improved rate.
When we use our experimental design \(\pb\) from \algoa\ and choose \(z = x_i -x_1\), we have \(L,M \lesssim d\) by Theorem~\ref{theorem; optimal variance design}, leading to an estimation error bound of \(\tilde{\Ocal}\bigl(\frac{\sqrt{d \log K}}{\sqrt{t}}\bigr)\), which is on the \(\sqrt{d}\) scale. 
It matches the minimax rate of linear regression with dimension \(d\), and since our model is broader and more challenging than the standard linear model, it also matches the minimax rate in this context. 
To the best of our knowledge, this is the first dimension-optimal result (\(\sqrt{d}\)-rate upper bound) for the non-asymptotic analysis of orthogonalized regression.
A Cauchy--Schwarz-based analysis, which is used in all previous literature on orthogonalized regression, cannot meet this rate, making this a significant improvement.
Its proof is deferred to Appendix~\ref{section; proof error analysis}.

\subsection{Warm-up: Pure Exploration with \algoa}\label{subsection; pure exploration}
\noindent
We now state our pure exploration strategy using the policy design \algoa.
The procedure is simple: 
\begin{enumerate}
\item Find a policy \(\pb^{\des}\) by running \algoa.
\item Sample actions with \(\pb^{\des}\) for \(t\) rounds and then stop. 
\item The output is a greedy policy $a_t= \arg \max_i x_i^\top \hat{\theta}_t $, where $\hat{\theta}_t$ is our estimator (\cref{equation; estimator}) with regularizer $\beta_t> 0$.
\end{enumerate}
This is a warm-up version of pure exploration; in the next section, we propose an algorithm that achieves low regret, PAC guarantees, and also performs BAI.

For any \(\varepsilon, \delta> 0\), an \((\varepsilon, \delta)\)-PAC algorithm aims to ensure that the value function is close to the optimal value.
It aims to find a policy \(\pi\) satisfying \(V^\pi \geq V^\star - \varepsilon\) with probability at least \(1-\delta\), where \(V^\star\) is the value of the optimal policy and \(V^\pi\) is the value of policy \(\pi\). 
We define the sample complexity required to achieve this as \(\bm{\tau}(\varepsilon, \delta)\), which we refer to as the \((\varepsilon, \delta)\)-PAC bound.
We next present the PAC bound of our pure exploration strategy.

\begin{corollary}[PAC bound]\label{corollary; dimension optimal sample complexity}
Suppose we are conducting pure exploration with policy \(\pb^{\des}\) and $\beta_t = \log(t/\delta)$ as described above.
Its \((\varepsilon, \delta)\)-PAC bound satisfies
\[
\bm\tau(\varepsilon, \delta) \geq  C_2 \Bigl(\frac{d\log\bigl(\frac{dK}{\varepsilon \delta}\bigr)}{\varepsilon^2} +\frac{d^{\frac{3}{2}}\log\bigl(\frac{dK}{\delta}\bigr)}{\varepsilon} \Bigr)
= \tilde{\Ocal}\Bigl(\frac{d}{\varepsilon^2} \log K \Bigr)
\]
for some absolute constant \(C_2> 0\).
\end{corollary}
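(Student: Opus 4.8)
The plan is to reduce the PAC guarantee to the per-arm estimation error controlled by Theorem~\ref{theorem; dimension optimal error bound}, then solve for the number of rounds $t$ that drives this error below the target accuracy. First I would note that the greedy output $\hat a = \arg\max_i x_i^\top\hat\theta_t$ is invariant under a common additive shift, so $\hat a = \arg\max_i (x_i - x_1)^\top\hat\theta_t$; this is precisely why it suffices to estimate the \emph{centered} values $(x_i - x_1)^\top\theta^\star$ rather than $x_i^\top\theta^\star$ itself. I would then apply Theorem~\ref{theorem; dimension optimal error bound} with $z = x_i - x_1$, where Theorem~\ref{theorem; optimal variance design} supplies the crucial bounds $L = \|x_i - x_1\|_{\bSigma_{\des}^{-1}}^2 \le 4d$ and $M = \max_i \|x_i - \bar x_{\des}\|_{\bSigma_{\des}^{-1}}^2 \le 16d$. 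Taking a union bound over the $K$ arms (replacing $\delta$ by $\delta/K$), I would obtain that with probability at least $1-\delta$, for every $i\in[K]$,
\[
\bigl|(x_i - x_1)^\top(\hat\theta_t - \theta^\star)\bigr| \;\le\; E_t \;:=\; C_1\Bigl(\tfrac{2\sqrt{d\log(tK/\delta)}}{\sqrt t} + \tfrac{32\,d^{3/2}\log(dK/\delta)}{t}\Bigr).
\]

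Next I would run the standard best-arm sandwiching argument on the centered scores. On the good event,
\[
(x_{a^\star}-x_1)^\top\theta^\star \le (x_{a^\star}-x_1)^\top\hat\theta_t + E_t \le (x_{\hat a}-x_1)^\top\hat\theta_t + E_t \le (x_{\hat a}-x_1)^\top\theta^\star + 2E_t,
\]
where the middle inequality is the optimality of $\hat a$ for the estimated centered scores. The $x_1$ terms cancel, yielding $V^\star - V^{\hat a} = x_{a^\star}^\top\theta^\star - x_{\hat a}^\top\theta^\star \le 2E_t$. Hence it suffices to choose $t$ so that $2E_t \le \varepsilon$, which I would enforce by making each of the two terms in $E_t$ at most $\varepsilon/4$.

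Finally I would solve the two resulting inequalities for $t$. The lower-order term gives the linear-in-$1/\varepsilon$ requirement $t \gtrsim d^{3/2}\log(dK/\delta)/\varepsilon$ directly. The leading term gives the \emph{implicit} condition $t \gtrsim d\log(tK/\delta)/\varepsilon^2$; I would resolve the $\log t$ self-dependence by the usual bootstrapping argument, substituting the candidate threshold back in to replace $\log(tK/\delta)$ by $\log(dK/(\varepsilon\delta))$ up to constants, giving $t \gtrsim d\log(dK/(\varepsilon\delta))/\varepsilon^2$. Summing the two conditions reproduces the stated sample complexity, and the collapsed form $\tilde{\Ocal}(d\log K/\varepsilon^2)$ follows since the first term dominates in the small-$\varepsilon$ regime ($\varepsilon \lesssim 1/\sqrt d$). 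The one step requiring care is this $\log$-resolution: I must verify that the doubly-logarithmic slack introduced when inverting $t \mapsto t/\log t$ is absorbed into the universal constant $C_2$ and does not interact adversely with the union-bound factor $K$. Everything else is a direct assembly of the two cited theorems, with the $d^{3/2}/\varepsilon$ term — absent in the linear-bandit analysis of \citet{soare2014best} — arising solely from the second ($M$-dependent) term of the orthogonalized-regression error bound.
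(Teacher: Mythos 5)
Your proposal is correct and follows essentially the same route as the paper: reduce to the centered estimation error $\max_i|(x_i-x_1)^\top(\hat\theta_t-\theta^\star)|\le\varepsilon/2$ via the shift-invariance of the greedy argmax and the standard sandwiching argument, invoke Theorem~\ref{theorem; dimension optimal error bound} with $L,M\lesssim d$ from Theorem~\ref{theorem; optimal variance design} together with a union bound over the $K$ arms, and then invert the two error terms for $t$ with the usual bootstrapping of the $\log t$ factor. The only differences are cosmetic (explicit constants $4d$, $16d$ and the remark on where the $d^{3/2}/\varepsilon$ term originates), so no gap to report.
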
 

\paragraph{Discussion of Corollary~\ref{corollary; dimension optimal sample complexity}.}
This is the first result on pure exploration and PAC bound for the class of bandit problems with a semiparametric reward model. 
The proof is deferred to Appendix~\ref{section; proof sample complexity}.
By using an arm elimination technique, with a simple modification, we can obtain a BAI strategy with sample complexity \(\tilde{\Ocal}\bigl(d \log K / \Delta_\star^2\bigr)\).
Later, in Section~\ref{section; low regret algorithm}, we show that our low-regret algorithm (Algorithm~\ref{algorithm; PE}) also achieves BAI while enjoying PAC guarantees.

\section{Main Algorithm with Low Regret, PAC and BAI Properties}\label{section; low regret algorithm}
\noindent
We now present our main algorithm. 
This algorithm exhibits an instance-independent regret of order \(\tilde{\Ocal}(\sqrt{dT \log K})\), which is the first optimal result in bandit problems with a semiparametric reward model. We also derive a problem-dependent regret bound of order \(\Tilde \cO(d \log K /\Delta_{\star})\). 
Furthermore, our algorithm simultaneously achieves BAI with sample complexity \(\Tilde \cO(d \log K /\Delta_{\star}^2)\), and enjoys an \((\epsilon,\delta)\)-PAC guarantee with sample complexity \(\tilde \cO (d \log K/\varepsilon^2)\).

\subsection{Proposed Algorithm}
\noindent
Our algorithm, \algob, is shown in Algorithm~\ref{algorithm; PE}. 
\algob\ adopts the phase-elimination scheme and incorporates our experimental design \algoa.
Given a G-optimal design,
\citet{LS19bandit-book} studied a phase elimination scheme that achieves $\tilde{\mathcal{O}}(\sqrt{dT})$ regret for linear reward model.
By combining this scheme with the results of Theorems~\ref{theorem; optimal variance design} and \ref{theorem; dimension optimal error bound}, we propose the following low-regret algorithm.
At the beginning of the \(\ell\)-th phase, we denote the set of arms that were not eliminated up to the previous phase as \(\cA_\ell\). We compute the policy \(\pb^{\des}_\ell\) over \(\cA_\ell\) via Algorithm~\ref{algorithm; ED}, wherein the role of $x_1$ is replaced by $\cA_\ell(1)$, the arm in \(\cA_\ell\) with the smallest index.
Then sample actions according to \(\pb^{\des}_\ell\) for
\begin{align}\label{equation; n_ell}
n_\ell := 4 C_2\left\lceil 
\frac{d}{\varepsilon_\ell^2}\log\bigl(\frac{dK\ell(\ell+1)}{\delta\varepsilon_\ell}\bigr) +\frac{d^{\frac{3}{2}}}{\varepsilon_\ell}\log\bigl(\frac{dK \ell (\ell+1)}{\delta}\bigr) \right\rceil
\end{align}
times. 
At the end of the phase, we calculate \(\hat{\theta}_{(\ell)}\) using orthogonalized regression with ridge regularizer \(\beta_{(\ell)} = \log\bigl(\frac{n_\ell\ell(\ell+1)}{\delta}\bigr)\) on the samples obtained during phase \(\ell\). 
We then eliminate arms from \(\cA_\ell\) whose estimated rewards are less than the maximum by more than \(\varepsilon_\ell\).
The aforementioned procedure is repeated for \(\ell=1,2,\cdots\) until only one arm survives.
If there is only one arm left, declare it as the best arm and select that arm until the end.

\begin{algorithm}[]
\caption{\texttt{SBE}: \textbf{S}emiparametric \textbf{B}andits with \textbf{E}limination}
\label{algorithm; PE}
\textbf{Input: } Features \(\cX\), \(\delta>0\) and \(\Bb_0 = 0 \Ib_d, \bb_0 =\bm{0}\). \\
\textbf{Initialize} \(\cA_1 = [K]\), \(t=1\).\\
\For{\(\ell = 1, 2, \dots\)}{
\If {\(|\cA_{\ell}| = 1\)}{declare the arm in \(\cA_{\ell}\) as the best arm and select that arm until the end.}
\Else{
Set \(\varepsilon_\ell = \frac{1}{2^\ell}\) and calculate a policy \(\pb^{\des}_\ell\) using \algoa\ for the remaining arms \(\cA_\ell\). 
The role of $x_1$ in \algoa\ is replaced by $\cA_\ell(1)$.

Set \(n_\ell\) from \cref{equation; n_ell}.

\For{\(j=1,2, \dots, n_\ell\)}{
Pull an arm according to \(\pb^{\des}_\ell\). Let the sampled arm be \(a_t\), and receive reward \(r_t\).

Set \(\tilde{x}_{a_t} = x_{a_t} - \EE[x_{a_t} \mid \cH_{t-1}]\). 
Update \(\Bb_j= \Bb_{j-1} + \tilde{x}_{a_t}\tilde{x}_{a_t}^\top\), \(\bb_j = \bb_{j-1} + \tilde{x}_{a_t}r_t\).

Update \(t \leftarrow t+1\). If \(t+1 \geq T\), exit.
}
Calculate \(\hat{\theta}_{(\ell)} = (\Bb_{n_\ell} + \beta_{(\ell)} \Ib_d)^{-1} \bb_{n_\ell}\) for \(\beta_{(\ell)} = \log\bigl(n_\ell \ell(\ell+1) /\delta\bigr)\).\\
Eliminate arms
\(
\bigl\{ x \in \cA_{\ell} \big|  \max_{x' \in \cA_{\ell}} (x')^\top \hat{\theta}_{(\ell)} - x^\top \hat{\theta}_{(\ell)}  > \varepsilon_\ell \bigr\}
\)
and update the remaining arm set to \(\cA_{\ell+1}\).

Reset \(\Bb_0 = 0 \cdot \Ib_d\), \(\bb_0 = 0_d\).
}
}
\end{algorithm}

\subsection{Regret Analysis}
\noindent
First, we present the regret bound of \algob\ without any dependency on the suboptimality gap. 
Our result is a \(\tilde{\Ocal}(\sqrt{dT\log K})\) cumulative regret, which is the first result for bandits with a semiparametric reward model.

\begin{theorem}[Regret bound of \algob]\label{theorem; regret bound without gap}
The \algob\ algorithm has the following cumulative regret bound with probability at least \(1-\delta\):
\[
\Regret(T) \lesssim \sqrt{dT \log\bigl(K/\delta\bigr)} + \sqrt{dT}\log T + d^{3/2}\log\left(\frac{dKT}{\delta}\right)\log(\frac{T}{d})
= \tilde{\Ocal}\bigl(\sqrt{dT\log K}\bigr).
\]
\end{theorem}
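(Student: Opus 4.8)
The plan is to run the standard phase-elimination template (as in \citet{LS19bandit-book}) but to drive it with the orthogonalized-regression error bound of Theorem~\ref{theorem; dimension optimal error bound} in place of the linear one. First I would define a global good event $\cE$ on which, for every phase $\ell$ and every surviving arm $x \in \cA_\ell$, the shifted estimate is accurate: $|(x - x_{\cA_\ell(1)})^\top(\hat{\theta}_{(\ell)} - \theta^\star)| \le \varepsilon_\ell/4$. To obtain this I would apply Theorem~\ref{theorem; dimension optimal error bound} with $z = x - x_{\cA_\ell(1)}$ to the $n_\ell$ samples of phase $\ell$: since $\pb^{\des}_\ell$ is the \algoa\ design on $\cA_\ell$ (with $\cA_\ell(1)$ in the role of $x_1$), Theorem~\ref{theorem; optimal variance design} gives $L = \|z\|^2_{\bSigma_{\des}^{-1}} \le 4d$ and $M = \max_i \|x_i - \bar{x}_{\des}\|^2_{\bSigma_{\des}^{-1}} \le 16d$. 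Substituting these bounds into the error estimate and plugging in the phase length $n_\ell$ from \cref{equation; n_ell}---which is exactly the PAC sample complexity of Corollary~\ref{corollary; dimension optimal sample complexity} with target accuracy $\varepsilon_\ell$ and confidence budget $\delta/(\ell(\ell+1))$---forces the right-hand side below $\varepsilon_\ell/4$. A union bound over the arms (contributing the $\log K$ inside $n_\ell$) and over the phases, using $\sum_{\ell\ge 1}\tfrac{1}{\ell(\ell+1)} = 1$, makes $\cE$ hold with probability at least $1-\delta$; the regularizer $\beta_{(\ell)} = \log(n_\ell\ell(\ell+1)/\delta)$ is precisely what Lemma~\ref{lemma; second moment concentration} needs for the Gram-matrix concentration underlying Theorem~\ref{theorem; dimension optimal error bound}.

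On $\cE$ the analysis becomes deterministic. Because the elimination rule compares differences $(x - x')^\top\hat{\theta}_{(\ell)}$, the arbitrary additive shift cancels and the rule is well posed in terms of the shifted estimates. I would first show the optimal arm is never eliminated: taking the maximizer $x^+$ of $(\cdot)^\top\hat{\theta}_{(\ell)}$ over $\cA_\ell$, the accuracy on both $x^+$ and $a^\star$ gives $(x^+ - x_{a^\star})^\top\hat{\theta}_{(\ell)} \le (x^+ - x_{a^\star})^\top\theta^\star + \varepsilon_\ell/2 \le \varepsilon_\ell/2 < \varepsilon_\ell$, using $(x^+ - x_{a^\star})^\top\theta^\star \le 0$; hence $a^\star \in \cA_\ell$ for all $\ell$. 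Conversely, any arm $x$ surviving into $\cA_{\ell+1}$ has small true gap: combining $\max_{x'\in\cA_\ell}(x')^\top\hat{\theta}_{(\ell)} - x^\top\hat{\theta}_{(\ell)} \le \varepsilon_\ell$ with $x_{a^\star}^\top\hat{\theta}_{(\ell)} \le \max_{x'}(x')^\top\hat{\theta}_{(\ell)}$ and the accuracy of both estimates yields $x_{a^\star}^\top\theta^\star - x^\top\theta^\star \le 2\varepsilon_\ell$. Therefore every arm present at the start of phase $\ell$ has suboptimality gap at most $2\varepsilon_{\ell-1} = 4\varepsilon_\ell$.

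With this gap control the per-phase regret is immediate: each of the $n_\ell$ pulls in phase $\ell$ uses an arm of gap at most $4\varepsilon_\ell$, so phase $\ell$ contributes at most $4\varepsilon_\ell n_\ell$ to $\Regret(T)$, and it remains to sum $\sum_{\ell=1}^{L} 4\varepsilon_\ell n_\ell$ over the $L$ executed phases. Writing $\varepsilon_\ell = 2^{-\ell}$ and substituting \cref{equation; n_ell}, the product $\varepsilon_\ell n_\ell$ splits into a leading piece $\asymp \tfrac{d}{\varepsilon_\ell}\log(dK\ell^2/(\delta\varepsilon_\ell))$ and a lower-order piece $\asymp d^{3/2}\log(dK\ell^2/\delta)$. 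The leading pieces form a geometric sum dominated by its last term, $\sum_\ell \tfrac{d}{\varepsilon_\ell} \lesssim \tfrac{d}{\varepsilon_L}$; since $n_L \le T$ forces $\tfrac{d}{\varepsilon_L^2}\log(\cdots) \lesssim T$, i.e.\ $\tfrac{1}{\varepsilon_L} \lesssim \sqrt{T/(d\log)}$, the base logarithm $\log(dK/\delta)$ contributes $\sqrt{dT\log(K/\delta)}$, while the $\log(1/\varepsilon_\ell)=\Theta(\ell)$ part of the logarithm contributes the extra $\sqrt{dT}\log T$ factor. The lower-order pieces sum to $d^{3/2}\log(dKT/\delta)$ times the phase count $L = \Ocal(\log(T/d))$, giving the last term. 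Adding the three contributions yields the claimed bound, and $\tilde{\Ocal}(\sqrt{dT\log K})$ follows since the first term dominates.

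The main obstacle I expect is Step~1: certifying that the \algoa\ design restricted to the evolving arm set $\cA_\ell$ still delivers the $\sqrt{d}$-scale guarantees $L,M \lesssim d$ of Theorem~\ref{theorem; optimal variance design} when $\cA_\ell(1)$ replaces $x_1$, and---simultaneously---that the orthogonalized estimator formed from the centered features $\tilde{x}_{a_t}$ satisfies Theorem~\ref{theorem; dimension optimal error bound} with $n_\ell$ calibrated so the bound lands below $\varepsilon_\ell/4$ uniformly over surviving arms. This is delicate because both the centering (which depends on $\pb^{\des}_\ell$) and the regularizer $\beta_{(\ell)}$ must be re-instantiated each phase, and the union-bound budget has to be apportioned through the $\ell(\ell+1)$ factors so that the per-phase failure probabilities telescope to $\delta$. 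By contrast, the summation bookkeeping of the last step is routine once the consequences of $\cE$ are in hand.
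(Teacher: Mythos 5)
Your proposal is correct and follows essentially the same route as the paper: the same good event (the paper uses accuracy $\varepsilon_\ell/2$ via Corollary~\ref{corollary; dimension optimal sample complexity}, which itself instantiates Theorem~\ref{theorem; dimension optimal error bound} with the $L,M\lesssim d$ bounds from Theorem~\ref{theorem; optimal variance design}), the same union bound over arms and phases with the $\ell(\ell+1)$ budget, the same two elimination lemmas ($a^\star$ survives; survivors have gap at most $2\varepsilon_{\ell-1}=4\varepsilon_\ell$), and the same summation $\sum_\ell n_\ell\varepsilon_\ell$ controlled by $2^{L_T}\lesssim\sqrt{T/(d\log(dK/\delta))}$. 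No gaps.
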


\paragraph{Discussion of Theorem~\ref{theorem; regret bound without gap}.}
The bound matches known optimal results for linear bandits (up to logarithmic factors), even though our problem is more challenging. 
Compared to known results, this is the first \(\sqrt{d}\)-rate regret bound for bandits with a semiparametric reward model.
With small modifications, one obtains
\(\min\bigl(\tilde{\Ocal}(\sqrt{dT \log K}), \tilde{\Ocal}(d\sqrt{T})\bigr)\)
as shown in Appendix~\labelcref{section; K-independent results,section; proof K independent results}.
Hence, with a reasonably finite number of arms, our result provides the sharpest regret bound among bandits with a semiparametric reward model.
Its proof is deferred to Appendix~\ref{section; proof main algorithm}.

Next, we present a gap-dependent regret bound that achieves a logarithmic scale in the time horizon \(T\).

\begin{theorem}[Gap-dependent regret bound of \algob]\label{theorem; regret bound with gap}
The \algob\ algorithm has the following cumulative regret bound with probability at least \(1-\delta\):
\[
\Regret(T) \lesssim \Bigl(\frac{d}{\Delta_\star} + d^{3/2}\Bigr)\log \Bigl(\frac{dK}{\delta \Delta_\star}\Bigr)\log(\frac{1}{\Delta_\star})
= \tilde{\Ocal}\Bigl(\frac{d}{\Delta_\star} \log K\Bigr).
\]
\end{theorem}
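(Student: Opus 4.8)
The plan is to run the classical phase-elimination regret decomposition, but driven by the sharp design and estimation guarantees of Theorems~\ref{theorem; optimal variance design} and~\ref{theorem; dimension optimal error bound}, so that each phase removes every arm whose gap exceeds a geometrically shrinking threshold while never discarding $a^\star$. First I would fix the \emph{good event}. Within phase $\ell$ the samples are drawn from the fixed design $\pb^{\des}_\ell$ computed on the surviving set $\cA_\ell$, and the algorithm resets the running statistics $\Bb,\bb$ at the start of each phase; hence, conditioned on $\cA_\ell$ (which is determined entirely by the data of earlier phases), Theorem~\ref{theorem; dimension optimal error bound} applies verbatim with the fresh phase-$\ell$ data. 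Taking $z = x_i - \cA_\ell(1)$ and invoking Theorem~\ref{theorem; optimal variance design} gives $L = \|x_i-\cA_\ell(1)\|^2_{\bSigma_{\des}^{-1}}\le 4d$ and $M = \max_i\|x_i-\bar{x}_{\des}\|^2_{\bSigma_{\des}^{-1}}\le 16d$. I would apply the bound at confidence level $\delta/(\ell(\ell+1))$ and union bound over $\ell$, so the total failure probability telescopes to at most $\delta$; the matrix-concentration event of Lemma~\ref{lemma; second moment concentration} holds simultaneously because $n_\ell \gtrsim d^{3/2}/\varepsilon_\ell$ comfortably exceeds the $d\log(\cdot)$ threshold.

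On this event, the core calculation is to verify that the sample budget $n_\ell$ in~\cref{equation; n_ell} makes the estimation error a fixed fraction of $\varepsilon_\ell$. Substituting $L,M\lesssim d$ and $t=n_\ell$ into Theorem~\ref{theorem; dimension optimal error bound}, the leading term is $\lesssim \sqrt{d\log(n_\ell/\delta)}/\sqrt{n_\ell}$ and the second is $\lesssim d^{3/2}\log(d/\delta)/n_\ell$; the two summands of $n_\ell$ are tuned so that each is at most $\varepsilon_\ell/8$, whence $|(x_i-\cA_\ell(1))^\top(\hat{\theta}_{(\ell)}-\theta^\star)|\le \varepsilon_\ell/8$ and, by the triangle inequality, $|(x_i-x_j)^\top(\hat{\theta}_{(\ell)}-\theta^\star)|\le \varepsilon_\ell/4$ for every pair $i,j\in\cA_\ell$. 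From this I derive the two phase-elimination invariants: (i) $a^\star$ is never eliminated, since for the estimated maximizer $x^\dagger$ we have $(x^\dagger-x_{a^\star})^\top\hat{\theta}_{(\ell)} = (x^\dagger-x_{a^\star})^\top\theta^\star + (x^\dagger-x_{a^\star})^\top(\hat{\theta}_{(\ell)}-\theta^\star)\le 0 + \varepsilon_\ell/4 < \varepsilon_\ell$; and (ii) any arm surviving into $\cA_{\ell+1}$ has gap $\Delta_i = (x_{a^\star}-x_i)^\top\theta^\star \le 2\varepsilon_\ell$, because $a^\star\in\cA_\ell$ forces $x_{a^\star}^\top\hat{\theta}_{(\ell)}-x_i^\top\hat{\theta}_{(\ell)}\le\varepsilon_\ell$, which I convert to true rewards using the pairwise accuracy.

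Invariant (ii) caps the number of phases: once $\varepsilon_\ell < \Delta_\star/2$, every suboptimal arm has been removed and only $a^\star$ remains, so the algorithm enters its terminal single-arm branch with zero further regret. Since $\varepsilon_\ell=2^{-\ell}$, this occurs after $L_{\max}=O(\log(1/\Delta_\star))$ phases, and crucially the total regret is then bounded independently of $T$ (a shorter horizon only truncates the final phase, which can only lower the regret). To finish, I bound the phase-$\ell$ regret: every arm played in phase $\ell$ lies in $\cA_\ell$ and hence, by invariant (ii) applied to phase $\ell-1$ (with the first phase handled by the boundedness Assumption~\ref{assumption; boundedness}), carries instantaneous regret $\lesssim \varepsilon_\ell$, giving a per-phase contribution $\lesssim n_\ell\varepsilon_\ell$. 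Using $n_\ell\varepsilon_\ell \asymp \tfrac{d}{\varepsilon_\ell}\log(\tfrac{dK\ell^2}{\delta\varepsilon_\ell}) + d^{3/2}\log(\tfrac{dK\ell^2}{\delta})$ and summing, the geometric series $\sum_{\ell\le L_{\max}} \varepsilon_\ell^{-1}\asymp 2^{L_{\max}}\asymp 1/\Delta_\star$ produces the dominant $\tfrac{d}{\Delta_\star}\log(\tfrac{dK}{\delta\Delta_\star})$ term, while the $d^{3/2}$ summand accumulates an extra $L_{\max}\asymp\log(1/\Delta_\star)$ factor, together matching the claimed bound.

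The main obstacle I anticipate is the logarithmic bookkeeping in the second paragraph: the bound of Theorem~\ref{theorem; dimension optimal error bound} carries a $\log(n_\ell/\delta)$ inside the square root, yet $n_\ell$ is itself defined through logarithms of $d,K,\ell,\delta,\varepsilon_\ell$, so I must check that $\log(n_\ell/\delta)$ is dominated by the logarithm appearing in the definition of $n_\ell$; otherwise the clean $\varepsilon_\ell/8$ cancellation fails. This is precisely where the per-phase confidence $\delta/(\ell(\ell+1))$ and the explicit $\ell(\ell+1)$ factors inside~\cref{equation; n_ell} must be shown to align, and it is also the step that pins down the absolute constant $C_2$ from Corollary~\ref{corollary; dimension optimal sample complexity} (and the factor $4C_2$ in $n_\ell$). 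A secondary but conceptually important point is justifying that Theorem~\ref{theorem; dimension optimal error bound} may legitimately be invoked with the data-dependent set $\cA_\ell$ and reference arm $\cA_\ell(1)$; this is valid exactly because $\cA_\ell$ is determined by earlier phases while the phase-$\ell$ samples are fresh, so the fixed-policy hypothesis of the theorem holds conditionally.
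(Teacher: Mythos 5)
Your proposal is correct and follows essentially the same route as the paper: the same per-phase good event built from Theorem~\ref{theorem; optimal variance design}, Theorem~\ref{theorem; dimension optimal error bound} (via Corollary~\ref{corollary; dimension optimal sample complexity}) with a union bound at level \(\delta/(\ell(\ell+1))\), the same two invariants (the paper's Lemmas~\ref{lemma; key 2} and~\ref{lemma; key 3}, with your slightly tighter constants), the same termination after \(L_\star \asymp \log_2(1/\Delta_\star)\) phases, and the same summation \(\sum_\ell n_\ell\varepsilon_\ell\) yielding the dominant \(\frac{d}{\Delta_\star}\log(\frac{dK}{\delta\Delta_\star})\) term plus the \(d^{3/2}\) term accumulating an extra \(\log(1/\Delta_\star)\) factor. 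The two caveats you flag (logarithmic bookkeeping in \(n_\ell\) and the legitimacy of conditioning on the data-dependent \(\cA_\ell\)) are exactly the points the paper handles implicitly through the definition of \(n_\ell\) in~\cref{equation; n_ell} and the phase-wise reset of \(\Bb,\bb\).
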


\paragraph{Discussion of Theorem~\ref{theorem; regret bound with gap}.}
This result shows logarithmic regret, and we highlight that it is the first such result in bandits with a semiparametric reward model.
Similarly to the high-probability regret, one can derive the expected regret of order \(\Ocal\bigl(\frac{d}{\Delta_\star} \log(\frac{TdK}{\Delta_\star})\bigr)\) by setting $\delta = \frac{1}{T}$.
Hence, when the suboptimality gap \(\Delta_\star\) is reasonably small, our result improves upon the existing algorithms \citet{greenewald2017action,krishnamurthy2018semiparametric,kim2019contextual}.
The proof is in Appendix~\ref{section; proof main algorithm}.

\subsection{PAC and BAI Properties}
\noindent 
Even though our algorithm has sublinear regret, it possesses exploration-based properties such as PAC and BAI.
Algorithm~\ref{algorithm; PE} declares the remaining action as the best arm when only one action is left.  
We prove that the declared arm is indeed the best arm with high probability and provide its sample complexity.

\begin{theorem}[PAC and BAI properties of \algob]\label{Theorem; Exp of PE}
Our \algob\ enjoys both PAC and BAI properties, as follows:\\
\textbf{(BAI): }
Our \algob\ algorithm outputs the best arm with probability at least \(1-\delta\) upon selecting \(\bm\tau_{BAI}\) samples, where
\[
\bm\tau_{BAI} = \tilde{\Ocal}\Bigl(\frac{d}{\Delta_\star^2}\log K\Bigr).
\]\\
\textbf{(PAC): }
At time \(t\), let the policy of \algob\ be \(\pi_t\).
with probability at least \(1-\delta\), we have \(V^{\pi_t} \geq V^\star - \varepsilon\) whenever \(t \geq \bm \tau(\varepsilon, \delta)\) for
\[
\bm \tau(\varepsilon, \delta) = \tilde{\Ocal}\Bigl(\frac{d}{\varepsilon^2} \log K\Bigr).
\]
\end{theorem}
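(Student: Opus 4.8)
The plan is to run a phase-by-phase clean-event argument: first show that, with probability at least $1-\delta$, the relative-reward estimates produced in every phase are accurate to within a constant fraction of $\varepsilon_\ell$, then use this to prove that the optimal arm is never eliminated while every sufficiently suboptimal arm is. For the clean event I would allocate failure budget $\delta_\ell=\delta/(\ell(\ell+1))$ to phase $\ell$, so that $\sum_{\ell\ge 1}\delta_\ell=\delta$. Within phase $\ell$ the algorithm samples $n_\ell$ times under the fixed design $\pb^{\des}_\ell$ returned by \algoa\ on $\cA_\ell$, so \cref{theorem; optimal variance design} gives $\|x_i-x_{\cA_\ell(1)}\|^2_{\bSigma_{\des}^{-1}}\le 4d$ and $\max_i\|x_i-\bar{x}_{\des}\|^2_{\bSigma_{\des}^{-1}}\le 16d$, i.e.\ $L\le 4d$ and $M\le 16d$ in the notation of \cref{theorem; dimension optimal error bound}. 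Applying that theorem with $z=x_i-x_{\cA_\ell(1)}$, together with a union bound over the $|\cA_\ell|\le K$ arms (confidence $\delta_\ell/K$), the choice of $n_\ell$ in \cref{equation; n_ell}---whose two summands are tuned to control the $d/\varepsilon_\ell^2$ and $d^{3/2}/\varepsilon_\ell$ error terms separately---forces
\[
\bigl|(x_i-x_{\cA_\ell(1)})^\top(\hat\theta_{(\ell)}-\theta^\star)\bigr|\le \tfrac{\varepsilon_\ell}{4}\quad\text{for all }i\in\cA_\ell .
\]
By the triangle inequality this yields $|(x_i-x_j)^\top(\hat\theta_{(\ell)}-\theta^\star)|\le \varepsilon_\ell/2$ for all $i,j\in\cA_\ell$. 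The intersection over all phases is the clean event, and it holds with probability at least $1-\delta$.

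On the clean event I would then verify two invariants for every phase $\ell$: (i) the optimal arm survives, because $(x_{a^\star}-x')^\top\theta^\star\ge 0$ and the estimate is off by at most $\varepsilon_\ell/2$, so $\max_{x'}(x')^\top\hat\theta_{(\ell)}-x_{a^\star}^\top\hat\theta_{(\ell)}\le \varepsilon_\ell/2<\varepsilon_\ell$; and (ii) any arm $i$ with gap $\Delta_i=(x_{a^\star}-x_i)^\top\theta^\star>2\varepsilon_\ell$ is eliminated in phase $\ell$, since by (i) its estimated gap is at least $\Delta_i-\varepsilon_\ell/2>\varepsilon_\ell$. Consequently, after phase $\ell$ completes, every surviving arm in $\cA_{\ell+1}$ is $2\varepsilon_\ell$-optimal.

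For BAI, let $\ell^\star$ be the first phase with $2\varepsilon_{\ell^\star}<\Delta_\star$, i.e.\ $\ell^\star=\Ocal(\log(1/\Delta_\star))$. By invariant (ii) every suboptimal arm (all have gap $\ge\Delta_\star$) is removed by phase $\ell^\star$, and by (i) the optimal arm remains, so \algob\ declares $a^\star$; the sample count is $\bm\tau_{BAI}=\sum_{\ell\le\ell^\star}n_\ell$. Since $n_\ell\asymp \frac{d}{\varepsilon_\ell^2}\log(\cdot)+\frac{d^{3/2}}{\varepsilon_\ell}\log(\cdot)$ with $\varepsilon_\ell=2^{-\ell}$, the geometric sum is dominated by its last term $n_{\ell^\star}=\tilde{\Ocal}(d\log K/\Delta_\star^2)$. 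For PAC, at time $t$ let $\ell$ be the last completed phase; the current arm set is $\cA_{\ell+1}$, every arm of which is $2\varepsilon_\ell$-optimal by invariant (ii), so any policy $\pi_t$ supported on $\cA_{\ell+1}$---in particular the one \algob\ plays---satisfies $V^{\pi_t}\ge V^\star-2\varepsilon_\ell$. Choosing $\ell$ with $2\varepsilon_\ell\le\varepsilon$ and summing as before yields $\bm\tau(\varepsilon,\delta)=\tilde{\Ocal}(d\log K/\varepsilon^2)$.

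The technical heart is the clean-event step: showing that the explicit $n_\ell$ of \cref{equation; n_ell} inverts the two-term bound of \cref{theorem; dimension optimal error bound} (with $L,M\lesssim d$ from \cref{theorem; optimal variance design}) down to the target accuracy $\varepsilon_\ell/4$. This is exactly the per-phase analogue of \cref{corollary; dimension optimal sample complexity}, and the delicate points are handling the implicit $\log n_\ell$ dependence in the first term when solving for $n_\ell$, and tracking the $\ell(\ell+1)$ and $\log K$ factors through a union bound over an a priori unbounded number of phases. Once this is established, the elimination invariants and the geometric summations are routine.
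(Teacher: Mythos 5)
Your proposal is correct and follows essentially the same route as the paper: a per-phase clean event built from Theorem~\ref{theorem; optimal variance design} and Theorem~\ref{theorem; dimension optimal error bound} with failure budget $\delta/(\ell(\ell+1))$ (the paper's Lemma~\ref{lemma; key 1}), the two elimination invariants (Lemmas~\ref{lemma; key 2} and~\ref{lemma; key 3}), and a geometric sum dominated by the final phase. The only differences are immaterial constants (you target accuracy $\varepsilon_\ell/4$ and conclude survivors are $2\varepsilon_\ell$-optimal, where the paper uses $\varepsilon_\ell/2$ and $4\varepsilon_\ell$).
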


\paragraph{Discussion of Theorem~\labelcref{Theorem; Exp of PE}.}
These corollaries show that \algob\ achieves low regret and also performs BAI while satisfying the PAC property. 
Previously proposed low-regret algorithms in \citet{kim2019contextual,krishnamurthy2018semiparametric} do not have BAI and PAC guarantees as stated in Theorem~\labelcref{Theorem; Exp of PE}. Hence, the proposed algorithm \algob\ is the first algorithm that comes with such guarantees while also achieving near-optimal regret, which we believe is a major contribution to the literature.

However, the BAI sample complexity presented above is not minimax optimal when the instance is fixed.  
In the case of linear bandits, the instance-dependent minimax sample complexity can be smaller than $\Omega(\frac{d}{\Delta_\star^2})$, as studied in many literature \citep{jedra2020optimal, fiez2019sequential}.  
Therefore, this result can be practically and theoretically suboptimal, and improving it is considered a promising future direction.

\subsection{Application to MAB with Semiparametric Rewards}\label{section; application to MAB}
\noindent
Furthermore, we would like to mention that our results are also applicable to the MAB with a semiparametric reward model. The semiparametric MAB assumes the following model: there are $K$ arms and each arm $i \in [K]$ has reward distribution of mean $\mu_i \in \RR$.
When arm $i$ is pulled at time $t$, a shifted reward $r_t = \mu_i + \varepsilon_t + \nu_t$ is received. Here, $\varepsilon_t$ is a mean-zero sub-Gaussian noise, and $\nu_t$ is a shift determined before the arm selection.
This can be incorporated into our setup by setting the features as standard basis of $\RR^K$.
In this case, $d=K$. Directly applying the results from previous work \citep{kim2019contextual,krishnamurthy2018semiparametric} yields a regret bound of $\tilde{\mathcal{O}}(K\sqrt{T})$, which is suboptimal. Our algorithm and results attain a regret bound of $\tilde{\mathcal{O}}(\sqrt{KT})$ (also hiding $\log K$ terms), which matches the known lower bound.
It is also important to note that an instance-dependent logarithmic regret is achieved. Our results guarantee a regret bound of $\tilde{\mathcal{O}}(K/\Delta_\star)$, which, while not matching the known MAB lower bound, is still favorable.

\section{High-level Proof Sketch}
\noindent
The core of our results lies in Theorem~\ref{theorem; optimal variance design} and Theorem~\ref{theorem; dimension optimal error bound}. Once these two theorems are established, other results can be obtained by applying standard techniques used in the linear bandit literature.

Theorem~\ref{theorem; optimal variance design} is obtained through Lemma~\ref{lemma; covariance decomposition}.
Let $\tilde{p}_2, \dots \tilde{p}_K$ be the G-optimal design computed for the features $\{x_2-x_1, \dots x_K-x_1\}$, and let $\bSigma_{\operatorname{opt},1}$ be the second moment of $\{x_2-x_1, \dots x_K-x_1\}$ under that policy (G-optimal design).
By Lemma~\ref{lemma; covariance decomposition}, our second moment satisfies $\bSigma_{\operatorname{deo}} \succeq \frac{1}{4} \bSigma_{\operatorname{opt},1}$, which completes the proof.
A detailed proof is provided in Appendix~\ref{section; proof design}.

The key to the proof of Theorem~\ref{theorem; dimension optimal error bound} is performing decorrelation. The estimation error $\hat{\theta}_t -\theta^\star$
is decomposed as follows; see Appendix~\ref{section; proof error analysis} for detailed notation:
\begin{align*}
     \hat{\theta}_t - \theta^\star =\underbrace{(\widehat{\Vb}_t + \beta_t  \Ib_d)^{-1} \sum_{s=1}^t \tilde{x}_{a_s}\underbrace{\bigl(\bar{x}^\top \theta^\star + \nu_s\bigr)}_{:=q_s}}_{:=\Acr} 
+ \underbrace{(\widehat{\Vb}_t + \beta_t  \Ib_d)^{-1} \sum_{s=1}^t \tilde{x}_{a_s}\eta_s}_{:=\Bcr} 
- \underbrace{\beta_t (\widehat{\Vb}_t + \beta_t  \Ib_d)^{-1}\theta^\star}_{:=\Ccr}.
\end{align*}
Among these, $\Bcr$ and $\Ccr$ are terms that also appear in the error decomposition of standard linear regression and are easily controlled. 
The problematic term is $\Acr$. 
The difficulty in the analysis arises because the randomness of the mean-zero vector $\tilde{x}_{a_s}$ is correlated with $(\hat{\Vb}_t + \beta_t \Ib_d)^{-1}$.
First, we define $e_s:= (\bSigma + \lambda_t \Ib_d)^{-1}\tilde{x}_{a_s} $, which is a mean-zero vector.
By manipulating the expression, for any $z \in \RR^d$, we can decompose $z^\top \Acr$ as follows:
\begin{align*}
     z^\top \Acr = \frac{1}{t}z^\top \bigl(\bSigma + \lambda_t  \Ib_d\bigr)^{-\frac{1}{2}} \sum_{s=1}^t e_s q_s 
+ \frac{1}{t}z^\top \bigl(\widehat{\bSigma}_t + \lambda_t  \Ib_d\bigr)^{-1} \bigl(\bSigma - \widehat{\bSigma}_t\bigr)\bigl(\bSigma + \lambda_t  \Ib_d\bigr)^{-\frac{1}{2}} \sum_{s=1}^t e_s q_s 
\end{align*}
The first term on the right-hand side can be analyzed because the matrix inverse term ($ (\bSigma + \lambda_t  \Ib_d\bigr)^{-\frac{1}{2}}$) and the martingale term ($\sum_{s=1}^t e_s q_s $) are decorrelated, and the second term becomes a higher-order term of order $\tilde{\cO}(\frac{1}{t})$.
A detailed proof can be found in Appendix~\ref{section; proof error analysis}.

\section{Conclusion}
\noindent
We introduced the first experimental design framework for semiparametric bandits, 
yielding both pure exploration guarantees (PAC and BAI) and tight regret bounds, 
including the \(\tilde{\cO}(\sqrt{dT \log K})\) rate and its gap-dependent logarithmic counterpart. 
Our work opens several avenues for further study:  
(1) refining pure exploration algorithms to achieve optimal sample complexities (which may not simultaneously achieve optimal regret), 
and (2) developing methods for the fixed-budget setting, 
a natural counterpart to fixed-confidence pure exploration in linear bandits.

\section*{Acknowledgements}
\noindent
We are very grateful for the constructive comments provided by the reviewers during our COLT submission. 
Gi-Soo Kim was supported by the Institute of Information \&
Communications Technology Planning \& Evaluation~(IITP) grants funded by
the Korea government~(MSIT) (No. RS-2020-II201336, Artificial Intelligence Graduate School Program (UNIST); No. 2022-0-00469, Development of Core Technologies for Task-oriented Reinforcement
Learning for Commercialization of Autonomous Drones).
Min-hwan Oh was supported by the National Research Foundation of Korea~(NRF) grant funded by the Korea government~(MSIT) (No.  RS-2022-NR071853 and RS-2023-00222663) and by Institute of Information \& communications Technology Planning \& Evaluation~(IITP) grant funded by the Korea government~(MSIT) (No.RS-2025-02263754).

\clearpage
\newpage
\bibliography{ref}
\clearpage
\newpage
\appendix
\onecolumn

\begin{center}
    \textbf{\Huge Appendix}
\end{center}
\vspace{1cm}
\etocdepthtag.toc{mtappendix}
\etocsettagdepth{mtchapter}{none}
\etocsettagdepth{mtappendix}{subsection}
\tableofcontents

\clearpage
\newpage

\section{Proof of Theorem~\ref{theorem; optimal variance design}}\label{section; proof design}
\noindent
We define \(\tilde{p}_2, \dots, \Tilde{p}_K\) as the G-optimal design for \(\{x_2 - x_1, \dots, x_K - x_1\}\), which solves \cref{equation; g-optimal design}.
Also, we define 
\begin{align*}
\bSigma_{\opt, 1} := \sum_{i=2}^K \Tilde{p}_i (x_i - x_1)(x_i - x_1)^\top.
\end{align*}
By the definition of the G-optimal design, we have
\begin{align*}
\max_{i \in [K]} \|x_i - x_1 \|_{ \bSigma_{\opt, 1}^{-1}} \leq \sqrt{d}.
\end{align*}
Recall that we set \(\pb^{\des} = \bigl(\frac{1}{2}, \frac{1}{2}\tilde{p}_2, \dots, \frac{1}{2}\tilde{p}_K\bigr)\) and define \(\bSigma_{\des} := \sum_{i=1}^K p_i^{\des} (x_i - \bar{x})(x_i - \bar{x})^\top\), which is the covariance of the policy \(\pb^{\des}\).
Applying Lemma~\ref{lemma; covariance decomposition}, we obtain
\begin{align*}
\bSigma_{\des} &\succeq p^{\des}_1 \sum_{j \neq 1} p^{\des}_j(x_j - x_1)(x_j - x_1)^\top 
= \frac{1}{2} \sum_{j \neq 1} \frac{1}{2}\tilde{p}_j (x_j - x_1)(x_j - x_1)^\top 
= \frac{1}{4} \bSigma_{\opt, 1}.
\end{align*}
Hence, the covariance \(\bSigma_{\des}\) of our policy satisfies
\begin{align*}
\bSigma_{\des}\succeq \frac{1}{4}\bSigma_{\opt, 1}.
\end{align*} 
We are now ready to prove the theorem.
By the definition of the G-optimal design, \(\bSigma_{\opt, 1}\), we get for all \(i \geq 2\):
\begin{align*}
\norm{x_i - x_1}_{ \bSigma_{\des}^{-1}} \leq 2\norm{x_i - x_1}_{\bSigma_{\opt, 1}^{-1}} \leq 2\sqrt{d},
\end{align*}
and this is the desired result.

Next, we prove the second inequality.
Recall that we define \(\tilde{x}_i := x_i - \bar{x}_{\des}\) and observe  
\begin{align*}
x_i - \bar{x}_{\des} 
= (x_i - x_1) - \sum_{j =1}^K p_j^{\des}(x_j - x_1),
\end{align*}
and hence,
\begin{align*}
\norm{\tilde{x}_i}_{\bSigma_{\des}^{-1}} 
&\leq 2\norm{\tilde{x}_i}_{\bSigma_{\opt, 1}^{-1}} 
\leq 2 \Bigl(\sum_{j=1}^K p_j^{\des} \norm{x_j - x_1}_{\bSigma_{\opt, 1}^{-1}} + \norm{x_i - x_1}_{\bSigma_{\opt,1}^{-1}} \Bigr) \\
&\leq 2 \bigl(\sum_{j=1}^K p_j^{\des} \sqrt{d} + \sqrt{d}\bigr) 
\leq 4\sqrt{d}.
\end{align*}
\hfill \(\BlackBox\)

\begin{lemma}\label{lemma; covariance decomposition}
For any policy \(\pb = (p_1, \dots, p_K) \in \Delta^{(K)}\) and \(\bar{x}_{\pb} = \sum_{i=1}^K p_i x_i\), the following holds:
\begin{align*}
\bSigma_{\pb} := \sum_{i=1}^K p_i (x_i - \bar{x}_\pb)(x_i - \bar{x}_\pb)^\top  
= \sum_{i < j } p_i p_j\bigl(x_i - x_j\bigr)\bigl(x_i - x_j\bigr)^\top.
\end{align*}
\end{lemma}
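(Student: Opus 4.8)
The plan is to recognize the left-hand side as the covariance matrix of a discrete random vector and exploit a standard symmetrization identity. Concretely, let $X$ be the random vector that equals $x_i$ with probability $p_i$, so that $\EE[X] = \bar{x}_\pb$ and $\bSigma_\pb = \EE\bigl[(X - \bar{x}_\pb)(X - \bar{x}_\pb)^\top\bigr]$. Introduce an independent copy $X'$ of $X$ and observe that $\EE\bigl[(X - X')(X - X')^\top\bigr] = 2\bSigma_\pb$: expanding the product, the two quadratic terms each contribute $\EE[XX^\top]$, while the two cross terms together contribute $-2\,\bar{x}_\pb \bar{x}_\pb^\top$, so the sum equals $2\bigl(\EE[XX^\top] - \bar{x}_\pb \bar{x}_\pb^\top\bigr) = 2\bSigma_\pb$.

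The second step is to expand $\EE\bigl[(X - X')(X - X')^\top\bigr]$ as a double sum over the joint law of $(X, X')$, namely $\sum_{i,j} p_i p_j (x_i - x_j)(x_i - x_j)^\top$. The diagonal terms $i = j$ vanish identically, and since the rank-one matrix $(x_i - x_j)(x_i - x_j)^\top$ is invariant under swapping $i$ and $j$, the off-diagonal contribution equals $2\sum_{i<j} p_i p_j (x_i - x_j)(x_i - x_j)^\top$. Combining this with the factor $\frac{1}{2}$ coming from the first step yields the claimed identity.

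As an alternative that avoids probabilistic language, I could expand both sides directly: the left-hand side simplifies to $\sum_i p_i x_i x_i^\top - \bar{x}_\pb \bar{x}_\pb^\top$ using $\sum_i p_i = 1$, while symmetrizing the right-hand side gives $\frac{1}{2}\sum_{i,j} p_i p_j (x_i - x_j)(x_i - x_j)^\top$, which expands to the same expression after using $\sum_{i,j} p_i p_j\, x_i x_j^\top = \bar{x}_\pb \bar{x}_\pb^\top$. Either route is a routine computation; the only point requiring care is the bookkeeping in passing from the full double sum $\sum_{i,j}$ to the ordered sum $\sum_{i<j}$, where both the symmetry of the rank-one terms and the vanishing of the diagonal must be invoked. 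There is no genuine obstacle here — the lemma is an elementary covariance identity, and essentially all of its content resides in the symmetrization step.
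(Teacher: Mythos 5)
Your symmetrization argument via an independent copy $X'$ is exactly the paper's second proof of this lemma, and your alternative direct expansion mirrors its first proof. Both routes are correct and complete; no issues.
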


\begin{proof}
We define \(\out(a,b) := a b^\top\), which is a bilinear function.
Observe that
\begin{align*}
\bSigma_{\pb} 
&:= \sum_{i=1}^K p_i \bigl(x_i - \bar{x}_{\pb}\bigr)\bigl(x_i - \bar{x}_{\pb}\bigr)^\top \\
&= \sum_{i=1}^K p_i \out(x_i, x_i) - \sum_{i=1}^K p_i \out(x_i, \bar{x}_\pb) - \sum_{i=1}^K p_i \out( \bar{x}_\pb,x_i)+ \sum_{i=1}^K p_i \out(\bar{x}_{\pb}, \bar{x}_{\pb})\\
&= \sum_{i=1}^K p_i \out(x_i, x_i) - \out(\bar{x}_{\pb}, \bar{x}_{\pb}) 
\quad (\text{since \(\sum_{i=1}^K p_i =1\)})\\
&= \sum_{i=1}^K p_i \out(x_i, x_i) - \sum_{i=1}^K p_i^2 \out(x_i, x_i) 
- 
\sum_{i<j} p_i p_j (\out(x_i, x_j)+\out(x_j, x_i))\\ 
&= \sum_{i=1}^K (p_i - p_i^2)\out(x_i, x_i)-
\sum_{i < j} p_i p_j(\out(x_i, x_j)+\out(x_j, x_i))\\
&= \sum_{i=1}^K \sum_{j: j \neq i} p_i p_j \out(x_i, x_i)-
\sum_{i < j} p_i p_j (\out(x_i, x_j)+\out(x_j, x_i))\\
&= \sum_{i \neq j} p_i p_j \out(x_i, x_i)-
\sum_{i < j} p_i p_j (\out(x_i, x_j)+\out(x_j, x_i))\\
&= \sum_{i < j} p_i p_j \out(x_i, x_i) + \sum_{i < j} p_i p_j \out(x_j, x_j)- 
\sum_{i < j} p_i p_j (\out(x_i, x_j)+\out(x_j, x_i))\\
&= \sum_{i < j} p_i p_j \out(x_i - x_j, x_i - x_j),
\end{align*}
and this concludes the proof.

\paragraph{Second proof.}\footnote{We thank reviewer $\#$2 for valuable comments about correcting our proofs and for providing this simple and elegant second proof.} 
Let $X$ and $Y$ be independent and identically distributed (i.i.d.) random vectors. The expected outer product of their difference is twice the covariance of $X$:
\begin{align*}
    \mathbb{E}\left[(X-Y)(X-Y)^\top\right] = 2\mathbb{E}\left[XX^\top\right] - 2\mathbb{E}[X]\mathbb{E}[X]^\top = 2\operatorname{Cov}(X).
\end{align*}
Now, consider a discrete random vector $X$ with the distribution $\mathbb{P}(X=x_i) = p_i$ for $i=1, \dots, K$. The left-hand side can be expressed as:
\begin{align*}
    \mathbb{E}\left[(X-Y)(X-Y)^\top\right] = 2\sum_{i < j } p_i p_j\left(x_i - x_j\right)\left(x_i - x_j\right)^\top.
\end{align*}
The covariance of $X$ is given by:
\begin{align*}
  \operatorname{Cov}(X) =   \sum_{i=1}^K p_i (x_i - \bar{x}_\mathbf{p})(x_i - \bar{x}_\mathbf{p})^\top,
\end{align*}
where $\bar{x}_\mathbf{p} = \mathbb{E}[X]$. Equating the two expressions concludes the proof.
\end{proof}

\section{Proof of Theorem~\ref{theorem; dimension optimal error bound}}\label{section; proof error analysis}
\noindent
We define \(\bSigma_{\pb} := \EE[\tilde{x}_{a_s} \tilde{x}_{a_s}^\top] = \sum_{i=1}^K p^{}_i (x_i - \bar{x}_{\pb})(x_i - \bar{x}_{\pb})^\top \).
Abusing notation slightly, we simply write \(\bSigma = \bSigma_{\pb}\) and \(\bar{x} := \bar{x}_{\pb}\) for this proof.
We first prove the upper bound.

\paragraph{Step 0: Preparations.}
We set the ridge regularizer \(\beta_t = \log(t/\delta)\) and the normalized ridge parameter \(\lambda_t = \frac{\beta_t}{t}\).
We set \(\widehat{\Vb}_t := \sum_{i=1}^t \tilde{x}_{a_s}\tilde{x}_{a_s}^\top\) and \(\Vb_t = t \bSigma = \EE[\widehat{\Vb}_t]\). 
Also, set \(\widehat{\bSigma}_t  = \frac{\widehat{\Vb}_t}{t}\). 
For our choice of \(\beta_t = \log(t/\delta)\), by Lemma~\ref{lemma; second moment concentration}, the following holds with probability at least \(1 - \frac{\delta}{10}\) for some absolute constant \(c>0\):
\begin{equation}\label{equation; comparability thm 2}
\frac{1}{c}\bigl(\widehat{\bSigma}_t + \lambda_t \Ib_d \bigr)\preceq \bSigma + \lambda_t \Ib_d \preceq c\bigl(\widehat{\bSigma}_t + \lambda_t \Ib_d \bigr), 
\quad    
\frac{1}{c}\bigl(\widehat{\Vb}_t + \beta_t \Ib_d \bigr)\preceq \Vb_t + \beta_t \Ib_d \preceq c\bigl(\widehat{\Vb}_t + \beta_t \Ib_d \bigr).
\end{equation}
Recall that we define \(\tilde{x}_{a_s} = x_{a_s} - \bar{x}\), and then \(\EE[\tilde{x}_{a_s} \mid \cH_{s-1}] = 0\).
Also, we have \(\EE[\tilde{x}_{a_s}\tilde{x}_{a_s}^\top] = \bSigma\). 

We prove the upper bound of the theorem in four steps; afterward, we show its optimality.

\paragraph{Step 1: Error Decomposition.}
Using the definition of \(\hat{\theta}_t\) from \cref{equation; estimator}, decompose the estimation error \(\hat{\theta}_t - \theta^\star\) as
\begin{align*}
\hat{\theta}_t -\theta^\star 
&=  (\widehat{\Vb}_t + \beta_t \Ib_d)^{-1} \sum_{s=1}^t \tilde{x}_{a_s} (x_{a_s}^\top \theta^\star + \nu_s + \eta_s) 
- (\widehat{\Vb}_t + \beta_t  \Ib_d)^{-1} \bigl(\widehat{\Vb}_t + \beta_t  \Ib_d\bigr)\theta^\star\\
&=(\widehat{\Vb}_t + \beta_t  \Ib_d)^{-1} \sum_{s=1}^t \tilde{x}_{a_s}\bigl(\tilde{x}_{a_s}^\top \theta^\star+\bar{x}^\top \theta^\star + \nu_s + \eta_s\bigr) \\
&\quad - (\widehat{\Vb}_t + \beta_t  \Ib_d)^{-1} \bigl(\sum_{s=1}^t  \tilde{x}_{a_s}\tilde{x}_{a_s}^\top \theta^\star \bigr) - (\widehat{\Vb}_t + \beta_t  \Ib_d)^{-1} \beta_t \theta^\star\\
&= (\widehat{\Vb}_t + \beta_t  \Ib_d)^{-1} \sum_{s=1}^t \tilde{x}_{a_s}\bigl(\bar{x}^\top \theta^\star + \nu_s + \eta_s\bigr) 
- \beta_t (\widehat{\Vb}_t + \beta_t  \Ib_d)^{-1}\theta^\star \\
&=  \underbrace{(\widehat{\Vb}_t + \beta_t  \Ib_d)^{-1} \sum_{s=1}^t \tilde{x}_{a_s}\underbrace{\bigl(\bar{x}^\top \theta^\star + \nu_s\bigr)}_{:=q_s}}_{:=\Acr} 
+ \underbrace{(\widehat{\Vb}_t + \beta_t  \Ib_d)^{-1} \sum_{s=1}^t \tilde{x}_{a_s}\eta_s}_{:=\Bcr} 
- \underbrace{\beta_t (\widehat{\Vb}_t + \beta_t  \Ib_d)^{-1}\theta^\star}_{:=\Ccr} \\
&:= \Acr + \Bcr + \Ccr.
\end{align*}

\paragraph*{Step 2: Bounding \(\Ccr\).}
For any \(z \in \RR^d\), we can bound \(z^\top \Ccr\) with probability at least \(1-\frac{\delta}{10}\) using \cref{equation; comparability thm 2}:
\begin{align*}
|z^\top \Ccr | 
&\leq \bigl|\sqrt{\beta_t} z^\top (\widehat{\Vb}_t + \beta_t  \Ib_d)^{-\frac{1}{2}} (\widehat{\Vb}_t + \beta_t  \Ib_d)^{-\frac{1}{2}} \sqrt{\beta_t}\theta^\star\bigr| \\
&\leq \sqrt{\beta_t}\bigl\| z^\top (\widehat{\Vb}_t + \beta_t  \Ib_d)^{-\frac{1}{2}} \bigr\|_2  
\bigl\|\sqrt{\beta_t}(\widehat{\Vb}_t + \beta_t  \Ib_d)^{-\frac{1}{2}} \theta^\star \bigr\|_2\\
&\leq \sqrt{\beta_t}\frac{1}{\sqrt{t}}\sqrt{ z^\top \bigl(\widehat{\bSigma}_t +\lambda_t \Ib_d \bigr)^{-1} z } \times \sqrt{\beta_t}\bigl\| (\widehat{\Vb}_t + \beta_t  \Ib_d)^{-\frac{1}{2}} \theta^\star \bigr\|_2 \\
&\lesssim \sqrt{\beta_t}\frac{1}{\sqrt{t}}\sqrt{ z^\top \bigl(\bSigma +\lambda_t \Ib_d \bigr)^{-1} z } \times \sqrt{\beta_t}\bigl\| (\widehat{\Vb}_t + \beta_t  \Ib_d)^{-\frac{1}{2}} \theta^\star \bigr\|_2  \quad \text{(by \cref{equation; comparability thm 2})}\\
&\lesssim \frac{\sqrt{\beta_tL}}{\sqrt{t}}\|\theta^\star \|_2 
\lesssim \frac{\sqrt{\beta_tL}}{\sqrt{t}} 
= \frac{\sqrt{\log(t/\delta)L}}{\sqrt{t}}.
\end{align*}

\paragraph*{Step 3: Bounding \(\Bcr\).}
Next, we bound \(z^\top \Bcr\). Recall
\begin{align*}
z^\top \Bcr 
= \sum_{s=1}^t z^\top \bigl(\widehat{\Vb}_t + \beta_t  \Ib_d\bigr)^{-1}\tilde{x}_{a_s}\eta_s.
\end{align*}
Given $\{\tilde{x}_{a_1}, \dots \tilde{x}_{a_t}\}$, the random variable \( z^\top \bigl(\widehat{\Vb}_t + \beta_t  \Ib_d\bigr)^{-1}\tilde{x}_{a_s}\eta_s \) is mean-zero and sub-Gaussian with proxy \(\alpha_s := |z^\top \bigl(\widehat{\Vb}_t + \beta_t  \Ib_d\bigr)^{-1}\tilde{x}_{a_s}|\) (since noise $\eta_s$ is sampled independently).
See that
\begin{align*}
\sum_{s=1}^t \alpha_s^2 
&= \sum_{s=1}^t z^\top \bigl(\widehat{\Vb}_t + \beta_t  \Ib_d\bigr)^{-1}\tilde{x}_{a_s}\tilde{x}_{a_s}^\top \bigl(\widehat{\Vb}_t + \beta_t  \Ib_d\bigr)^{-1} z \\
&= z^\top \bigl(\widehat{\Vb}_t + \beta_t  \Ib_d\bigr)^{-1}\widehat{\Vb}_t\bigl(\widehat{\Vb}_t + \beta_t  \Ib_d\bigr)^{-1} z \\
&\leq z^\top \bigl(\widehat{\Vb}_t + \beta_t  \Ib_d\bigr)^{-1} z.
\end{align*}
Applying Bernstein's inequality for sub-Gaussian random variables \citep{boucheron2013concentration}, with probability at least \(1-\frac{\delta}{10}\),
\begin{align*}
|z^\top \Bcr| 
\lesssim \sqrt{\frac{z^\top \bigl(\widehat{\bSigma}_t + \lambda_t  \Ib_d\bigr)^{-1} z}{t}\log\bigl(\frac{1}{\delta}\bigr)}.
\end{align*}
Since \cref{equation; comparability thm 2} holds with probability at least \(1-\frac{\delta}{10}\), we have with probability at least \(1-\frac{2\delta}{10}\):
\begin{align*}
|z^\top \Bcr| 
\lesssim \sqrt{\frac{z^\top \bigl(\bSigma + \lambda_t  \Ib_d\bigr)^{-1} z}{t}\log\bigl(\frac{1}{\delta}\bigr)}
\lesssim \frac{\sqrt{L \log\bigl(\frac{t}{\delta}\bigr)}}{\sqrt{t}}.
\end{align*}

\paragraph{Step 3: Bounding \(\Acr\).}
Lastly, we bound the most challenging term, \(\Acr\).
We define \(q_s := \bar{x}^\top \theta^\star + \nu_s\).
Under the boundedness Assumption~\ref{assumption; boundedness}, \(|q_s| \leq 2\).
Observe that
\begin{align*}
\Acr 
&:= \frac{1}{t}\bigl(\widehat{\bSigma}_t + \lambda_t  \Ib_d\bigr)^{-1} \sum_{s=1}^t (\bSigma + \lambda_t  \Ib_d)^{\frac{1}{2}} (\bSigma + \lambda_t  \Ib_d)^{-\frac{1}{2}} \tilde{x}_{a_s}q_s \\
&= \frac{1}{t}\bigl(\widehat{\bSigma}_t + \lambda_t  \Ib_d\bigr)^{-1} (\bSigma + \lambda_t  \Ib_d)^{\frac{1}{2}} \sum_{s=1}^t \underbrace{(\bSigma + \lambda_t \Ib_d)^{-\frac{1}{2}}\tilde{x}_{a_s}}_{:= e_s} q_s \\
&:= \frac{1}{t}\bigl(\widehat{\bSigma}_t + \lambda_t  \Ib_d\bigr)^{-1} (\bSigma + \lambda_t  \Ib_d)^{\frac{1}{2}} \sum_{s=1}^t e_s q_s.
\end{align*}
Here, \(e_s := (\bSigma+\lambda_t  \Ib_d)^{-\frac{1}{2}}\tilde{x}_{a_s} \in \RR^d\) is mean zero given $\cH_{s-1}$, and its variance satisfies
\begin{align*}
    \operatorname{var}(e_s \mid \cH_{s-1}) 
&= (\bSigma+\lambda_t  \Ib_d)^{-\frac{1}{2}}\EE\bigl[\tilde{x}_{a_s}\tilde{x}_{a_s}^\top \mid \cH_{s-1}\bigr](\bSigma+\lambda_t  \Ib_d)^{-\frac{1}{2}} \\
&= (\bSigma+\lambda_t  \Ib_d)^{-\frac{1}{2}}  \bSigma(\bSigma+\lambda_t  \Ib_d)^{-\frac{1}{2}} \\
&\preceq \Ib_d.
\end{align*}
Also, \(\|e_s\|^2_2 \leq M\) by the given condition.
Then, for any \(z \in \Xcal - x_1\), we have 
\begin{align*}
z^\top \Acr 
&= \frac{1}{t}z^\top \bigl(\widehat{\bSigma}_t + \lambda_t  \Ib_d\bigr)^{-1} (\bSigma + \lambda_t  \Ib_d)^{\frac{1}{2}} \sum_{s=1}^t e_s q_s \\
&= \frac{1}{t}z^\top \bigl(\bSigma + \lambda_t  \Ib_d\bigr)^{-1} (\bSigma + \lambda_t  \Ib_d)^{\frac{1}{2}} \sum_{s=1}^t e_s q_s \\
&\quad + \frac{1}{t}z^\top \bigl(\widehat{\bSigma}_t + \lambda_t  \Ib_d\bigr)^{-1} \bigl(\bSigma + \lambda_t  \Ib_d - \widehat{\bSigma}_t - \lambda_t  \Ib_d\bigr)\bigl(\bSigma + \lambda_t  \Ib_d\bigr)^{-1} (\bSigma + \lambda_t  \Ib_d)^{\frac{1}{2}} \sum_{s=1}^t e_s q_s \\
&= \frac{1}{t}z^\top \bigl(\bSigma + \lambda_t  \Ib_d\bigr)^{-\frac{1}{2}} \sum_{s=1}^t e_s q_s 
+ \frac{1}{t}z^\top \bigl(\widehat{\bSigma}_t + \lambda_t  \Ib_d\bigr)^{-1} \bigl(\bSigma - \widehat{\bSigma}_t\bigr)\bigl(\bSigma + \lambda_t  \Ib_d\bigr)^{-\frac{1}{2}} \sum_{s=1}^t e_s q_s \\
&:= I + II,
\end{align*}
where we used the identity \(\Ab^{-1} - \Bb^{-1} = \Ab^{-1}(\Bb-\Ab)\Bb^{-1}\) in the second equality.

\vspace{0.4cm}
\noindent \underline{[3-1] Bounding Term \(I\).}\;\;
We first bound term \(I\).
See that
\begin{align*}
|I| 
&\leq \frac{1}{t}\Bigl|z^\top (\bSigma + \lambda_t  \Ib_d)^{-\frac{1}{2}} \sum_{s=1}^t e_s q_s\Bigr| 
= \frac{1}{t}\bigl|\sum_{s=1}^t q_sz^\top (\bSigma + \lambda_t  \Ib_d)^{-\frac{1}{2}}  e_s\bigr|.
\end{align*}
Since \(|q_s|\leq 2\) and \(\EE[e_s e_s^\top \mid \cH_{s-1}] \preceq \Ib_d\), we get
\begin{align*}
\operatorname{var}\bigl(q_sz^\top (\bSigma + \lambda_t  \Ib_d)^{-\frac{1}{2}}  e_s \big| \cH_{s-1}\bigr) 
&= q_s^2 \EE\bigl[z^\top (\bSigma + \lambda_t  \Ib_d)^{-\frac{1}{2}}  e_s e_s^\top(\bSigma + \lambda_t  \Ib_d)^{-\frac{1}{2}} z \big| \cH_{s-1}\bigr] \\
&\leq 4 z^\top (\bSigma + \lambda_t  \Ib_d)^{-1} z 
\lesssim L.
\end{align*}
Also,
\begin{align*}
\bigl|q_sz^\top (\bSigma + \lambda_t  \Ib_d)^{-\frac{1}{2}}  e_s\bigr| 
\lesssim \bigl\|z^\top (\bSigma + \lambda_t  \Ib_d)^{-\frac{1}{2}}\bigr\|_2 \|e_s\|_2 
\lesssim \sqrt{L}\sqrt{M}
\lesssim \sqrt{LM}.
\end{align*}
Since $\EE[e_s\mid \cH_{s-1}]=0$ and $q_s$ is measurable w.r.t. $\cH_{s-1}$, by applying Bernstein's inequality for sum of martingale differences (see \citealt{fan2019bernstein}), we get with probability at least \(1-\frac{\delta}{10}\):
\begin{align*}
|I| 
\lesssim \frac{1}{t}\sqrt{Lt \log\bigl(\frac{10}{\delta}\bigr)} + \frac{1}{t}\sqrt{LM}\log\bigl(\frac{10}{\delta}\bigr)
\lesssim \sqrt{\frac{L}{t}\log\bigl(\frac{1}{\delta}\bigr)} 
+\frac{\sqrt{LM}}{t}\log\bigl(\frac{1}{\delta}\bigr).
\end{align*}

\vspace{0.5cm}

\noindent \underline{[3-2] Bounding Term \(II\).}\;\;
This term is a higher order term, such as $\tilde{\cO}(\frac{1}{t})$.
Observe that
\begin{align*}
|II|
&= \Bigl|\frac{1}{t}z^\top \bigl(\widehat{\bSigma}_t + \lambda_t  \Ib_d\bigr)^{-1}\bigl(\bSigma - \widehat{\bSigma}_t\bigr) (\bSigma + \lambda_t  \Ib_d)^{-\frac{1}{2}} \sum_{s=1}^t e_sq_s\Bigr| \\
&= \Bigl|\frac{1}{t}z^\top \underbrace{\bigl(\bSigma + \lambda_t  \Ib_d\bigr)^{-\frac{1}{2}} (\bSigma + \lambda_t  \Ib_d\bigr)^{\frac{1}{2}}}_{\blue{}} \bigl(\widehat{\bSigma}_t + \lambda_t  \Ib_d\bigr)^{-1} \\
&\quad \times
\underbrace{(\bSigma + \lambda_t  \Ib_d)^{\frac{1}{2}}(\bSigma + \lambda_t  \Ib_d)^{-\frac{1}{2}}}_{\purple{}} \bigl(\bSigma - \widehat{\bSigma}_t\bigr) (\bSigma + \lambda_t  \Ib_d)^{-\frac{1}{2}} \sum_{s=1}^t e_sq_s\Bigr| \\
&\leq \frac{1}{t}\bigl\|z^\top (\bSigma + \lambda_t  \Ib_d)^{-\frac{1}{2}}\bigr\|_2  
\bigl\|(\bSigma + \lambda_t  \Ib_d)^{\frac{1}{2}} (\widehat{\bSigma}_t + \lambda_t  \Ib_d\bigr)^{-1} (\bSigma + \lambda_t  \Ib_d)^{\frac{1}{2}}\bigr\|_{\op} \\
&\quad  \times \bigl\|(\bSigma + \lambda_t  \Ib_d)^{-\frac{1}{2}}\bigl(\bSigma - \widehat{\bSigma}_t\bigr) (\bSigma + \lambda_t  \Ib_d)^{-\frac{1}{2}}\bigr\|_{\op} \bigl\|\sum_{s=1}^t e_sq_s \bigr\|_2 \\
&\leq \frac{1}{t}\sqrt{L}\bigl\| (\bSigma + \lambda_t  \Ib_d)^{\frac{1}{2}} (\widehat{\bSigma}_t + \lambda_t  \Ib_d)^{-1} (\bSigma + \lambda_t  \Ib_d)^{\frac{1}{2}}\bigr\|_{\op} \\
&\quad  \times \bigl\|(\bSigma + \lambda_t  \Ib_d)^{-\frac{1}{2}}\bigl(\bSigma - \widehat{\bSigma}_t\bigr) (\bSigma + \lambda_t  \Ib_d)^{-\frac{1}{2}}\bigr\|_{\op} \bigl\|\sum_{s=1}^t e_sq_s \bigr\|_2.
\end{align*}
First, with probability at least \(1-\frac{\delta}{10}\), \cref{equation; comparability thm 2} gives
\begin{align*}
\bigl\| (\bSigma + \lambda_t  \Ib_d)^{\frac{1}{2}} (\widehat{\bSigma}_t + \lambda_t  \Ib_d)^{-1} (\bSigma + \lambda_t  \Ib_d)^{\frac{1}{2}}\bigr\|_{\op} 
\leq 2.
\end{align*}
Next, by applying the dimension-free martingale difference bound (Lemma 10 of \citet{kim2021doubly}), we aim to bound \(\|\sum_{s=1}^t e_sq_s\|_2\).
Since \(\|q_se_s\|_2 \leq 2\sqrt{M}\) and \(\EE[e_s \mid \cH_{s-1}] = 0\), we can bound it with probability at least \(1-\frac{\delta}{10}\):
\begin{align*}
\Bigl\|\sum_{s=1}^t e_s q_s\Bigr\|_2 
\lesssim \sqrt{M}\sqrt{t \log\bigl(\frac{10}{\delta}\bigr)} 
\lesssim \sqrt{tM \log\bigl(\frac{1}{\delta}\bigr)}.
\end{align*}
Lastly, to bound 
\(\bigl\|(\bSigma + \lambda_t  \Ib_d)^{-\frac{1}{2}}\bigl(\bSigma - \widehat{\bSigma}_t\bigr)(\bSigma + \lambda_t  \Ib_d)^{-\frac{1}{2}}\bigr\|_{\op}\), we use the result of Lemma~\ref{lemma; covariance concentration} and obtain with probability at least \(1-\frac{\delta}{10}\):
\begin{align*}
\bigl\|(\bSigma + \lambda_t  \Ib_d)^{-\frac{1}{2}}\bigl(\bSigma - \widehat{\bSigma}_t\bigr)(\bSigma + \lambda_t  \Ib_d)^{-\frac{1}{2}}\bigr\|_{\op}
\lesssim \sqrt{\frac{M\log\bigl(\frac{10d}{\delta}\bigr)}{t}}
\lesssim \sqrt{\frac{M\log\bigl(\frac{d}{\delta}\bigr)}{t}}.
\end{align*}
Combining the above, with probability at least \(1-\frac{3\delta}{10}\):
\begin{align*}
|II| 
&\lesssim \frac{\sqrt{L}}{t}\bigl\| (\bSigma + \lambda_t  \Ib_d)^{\frac{1}{2}} (\widehat{\bSigma}_t + \lambda_t  \Ib_d)^{-1} (\bSigma + \lambda_t  \Ib_d)^{\frac{1}{2}}\bigr\|_{\op} \\
&\quad \times \bigl\|(\bSigma + \lambda_t  \Ib_d)^{-\frac{1}{2}}\bigl(\bSigma - \widehat{\bSigma}_t\bigr)(\bSigma + \lambda_t  \Ib_d)^{-\frac{1}{2}}\bigr\|_{\op} \Bigl\|\sum_{i=1}^t e_ia_i\Bigr\|_2  \\
&\lesssim \frac{\sqrt{L}}{t}\frac{\sqrt{M\log\bigl(\frac{d}{\delta}\bigr)}}{\sqrt{t}} \sqrt{t}\sqrt{M \log\bigl(\frac{1}{\delta}\bigr)} \\
&\lesssim \frac{\sqrt{L}M\log\bigl(\frac{d}{\delta}\bigr)}{t}.
\end{align*}

\paragraph{Step 4: Final Upper Bound.}
Combining the established bounds, with probability at least \(1-\delta\), we have 
\begin{align*}
|z^\top(\hat{\theta}_t-\theta^\star)| 
\lesssim \frac{\sqrt{\log\bigl(\frac{t}{\delta}\bigr)L}}{\sqrt{t}} 
+\frac{\sqrt{L}M}{t}\log\bigl(\frac{d}{\delta}\bigr).
\end{align*}

\paragraph{Step 5: Proof of Optimality.}
Consider the case when the shift is given by \(\nu_s = -\bar{x}_\pb \theta^\star\) for all \(1 \le s \le t\).
Then, our problem has the reward structure 
\[
r_{s} = (x_{a_s}-\bar{x}_\pb)^\top \theta^\star + \eta_s,
\]
which corresponds to a linear model with independent variables (covariates) \(\{x_{a_s} - \bar{x}_\pb\}_{s=1}^t\) and responses \(\{r_s\}_{s=1}^t\).
However, our problem is more challenging since we do not know the exact shift (so we do not know this is actually a linear model), nor do we know \(\{x_{a_s} - \bar{x}_\pb\}\) are the covariates of the linear model. 

In this case, the second-moment matrix of the linear model's covariates is
\[
\EE\bigl[(x_{a_s} - \bar{x}_\pb)(x_{a_s} - \bar{x}_\pb)^\top\bigr] 
= \bSigma_\pb.
\]
Thus, the known lower bound for the estimation error at \(z\) for this linear model is \(\frac{\sqrt{L}}{\sqrt{t}}\).
This result is widely known, but we include a proof reference for completeness.
We can interpret this as a transfer learning problem in a linear model, where the source distribution has a second-moment matrix \(\bSigma_\pb\), while the target distribution is a Dirac distribution at \(z\).
By applying Theorem 3.2 of \citet{ge2023maximum}, we obtain the desired result directly.

\hfill \(\BlackBox\)

\noindent
We now present the lemma and its proof used in the proof of Theorem~\ref{theorem; dimension optimal error bound}.

\begin{lemma}\label{lemma; covariance concentration}
For any \(\lambda > 0\),
\begin{align*}
\bigl\| (\bSigma + \lambda \Ib)^{-\frac{1}{2}} \bigl(\widehat{\bSigma}_t -\bSigma\bigr)   (\bSigma + \lambda \Ib)^{-\frac{1}{2}}\bigr\|_{\op} 
\lesssim \sqrt{\frac{M\log\bigl(\frac{d}{\delta}\bigr)}{t}}
\end{align*}
holds with probability at least \(1-\delta\).
\end{lemma}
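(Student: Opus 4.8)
\noindent
The plan is to recognize the target quantity as a normalized sum of i.i.d.\ mean-zero symmetric matrices and to control it with a matrix Bernstein inequality. Since the policy $\pb$ is held fixed throughout this section, the actions $a_1,\dots,a_t$ are i.i.d.\ draws from $\operatorname{Multinomial}(1,\pb)$, so the centered features $\tilde{x}_{a_s}=x_{a_s}-\bar{x}$ are i.i.d., mean-zero, with $\EE[\tilde{x}_{a_s}\tilde{x}_{a_s}^\top]=\bSigma$. First I would introduce the whitened vectors $u_s:=(\bSigma+\lambda\Ib)^{-1/2}\tilde{x}_{a_s}$ and the summands
\[
Z_s := (\bSigma+\lambda\Ib)^{-1/2}\bigl(\tilde{x}_{a_s}\tilde{x}_{a_s}^\top-\bSigma\bigr)(\bSigma+\lambda\Ib)^{-1/2}=u_su_s^\top-\bar{\bSigma},
\]
where $\bar{\bSigma}:=(\bSigma+\lambda\Ib)^{-1/2}\bSigma(\bSigma+\lambda\Ib)^{-1/2}=\EE[u_su_s^\top]\preceq\Ib$, so that $\EE[Z_s]=0$ and the matrix in the statement is exactly $\frac{1}{t}\sum_{s=1}^t Z_s$.

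\noindent
Next I would assemble the two inputs required by matrix Bernstein. For the uniform operator-norm bound, I would use that $\lambda\mapsto\tilde{x}^\top(\bSigma+\lambda\Ib)^{-1}\tilde{x}$ is nonincreasing to obtain $\|u_s\|_2^2=\tilde{x}_{a_s}^\top(\bSigma+\lambda\Ib)^{-1}\tilde{x}_{a_s}\le\|\tilde{x}_{a_s}\|_{\bSigma^{-1}}^2\le M$ (invoking the extended matrix-inverse norm of Section~\ref{subsection; notations} together with the definition of $M$), which gives $\|Z_s\|_{\op}\le\|u_s\|_2^2+\|\bar{\bSigma}\|_{\op}\le M+1$. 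For the matrix-variance parameter, since $\EE[Z_s^2]\succeq0$, I would compute
\[
\EE[Z_s^2]=\EE\bigl[\|u_s\|_2^2\,u_su_s^\top\bigr]-\bar{\bSigma}^2\preceq\EE\bigl[\|u_s\|_2^2\,u_su_s^\top\bigr]\preceq M\,\EE[u_su_s^\top]\preceq M\Ib,
\]
whence $\bigl\|\sum_{s=1}^t\EE[Z_s^2]\bigr\|_{\op}\le tM$.

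\noindent
Finally I would apply the matrix Bernstein inequality (Tropp) in dimension $d$ with operator-norm bound $M+1$ and variance proxy $tM$, yielding, with probability at least $1-\delta$, $\bigl\|\sum_{s=1}^t Z_s\bigr\|_{\op}\lesssim\sqrt{tM\log(d/\delta)}+M\log(d/\delta)$. Dividing by $t$ produces $\sqrt{M\log(d/\delta)/t}+M\log(d/\delta)/t$; in the operative sample regime $t\gtrsim M\log(d/\delta)$, which is consistent with the sample-size conditions underlying Lemma~\ref{lemma; second moment concentration}, the square-root term dominates and the stated bound follows. Should one prefer not to lean on the fixed-policy i.i.d.\ structure, a matrix Freedman (martingale) inequality could be substituted, as $\{Z_s\}$ forms a martingale-difference sequence with the same operator-norm and conditional-variance controls.

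\noindent
The step I expect to be the main obstacle is the matrix-variance computation: reducing $\EE[\|u_s\|_2^2\,u_su_s^\top]$ to $M\,\EE[u_su_s^\top]$ hinges on the almost-sure bound $\|u_s\|_2^2\le M$, which itself rests on the extended matrix-inverse norm and the monotonicity argument above (including the observation that $\tilde{x}_{a_s}$ lies in $\operatorname{span}(\bSigma)$, so the limiting norm is finite). One must also verify that the additive $M\log(d/\delta)/t$ term is genuinely dominated by the leading square-root term in the regime where the lemma is invoked, so that the conclusion may be stated purely as $\sqrt{M\log(d/\delta)/t}$.
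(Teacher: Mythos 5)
Your proposal is correct and follows essentially the same route as the paper: the paper's proof also whitens the centered features via $\bm x_s = (\bSigma+\lambda\Ib)^{-\frac{1}{2}}\tilde{x}_{a_s}$ and applies a packaged matrix-Bernstein-type bound (Lemma~\ref{lemma; operator covariance concentration}, quoted from \citet{wang2023pseudo}) with exactly your parameters $S^2=M$, variance proxy $1$, and effective dimension $d$; you merely unpack the variance computation and invoke Tropp's inequality directly. Your remark that the additive $M\log(d/\delta)/t$ term must be absorbed in the regime $t\gtrsim M\log(d/\delta)$ is a point the paper's own proof glosses over, so you are if anything slightly more careful.
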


\begin{proof}
We apply Lemma~\ref{lemma; operator covariance concentration} to \(\bm x_s = (\bSigma + \lambda \Ib)^{-\frac{1}{2}} \tilde{x}_{a_s}\).
Then \(\|\bm x_s\|_2 \leq \sqrt{M}\), so we set \(S \leftarrow \sqrt{M}\).
Next, we set \(v = 1\) in the lemma because 
\(\|\EE[\bm x_s \bm x_s^\top]\|_{\op} =  \|(\bSigma + \lambda \Ib)^{-\frac{1}{2}} \bSigma (\bSigma + \lambda \Ib)^{-\frac{1}{2}}\|_{\op} \leq 1.\)
Finally, we set \(r = d\) because
\(\tr(\EE[\bm x_s \bm x_s^\top]) 
= \tr\bigl((\bSigma + \lambda \Ib)^{-\frac{1}{2}} \bSigma (\bSigma + \lambda \Ib)^{-\frac{1}{2}}\bigr) 
\leq \tr\bigl((\bSigma + \lambda \Ib)^{-\frac{1}{2}} (\bSigma + \lambda \Ib)(\bSigma + \lambda \Ib)^{-\frac{1}{2}}\bigr) 
= d.\)
\end{proof}

\section{Proof of Corollary~\ref{corollary; dimension optimal sample complexity}}\label{section; proof sample complexity}
\noindent
Assume that we use the greedy policy with the estimator \(\hat{\theta}_t\), i.e., \(a_t = \arg \max_{i \in [K]} x_i^\top \hat{\theta}_t\) and it is equivalent to \(a_t = \arg \max_{i \in [K]} \bigl(x_i -x_1 \bigr)^\top \hat{\theta}_t\).
Suppose \(\max_{i \in [K]}\lvert (x_i -x_1)^\top (\hat{\theta}_t-\theta^\star)\rvert \leq \frac{\varepsilon}{2}\) holds.
Then
\begin{align*}
& \bigl(x_{a_t} -x_1\bigr)^\top \hat{\theta}_t \geq \bigl(x_{a^\star} -x_1\bigr)^\top \hat{\theta}_t,\\
& \bigl(x_{a_t} -x_1\bigr)^\top \theta^\star \leq \bigl(x_{a^\star} -x_1\bigr)^\top \theta^\star,
\end{align*}
holds and it leads
\begin{align*}
\bigl(x_{a^\star} -x_{a_t}\bigr)^\top \theta^\star \leq \varepsilon.
\end{align*}
This implies that our policy is \(\varepsilon\)-PAC.

Hence, to achieve the \((\varepsilon, \delta)\)-PAC property, by Theorem~\ref{theorem; dimension optimal error bound}, it suffices to ensure
\[
C_1\Bigl(\frac{\sqrt{ d\log\bigl(\frac{Kt}{\delta}\bigr)}}{\sqrt{t}} 
+ \frac{d^{\frac{3}{2}} \log\bigl(\frac{dK}{\delta}\bigr)}{t}\Bigr)
< \frac{\varepsilon}{2}.
\]
The term $K$ appears to guarantee concentrations over all $\cX- x_1$.
If 
\begin{align}
&C_1 \frac{\sqrt{  d \log\bigl(\frac{Kt}{\delta}\bigr)}}{\sqrt{t}} < \frac{\varepsilon}{4}, \label{equation; Cor 5 1}\\
&C_1 \frac{d\log\bigl(\frac{dK}{\delta}\bigr)}{t}  < \frac{\varepsilon}{4}, \label{equation; Cor 5 2}
\end{align}
then the above relation also holds. 
The first inequality \eqref{equation; Cor 5 1} suffices if
\[
C_1\frac{\sqrt{  d\log t}}{\sqrt{t}} < \frac{\varepsilon}{8}
\quad\text{and}\quad 
C_1\frac{\sqrt{d \log\bigl(\frac{K}{\delta}\bigr)}}{\sqrt{t}} < \frac{\varepsilon}{8}.
\]
A sufficient condition is
\begin{align*}
&\frac{t}{\log t} > c\frac{d}{\varepsilon^2} \quad 
\text{and} \quad
t >  c\frac{d\log\bigl(\frac{K}{\delta}\bigr)}{\varepsilon^2},\\
&\Longleftarrow \quad
t \gtrsim \frac{d\log\bigl(\frac{d}{\varepsilon}\bigr)}{\varepsilon^2} 
+ \frac{d\log\bigl(\frac{K}{\delta}\bigr)}{\varepsilon^2}
 \\
& \Longleftarrow \quad
t \gtrsim \frac{d\log\bigl(\frac{dK}{\varepsilon \delta}\bigr)}{\varepsilon^2}.
\end{align*}
The second inequality \eqref{equation; Cor 5 2} is equivalent to 
\[
t \gtrsim \frac{d^{\frac{3}{2}}\log\bigl(\frac{dK}{\delta}\bigr)}{\varepsilon}.
\]
Hence, if
\[
t \gtrsim \frac{d\log\bigl(\frac{dK}{\varepsilon \delta}\bigr)}{\varepsilon^2} 
+ \frac{d^{\frac{3}{2}}\log\bigl(\frac{dK}{\delta}\bigr)}{\varepsilon},
\]
we obtain the desired \(\varepsilon\)-PAC property with probability at least \(1-\delta\).

\section{Proofs for Section~\ref{section; low regret algorithm}}\label{section; proof main algorithm}
\noindent
We define the end-time of phase \(\ell\) as \(t_{\ell}\).
Recall that we denote the estimator computed at the end of phase \(\ell\) by \(\hat{\theta}_{(\ell)}\).
Define \(L_\star := \lceil \log_2(1/\Delta_{\star}) \rceil +1\).
We set \(L_T\) to be the last phase until time \(T\).
Note that \(L_T, L_\star\) are deterministic values.
Recall that $\cA_\ell(1)$ is defined as the arm of $\cA_\ell$ with smallest index. 

\subsection{Framework for Regret Analysis}

\begin{lemma}[Good event]\label{lemma; key 1}
With probability at least \(1-\delta\),
\[
\bigl|x^\top\bigl(\hat{\theta}_{(\ell)}-\theta^\star\bigr)\bigr| \leq \frac{\varepsilon_{\ell}}{2}
\]
for any \(x \in \cA_\ell -\cA_\ell(1)\) and any phase \(1 \leq \ell \leq L_T\).
We call this event \(\Ecr_T\).
\end{lemma}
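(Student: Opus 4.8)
The plan is to control the estimation error of each surviving arm in each phase by invoking Theorem~\ref{theorem; dimension optimal error bound}, and then to glue these per-arm, per-phase guarantees together with a union bound whose failure budget is allocated via the telescoping identity $\sum_{\ell \ge 1}\frac{1}{\ell(\ell+1)} = 1$. Concretely, for a fixed phase $\ell$ I would assign failure probability $\frac{\delta}{K\ell(\ell+1)}$ to each of the at most $K$ arms, so that a union bound within phase $\ell$ costs $\frac{\delta}{\ell(\ell+1)}$ and summing over all phases $\ell \ge 1$ costs at most $\delta$ (so the infinite union is already safe, and a fortiori the union over $1 \le \ell \le L_T$). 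This is precisely why the quantities $n_\ell$ and $\beta_{(\ell)}$ carry the factor $\ell(\ell+1)$: the log-arguments $\frac{dK\ell(\ell+1)}{\delta}$ appearing in \cref{equation; n_ell} are exactly $\frac{d}{\delta'}$ for $\delta' = \frac{\delta}{K\ell(\ell+1)}$.

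The subtle point, and the main obstacle, is that Theorem~\ref{theorem; dimension optimal error bound} assumes a \emph{fixed}, data-independent policy, whereas both the surviving set $\cA_\ell$ and the induced design $\pb^{\des}_\ell$ depend on the data collected in earlier phases. I would resolve this by conditioning on the history at the start of phase $\ell$. Because Algorithm~\ref{algorithm; PE} resets the Gram matrix $\Bb$ and the response accumulator $\bb$ at the end of every phase, the estimator $\hat{\theta}_{(\ell)}$ is computed solely from the $n_\ell$ \emph{fresh} samples drawn within phase $\ell$. Conditionally on the start-of-phase history, the set $\cA_\ell$, the reference arm $\cA_\ell(1)$, and the design $\pb^{\des}_\ell$ are all deterministic, so the in-phase samples are generated by a genuinely fixed policy, and Theorem~\ref{theorem; dimension optimal error bound} applies verbatim under this conditioning.

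With the conditioning in place, I set $z = x - \cA_\ell(1)$ for each $x \in \cA_\ell$. Theorem~\ref{theorem; optimal variance design}, applied with the role of $x_1$ played by $\cA_\ell(1)$, yields $L := \|z\|^2_{\bSigma_{\des}^{-1}} \le 4d$ and $M := \max_i \|x_i - \bar{x}_{\des}\|^2_{\bSigma_{\des}^{-1}} \le 16d$. Feeding $L \lesssim d$ and $M \lesssim d$ into Theorem~\ref{theorem; dimension optimal error bound} with sample count $n_\ell$ and failure probability $\delta' = \frac{\delta}{K\ell(\ell+1)}$ gives
\[
\bigl|z^\top(\hat{\theta}_{(\ell)}-\theta^\star)\bigr| \lesssim \frac{\sqrt{d\log(n_\ell/\delta')}}{\sqrt{n_\ell}} + \frac{d^{3/2}\log(d/\delta')}{n_\ell}.
\]

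The final step is purely the calculation already carried out in Corollary~\ref{corollary; dimension optimal sample complexity}: the choice of $n_\ell$ in \cref{equation; n_ell} is $4C_2$ times the sample count $\bm\tau(\varepsilon_\ell,\delta')$ shown there to drive the right-hand side below $\varepsilon_\ell$, and the extra constant factor (together with the $\tfrac{\varepsilon}{4}$ splitting of the two error terms used in that proof) is enough to sharpen the bound to $\tfrac{\varepsilon_\ell}{2}$. Intersecting the resulting per-arm, per-phase good events over all phases and all $x \in \cA_\ell$ then produces the event $\Ecr_T$ with probability at least $1-\delta$, as claimed.
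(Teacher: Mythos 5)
Your proposal is correct and follows essentially the same route as the paper: the paper's proof simply invokes Corollary~\ref{corollary; dimension optimal sample complexity} within each phase at confidence level $\frac{\delta}{\ell(\ell+1)}$ (that corollary already internalizes the per-arm union bound over $K$ arms and the application of Theorems~\ref{theorem; optimal variance design} and~\ref{theorem; dimension optimal error bound}) and then union-bounds over phases using $\sum_{\ell\ge 1}\frac{1}{\ell(\ell+1)}=1$, exactly as you do. Your explicit conditioning on the start-of-phase history to justify treating $\pb^{\des}_\ell$ as a fixed design — enabled by the reset of $\Bb$ and $\bb$ at each phase boundary — is a step the paper leaves implicit, and you handle it correctly.
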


\begin{proof}
By Corollary~\ref{corollary; dimension optimal sample complexity}, for any phase \(\ell\),
\[
\PP\Bigl[\exists \; x \in \cA_\ell -\cA_\ell(1)  \, \text{ s.t. } \, \bigl\lvert x^\top\bigl(\hat{\theta}_{(\ell)}-\theta^\star\bigr)\bigr\rvert 
>\frac{ \varepsilon_{\ell}}{2}\Bigr] 
\leq \frac{\delta}{\ell(\ell+1)}.
\]
By taking the union bound over all \(\ell\), we obtain the desired result.
\end{proof}

\begin{lemma}\label{lemma; key 2}
Under the event \(\Ecr_T\), for each phase \(\ell\), the action set \(\cA_\ell\) contains \(a^\star\).
\end{lemma}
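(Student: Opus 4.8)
The plan is to prove the statement by induction on the phase index $\ell$, showing that under $\Ecr_T$ the optimal arm $a^\star$ is never eliminated. The base case is immediate: $\cA_1 = [K]$ contains $a^\star$ by definition. For the inductive step I would assume $a^\star \in \cA_\ell$ and argue that $a^\star$ fails the elimination test at the end of phase $\ell$; by the definition of $\cA_{\ell+1}$ this yields $a^\star \in \cA_{\ell+1}$.

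The core computation is to show, for every competitor $x' \in \cA_\ell$, that $(x' - x_{a^\star})^\top \hat{\theta}_{(\ell)} \le \varepsilon_\ell$. I would split this into the true gap and the estimation error,
\[
(x' - x_{a^\star})^\top \hat{\theta}_{(\ell)} = (x' - x_{a^\star})^\top \theta^\star + (x' - x_{a^\star})^\top(\hat{\theta}_{(\ell)} - \theta^\star),
\]
where the first summand is nonpositive since $a^\star$ maximizes $x \mapsto x^\top\theta^\star$, so it suffices to bound the estimation-error term by $\varepsilon_\ell$.

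The key step, and the one I expect to be the main obstacle, is that the good event $\Ecr_T$ (Lemma~\ref{lemma; key 1}) controls only the \emph{centered} errors relative to the reference arm $\cA_\ell(1)$, rather than the raw error $|x^\top(\hat{\theta}_{(\ell)} - \theta^\star)|$. This reflects an intrinsic feature of orthogonalized regression, which identifies reward \emph{differences} rather than absolute rewards. To route around this, I would rewrite $x' - x_{a^\star} = (x' - \cA_\ell(1)) - (x_{a^\star} - \cA_\ell(1))$ and apply Lemma~\ref{lemma; key 1} to each centered vector separately. This is precisely where the induction hypothesis is indispensable: the vector $x_{a^\star} - \cA_\ell(1)$ lies in $\cA_\ell - \cA_\ell(1)$ only because $a^\star \in \cA_\ell$. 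Each centered error is then at most $\varepsilon_\ell/2$ in absolute value, so by the triangle inequality the full estimation-error term is at most $\varepsilon_\ell$.

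Combining the two bounds gives $(x' - x_{a^\star})^\top \hat{\theta}_{(\ell)} \le \varepsilon_\ell$ for every $x' \in \cA_\ell$, hence $\max_{x' \in \cA_\ell}(x')^\top\hat{\theta}_{(\ell)} - x_{a^\star}^\top\hat{\theta}_{(\ell)} \le \varepsilon_\ell$. Since the elimination rule discards an arm only when this margin is \emph{strictly} greater than $\varepsilon_\ell$, the arm $a^\star$ survives phase $\ell$, completing the induction and hence the proof.
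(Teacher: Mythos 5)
Your proof is correct and uses essentially the same argument as the paper: decompose relative to the reference arm $\cA_\ell(1)$, apply Lemma~\ref{lemma; key 1} to each centered vector, and combine with the fact that $a^\star$ maximizes the true reward. The only difference is framing---the paper argues by contradiction (supposing $a^\star$ is eliminated at some phase) while you run a direct induction, which has the minor virtue of making explicit that $a^\star \in \cA_\ell$ is needed before Lemma~\ref{lemma; key 1} can be applied to $x_{a^\star} - x_{\cA_\ell(1)}$.
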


\begin{proof}
We prove this by contradiction. 
Suppose \(a^\star\) is eliminated after some phase \(\ell\).
Then there exists some arm \(a'\) such that 
\[
x_{a'}^\top \hat{\theta}_{(\ell)} > x_{a^\star}^\top \hat{\theta}_{(\ell)} + \varepsilon_\ell \; \Leftrightarrow \;
(x_{a'} -x_{\cA_\ell(1)})^\top \hat{\theta}_{(\ell)} > (x_{a^\star} -x_{\cA_\ell(1)})^\top  \hat{\theta}_{(\ell)} + \varepsilon_\ell.
\]
However, under the event \(\Ecr_T\), by Lemma~\ref{lemma; key 1}, we have
\[
\bigl\lvert (x_{a'} -x_{\cA_\ell(1)})^\top  \hat{\theta}_{(\ell)} - (x_{a'} -x_{\cA_\ell(1)})^\top  \theta^\star\bigr\rvert \leq \frac{\varepsilon_\ell}{2}, 
\;\,
\bigl\lvert (x_{a^\star} -x_{\cA_\ell(1)})^\top  \hat{\theta}_{(\ell)} - (x_{a^\star} -x_{\cA_\ell(1)})^\top \theta^\star\bigr\rvert \leq \frac{\varepsilon_\ell}{2}
\]
and 
\[(x_{a^\star} -x_{\cA_\ell(1)})^\top  \theta^\star > (x_{a^\prime} -x_{\cA_\ell(1)})^\top  \theta^\star\]
which leads a contradiction.
\end{proof} 

\begin{lemma}\label{lemma; key 3}
    Under the event $\Ecr_T$, any arm $a$ in $\cA_\ell$ satisfies 
    \begin{align*}
     x_a^\top \theta^\star >    x_{a^\star}^\top \theta^\star -4\varepsilon_{\ell}.
    \end{align*}
\end{lemma}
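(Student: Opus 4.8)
The plan is to separate the trivial base phase $\ell=1$ from the generic phase $\ell\ge 2$, and in the latter to exploit that any arm in $\cA_\ell$ was \emph{not} eliminated during phase $\ell-1$. For $\ell=1$ we have $\cA_1=[K]$ and $4\varepsilon_1=2$; since $\|x_i\|_2\le 1$ and $\|\theta^\star\|_2\le 1$ by Assumption~\ref{assumption; boundedness}, Cauchy--Schwarz gives $x_{a^\star}^\top\theta^\star - x_a^\top\theta^\star \le \|x_{a^\star}\|_2\|\theta^\star\|_2 + \|x_a\|_2\|\theta^\star\|_2 \le 2 = 4\varepsilon_1$ for every arm $a$, which settles this case. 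So from now on I fix a phase $\ell\ge 2$ and an arm $a\in\cA_\ell$, and work under $\Ecr_T$.

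First I would use the elimination rule at phase $\ell-1$. Since $a\in\cA_\ell$, arm $a$ survived that step, i.e.
\[
\max_{x'\in\cA_{\ell-1}}(x')^\top\hat{\theta}_{(\ell-1)} - x_a^\top\hat{\theta}_{(\ell-1)} \le \varepsilon_{\ell-1}.
\]
By Lemma~\ref{lemma; key 2}, under $\Ecr_T$ we have $a^\star\in\cA_{\ell-1}$, so $x_{a^\star}^\top\hat{\theta}_{(\ell-1)}\le \max_{x'\in\cA_{\ell-1}}(x')^\top\hat{\theta}_{(\ell-1)}$, and therefore
\[
x_{a^\star}^\top\hat{\theta}_{(\ell-1)} - x_a^\top\hat{\theta}_{(\ell-1)} \le \varepsilon_{\ell-1}.
\]

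Next I would transfer this estimate on $\hat{\theta}_{(\ell-1)}$ to the true parameter $\theta^\star$ via the good event. Write $c:=\cA_{\ell-1}(1)$ for the reference arm used by \algoa\ in phase $\ell-1$. Since both $a,a^\star\in\cA_{\ell-1}$, the differences $x_a-x_c$ and $x_{a^\star}-x_c$ lie in $\cA_{\ell-1}-\cA_{\ell-1}(1)$, so Lemma~\ref{lemma; key 1} gives $|(x_a-x_c)^\top(\hat{\theta}_{(\ell-1)}-\theta^\star)|\le \frac{\varepsilon_{\ell-1}}{2}$ and $|(x_{a^\star}-x_c)^\top(\hat{\theta}_{(\ell-1)}-\theta^\star)|\le \frac{\varepsilon_{\ell-1}}{2}$. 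I then decompose
\[
x_{a^\star}^\top\theta^\star - x_a^\top\theta^\star = (x_{a^\star}-x_a)^\top\hat{\theta}_{(\ell-1)} - (x_{a^\star}-x_c)^\top(\hat{\theta}_{(\ell-1)}-\theta^\star) + (x_a-x_c)^\top(\hat{\theta}_{(\ell-1)}-\theta^\star),
\]
where the reference arm $c$ cancels in the first term. Bounding the three pieces by $\varepsilon_{\ell-1}$, $\frac{\varepsilon_{\ell-1}}{2}$, and $\frac{\varepsilon_{\ell-1}}{2}$ respectively yields $x_{a^\star}^\top\theta^\star - x_a^\top\theta^\star \le 2\varepsilon_{\ell-1} = 4\varepsilon_\ell$, using $\varepsilon_{\ell-1}=2\varepsilon_\ell$, which is the claim.

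The only real subtlety is the centering: Lemma~\ref{lemma; key 1} controls errors only for centered differences $x-\cA_{\ell-1}(1)$, so I must introduce the reference arm $c$ and rely on its cancellation in $(x_{a^\star}-x_a)^\top\hat{\theta}_{(\ell-1)}$—this bookkeeping (together with invoking Lemma~\ref{lemma; key 2} at phase $\ell-1$ rather than $\ell$) is where a careless argument would slip. A purely cosmetic point is strict versus non-strict inequality: the computation delivers $x_a^\top\theta^\star \ge x_{a^\star}^\top\theta^\star - 4\varepsilon_\ell$, and the strict form in the statement follows from the slack already present in the constant $4$.
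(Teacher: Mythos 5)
Your proposal is correct and follows essentially the same route as the paper's proof: use survival of $a$ at phase $\ell-1$ together with $a^\star\in\cA_{\ell-1}$ (Lemma~\ref{lemma; key 2}), then transfer from $\hat{\theta}_{(\ell-1)}$ to $\theta^\star$ via the centered error bounds of Lemma~\ref{lemma; key 1} on $x_a-x_{\cA_{\ell-1}(1)}$ and $x_{a^\star}-x_{\cA_{\ell-1}(1)}$, giving $2\varepsilon_{\ell-1}=4\varepsilon_\ell$. Your explicit treatment of the base case $\ell=1$ and your remark on strict versus non-strict inequality are minor refinements the paper glosses over, but they do not change the argument.
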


\begin{proof}
Since $a^\star \in \cA_{\ell-1}$ and arm $a$ is not eliminated, we have 
\begin{align*}
      &\quad \; x_a^\top \hat \theta_{(\ell-1)} >   x_{a^\star}^\top \hat \theta_{(\ell-1)} -\varepsilon_{\ell-1} \\
      &\Leftrightarrow (x_a -x_{\cA_{\ell-1}(1)})^\top  \hat \theta_{(\ell-1)} >   (x_{a^\star} -x_{\cA_{\ell-1}(1)})^\top \hat \theta_{(\ell-1)} -\varepsilon_{\ell-1}.
\end{align*}
Under the event $\Ecr_T$, by Lemma~\ref{lemma; key 1}, we get 
\begin{align*}
 &\bigl\lvert (x_{a^\prime} -x_{\cA_{\ell-1}(1)})^\top  \hat{\theta}_{(\ell-1)} - (x_{a^\prime} -x_{\cA_{\ell-1}(1)})^\top  \theta^\star\bigr\rvert \leq \frac{\varepsilon_{\ell-1}}{2}  \\
& \bigl\lvert (x_{a^\star} -x_{\cA_{\ell-1}(1)})^\top  \hat{\theta}_{(\ell-1)} - (x_{a^\star} -x_{\cA_{\ell-1}(1)})^\top  \theta^\star\bigr\rvert \leq \frac{\varepsilon_{\ell-1}}{2},
\end{align*}
and it leads 
\begin{align*}
     (x_a -x_{\cA_{\ell-1}(1)})^\top  \theta^\star >  (x_{a^\star} -x_{\cA_{\ell-1}(1)})^\top  \theta^\star -2\varepsilon_{\ell-1}.
\end{align*}
which is equivalent to 
\begin{align*}
     x_a^\top \theta^\star >   x_{a^\star}^\top \theta^\star -2\varepsilon_{\ell-1}.
\end{align*}
\end{proof}

\begin{lemma}[Regret formula without gap]\label{lemma; regret formula without gap}
Under the event \(\Ecr_T\), the regret is bounded by
\[
\Regret(T) \lesssim d2^{L_T}\bigl(\log \bigl(\frac{dKL_T}{\delta}\bigr)+ L_T\bigr) 
+ d^{\frac{3}{2}}\log\bigl(\frac{dKL_T}{\delta}\bigr) L_T.
\]
\end{lemma}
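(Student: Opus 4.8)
The plan is to carry out the standard phase-elimination regret decomposition, using Lemma~\ref{lemma; key 3} to control the instantaneous regret within each phase and then summing the resulting geometric-type series. First I would split the cumulative regret by phase. Under the good event $\Ecr_T$, Lemma~\ref{lemma; key 2} guarantees $a^\star \in \cA_\ell$ for every phase, so whenever $|\cA_\ell| = 1$ the surviving arm must be $a^\star$ and those rounds contribute zero regret. Hence it suffices to sum the regret over the elimination phases $\ell = 1, \dots, L_T$, and I only need a clean per-phase bound.

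Next I would bound the instantaneous regret of any pull during phase $\ell$. By Lemma~\ref{lemma; key 3}, every surviving arm $a \in \cA_\ell$ satisfies $x_{a^\star}^\top \theta^\star - x_a^\top \theta^\star < 4\varepsilon_\ell$; for the base phase $\ell = 1$ this holds trivially from Assumption~\ref{assumption; boundedness}, since the gap is at most $2 = 4\varepsilon_1$. Because phase $\ell$ consists of at most $n_\ell$ pulls, the regret accrued in phase $\ell$ is at most $4 n_\ell \varepsilon_\ell$, giving
\[
\Regret(T) \le \sum_{\ell=1}^{L_T} 4 n_\ell \varepsilon_\ell.
\]

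I would then substitute the definition of $n_\ell$ from \cref{equation; n_ell}, absorbing the ceiling into an absolute constant so that $n_\ell \lesssim \frac{d}{\varepsilon_\ell^2}\log\bigl(\frac{dK\ell(\ell+1)}{\delta\varepsilon_\ell}\bigr) + \frac{d^{3/2}}{\varepsilon_\ell}\log\bigl(\frac{dK\ell(\ell+1)}{\delta}\bigr)$. Multiplying by $4\varepsilon_\ell$ yields a per-phase contribution of order $\frac{d}{\varepsilon_\ell}\log\bigl(\frac{dK\ell(\ell+1)}{\delta\varepsilon_\ell}\bigr) + d^{3/2}\log\bigl(\frac{dK\ell(\ell+1)}{\delta}\bigr)$. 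The final step is to sum these two pieces over $\ell$. For the $d^{3/2}$ piece I would bound each of the $L_T$ logarithmic factors by its value at $\ell = L_T$, namely $\log\bigl(\frac{dKL_T}{\delta}\bigr)$, producing the term $d^{3/2}\log\bigl(\frac{dKL_T}{\delta}\bigr) L_T$. For the first piece I would use $1/\varepsilon_\ell = 2^\ell$ and the geometric sum $\sum_{\ell=1}^{L_T} 2^\ell \lesssim 2^{L_T}$, while bounding the logarithm by its value at the last phase.

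The main obstacle is precisely this last bookkeeping of logarithmic factors: because the logarithm in the first piece contains $1/\varepsilon_\ell = 2^\ell$, one has $\log\bigl(\frac{dK\ell(\ell+1)}{\delta\varepsilon_\ell}\bigr) \lesssim \log\bigl(\frac{dKL_T}{\delta}\bigr) + L_T$, and the $\log(1/\varepsilon_\ell) = \ell \log 2$ term grows linearly in $\ell$ and therefore cannot be folded into the other logarithm. This is exactly what splits off the additive $+L_T$ inside the first term of the claimed bound and gives $d 2^{L_T}\bigl(\log\bigl(\frac{dKL_T}{\delta}\bigr) + L_T\bigr)$. Everything else — the phase decomposition, the per-round regret bound, and the geometric summation — is routine once Lemmas~\ref{lemma; key 2} and~\ref{lemma; key 3} are in hand.
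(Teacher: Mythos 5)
Your proposal is correct and follows essentially the same route as the paper: bound the per-round regret in phase $\ell$ by $4\varepsilon_\ell$ via Lemma~\ref{lemma; key 3}, sum $n_\ell\varepsilon_\ell$ over phases, and split the logarithm as $\log\bigl(\frac{dK\ell(\ell+1)}{\delta\varepsilon_\ell}\bigr)\lesssim\log\bigl(\frac{dKL_T}{\delta}\bigr)+\ell$ before the geometric summation, which is exactly where the additive $L_T$ inside the first term comes from. Your added remarks (zero regret once $|\cA_\ell|=1$ under $\Ecr_T$, and the trivial $\ell=1$ case of the gap bound) are correct refinements that the paper's proof leaves implicit.
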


\begin{proof}
Under the event \(\Ecr_T\), by Lemma~\ref{lemma; key 3}, we have
\[
\Regret(T) \lesssim \sum_{\ell=1}^{L_T} n_{\ell}\varepsilon_{\ell}
\lesssim\sum_{\ell=1}^{L_T}\Bigl(\frac{d\log\bigl(\frac{dK\ell}{\delta \varepsilon_\ell}\bigr)}{\varepsilon_\ell^2} +\frac{d^{\frac{3}{2}}}{\varepsilon_\ell}\log\bigl(\frac{dK \ell}{\delta}\bigr)\Bigr)\varepsilon_\ell.
\]
Hence,
\begin{align*}
\Regret(T) 
&\lesssim\sum_{\ell=1}^{L_T} \frac{d\log\bigl(\frac{dKL_T}{\delta \varepsilon_\ell}\bigr)}{\varepsilon_\ell} 
+ d^{\frac{3}{2}}\log\bigl(\frac{dK \ell}{\delta}\bigr)\\
&\lesssim\sum_{\ell=1}^{L_T} \bigl(d\log\bigl(\frac{dKL_T}{\delta}\bigr) + d\log\frac{1}{\varepsilon_\ell}\bigr)\frac{1}{\varepsilon_\ell} 
+ d^{\frac{3}{2}}\log\bigl(\frac{dKL_T}{\delta}\bigr).
\end{align*}
Noting that \(\log(\frac{1}{\varepsilon_\ell}) \leq \ell\), the above is
\begin{align*}
\Regret(T)
&\lesssim\sum_{\ell=1}^{L_T} \frac{d\log\bigl(\frac{dKL_T}{\delta}\bigr) + d\ell}{\varepsilon_\ell}
\,\, +   L_Td^{\frac{3}{2}}\log\bigl(\frac{dKL_T}{\delta}\bigr)\\
&\lesssim d2^{L_T}\Bigl(\log\bigl(\frac{dKL_T}{\delta}\bigr) + L_T\Bigr) 
+ d^{\frac{3}{2}}\log\bigl(\frac{dK L_T}{\delta}\bigr) L_T.  
\end{align*}
\end{proof}

\begin{lemma}\label{lemma; upper bound phase}
The total number of phases \(L_T\) is bounded by 
\[
2^{L_T} \leq c\sqrt{\frac{T}{d\log\bigl(\frac{dK}{\delta}\bigr)}}.
\]
\end{lemma}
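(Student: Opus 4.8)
The plan is to exploit the sampling budget: since the algorithm actually enters phase $L_T$, all of the fully completed earlier phases must fit inside the horizon, i.e.
\[
\sum_{\ell=1}^{L_T-1} n_\ell \leq T.
\]
In particular, dropping all but the single largest surviving term, we have $n_{L_T-1} \leq T$. This reduces the claim to a lower bound on $n_{L_T-1}$ in terms of $4^{L_T}$, after which taking a square root will produce the stated $2^{L_T}$ bound. (For the degenerate case $L_T=1$ the inequality holds trivially for any $T \gtrsim d\log(dK/\delta)$, so I would dispatch that separately and assume $L_T \geq 2$ below.)

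Next I would lower-bound $n_{L_T-1}$ using only its leading term. From the definition in \cref{equation; n_ell} and $\varepsilon_\ell = 2^{-\ell}$,
\[
n_{\ell} \;\geq\; 4C_2\,\frac{d}{\varepsilon_\ell^2}\log\Bigl(\frac{dK\ell(\ell+1)}{\delta\varepsilon_\ell}\Bigr)
\;\geq\; 4C_2\, d\, 4^{\ell}\,\log\Bigl(\frac{dK}{\delta}\Bigr),
\]
where the last inequality uses $1/\varepsilon_\ell^2 = 4^\ell$ together with $\ell(\ell+1)\geq 1$ and $1/\varepsilon_\ell \geq 1$, so that the logarithm's argument is at least $dK/\delta$. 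Specializing to $\ell = L_T-1$ gives $n_{L_T-1} \geq 4C_2\, d\, 4^{L_T-1}\log\bigl(\frac{dK}{\delta}\bigr)$.

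Combining the two displays yields $4C_2\, d\, 4^{L_T-1}\log\bigl(\frac{dK}{\delta}\bigr) \leq T$, hence
\[
4^{L_T} \;=\; 4\cdot 4^{L_T-1} \;\leq\; \frac{T}{C_2\, d\,\log\bigl(\frac{dK}{\delta}\bigr)}.
\]
Taking square roots then gives $2^{L_T} \leq \tfrac{1}{\sqrt{C_2}}\sqrt{T/(d\log(dK/\delta))}$, which is the claimed bound with $c = 1/\sqrt{C_2}$ (absorbed into the absolute constant).

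I do not expect a genuine obstacle here, as the argument is essentially bookkeeping around the phase-length schedule; the only points requiring care are (i) justifying the budget inequality $\sum_{\ell<L_T} n_\ell \leq T$ from the definition of $L_T$ as the last entered phase, and (ii) verifying that it suffices to keep only the quadratic-in-$1/\varepsilon_\ell$ term and to crudely lower-bound the logarithm by $\log(dK/\delta)$ rather than tracking the $\log(\ell/\varepsilon_\ell)$ factors, which only help. These are routine, so the main work is simply presenting the chain cleanly.
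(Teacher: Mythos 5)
Your argument is correct and matches the paper's proof in essence: both lower-bound the total sampling budget by the phase-length schedule, keep only the dominant $d\,4^{\ell}\log(dK/\delta)$ term (the paper retains the geometric sum $\sum_{\ell\le L_T-1}4^\ell\asymp 4^{L_T}$, you keep just the last summand, which is equivalent up to constants), and take a square root. No gap.
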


\begin{proof}
Observe that 
\[
T 
\gtrsim c\sum_{\ell=1}^{L_T-1} d\log\Bigl(\frac{dK}{\delta}\Bigr)4^\ell
\gtrsim cd\log\Bigl(\frac{dK}{\delta}\Bigr)4^{L_T}.
\]
Hence 
\[
4^{L_T}
\lesssim\frac{T}{d\log\bigl(\frac{dK}{\delta}\bigr)}
\Longrightarrow
2^{L_T}
\lesssim\sqrt{\frac{T}{d\log\bigl(\frac{dK}{\delta}\bigr)}}.
\]
\end{proof}

\subsection{Proof of Theorem~\ref{theorem; regret bound without gap}}
\begin{proof}
Under the event \(\Ecr_T\), by combining Lemma~\labelcref{lemma; key 3,lemma; regret formula without gap} and Lemma~\ref{lemma; upper bound phase}, we get
\begin{align*}
&\Regret(T)\\
&\lesssim d 2^{L_T}(\log (\frac{dKL_T}{\delta })+ L_T) +d^{\frac{3}{2}}\log(\frac{dK L_T}{\delta})L_T \\
&\lesssim d \sqrt{\frac{T}{d\log(\frac{dK}{\delta})}}\left(\log (\frac{dKL_T}{\delta })+  \log(\frac{T}{d\log(\frac{dK}{\delta})})\right) +d^{\frac{3}{2}}\log(\frac{dK L_T}{\delta}) L_T \; \; \text{(by Lemma~\ref{lemma; upper bound phase})}\\
&\stackrel{}{\lesssim} \sqrt{\frac{dT}{\log(\frac{dK}{\delta})}} (\log(\frac{dK}{\delta}) + \log L_T + \log(\frac{T}{d}) ) +d^{\frac{3}{2}}\log(\frac{dKT}{\delta}) L_T\\
&\lesssim \sqrt{\frac{dT}{\log(\frac{dK}{\delta})}} (\log(\frac{K}{\delta}) + \log T + \log \log T)  +d^{\frac{3}{2}}\log(\frac{dKT}{\delta}) \log(T/d)\\
&\lesssim \sqrt{dT \log(\frac{K}{\delta}) } + \sqrt{dT} \log T +d^{\frac{3}{2}}\log(\frac{dKT}{\delta})\log(T/d).
\end{align*}
Since \(\PP\bigl[\Ecr_T\bigr] \geq 1-\delta\) by Lemma~\ref{lemma; key 1}, the result follows.
\end{proof}

\subsection{Proof of Theorem~\ref{theorem; regret bound with gap}}

\begin{proof}
Under the event \(\Ecr_T\), by Lemma~\ref{lemma; key 2}, the optimal arm \(a^\star\) is contained in every phase \(\ell\).
Consider the phase \(\ell\) with \(\ell > L_\star\).
Under the event \(\Ecr_T\), by Lemma~\ref{lemma; key 3}, the optimal arm \(a^\star\) is the unique arm in the phase \(\ell\), i.e. $\cA_\ell = \{ a^\star\}$.
Recall that 
\begin{align*}
    2^{L_\star} \asymp \frac{1}{\Delta_\star} \asymp \frac{1}{\varepsilon_{L_\star}}.
\end{align*}
Using the Lemma~\ref{lemma; key 3}, under the event \(\Ecr_T\), the regret is bounded by
\begin{align*}
\Regret(T) 
&\lesssim \sum_{\ell=1}^{L_\star} n_{\ell}\varepsilon_{\ell-1}
\lesssim \sum_{\ell=1}^{L_\star} n_{\ell}\varepsilon_{\ell}\\
&\lesssim \sum_{\ell=1}^{L_\star}\Bigl(\frac{d\log\bigl(\frac{dK\ell}{\delta \varepsilon_\ell}\bigr)}{\varepsilon_\ell^2} 
+\frac{d^{\frac{3}{2}}}{\varepsilon_\ell}\log\bigl(\frac{dK \ell }{\delta}\bigr)\Bigr)\varepsilon_\ell \\
&\lesssim \sum_{\ell=1}^{L_\star} \frac{d\log\bigl(\frac{dKL_\star}{\delta \varepsilon_\ell}\bigr)}{\varepsilon_\ell} 
+ d^{\frac{3}{2}}\log\bigl(\frac{dK \ell}{\delta}\bigr)\\
&\stackrel{}{\lesssim} d2^{L_\star}\log\Bigl(\frac{dK}{\delta \varepsilon_{L_\star}}\Bigr) + d2^{L_\star}\log L_\star +
d^{\frac{3}{2}} \log\Bigl(\frac{dK L_\star}{\delta}\Bigr) L_\star\\
&\stackrel{}{\lesssim} d2^{L_\star}\log\Bigl(\frac{dK}{\delta\Delta_\star}\Bigr) + d2^{L_\star}\log \log(\frac{1}{\Delta_\star}) +
d^{\frac{3}{2}} \log\Bigl(\frac{dK L_\star}{\delta}\Bigr) L_\star\\
&\lesssim d2^{L_\star}\log\Bigl(\frac{dK}{\delta \Delta_\star}\Bigr) 
+ d^{\frac{3}{2}}\log\Bigl(\frac{dK}{\delta \Delta_\star}\Bigr)L_\star.
\end{align*}
Hence, finally we get
\[
\Regret(T) 
\lesssim \Bigl(\frac{d}{\Delta_\star} + d^{\frac{3}{2}}\Bigr)\log \Bigl(\frac{dK}{\delta\Delta_\star}\Bigr) \log(\frac{1}{\Delta_\star}).
\]
Since \(\PP\bigl[\Ecr_T\bigr] \geq 1-\delta\) by Lemma~\ref{lemma; key 1}, we get the desired result.
\end{proof}

\subsection{Proof of Theorem~\ref{Theorem; Exp of PE}}

\paragraph{Proof of BAI.}
As shown in the proof of Theorem~\ref{theorem; regret bound with gap}, under the event \(\Ecr_T\), the arm \(a_\star\) is the unique remaining arm after phase \(\ell > L_\star\).
Since $\sum_{\ell=1}^{L_\star} n_\ell \asymp n_{L_\star} =  \tilde \cO(\frac{d}{\Delta_\star^2})$, we get the wanted result.

\paragraph{Proof of PAC.}
Under the event \(\Ecr_T\), by Lemma~\ref{lemma; key 3}, in phase \(\ell\), each remaining arm has suboptimality gap smaller than \(4\varepsilon_\ell = \frac{1}{2^{\ell-1}}\). 
Then we obtain the result directly.

\hfill \BlackBox

\section{\(K\)-Independent Results}\label{section; K-independent results}
\noindent
Next, we present a result without \(K\) (action cardinality) dependence. 
When \(\log K \gtrsim d\), i.e., \(K \gg d\), the previously established results may be suboptimal.
First, using Lemma~\ref{lemma; self normalized bound second moment}, we can easily get an estimation error bound and a sample complexity without explicit \(K\)-dependence.

\begin{corollary}[\(K\)-independent estimation error bound]\label{corollary: K-independent error bound}
When we run pure exploration with the policy \(\pb^{\des}\), the estimator at time \(t\) with ridge regularizer $\beta_t = \log(t/\delta)$, named \(\hat{\theta}_t\), satisfies 
\[
\max_{i \in [K]} \bigl\lvert (x_i - x_1)^\top\bigl(\hat{\theta}_t - \theta^\star\bigr)\bigr\rvert 
\lesssim  \frac{d\sqrt{\log\bigl(\frac{t}{\delta}\bigr)}}{\sqrt{t}}.
\]
\end{corollary}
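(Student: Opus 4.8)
The plan is to reduce everything to the Cauchy--Schwarz error bound already recorded in \eqref{equation; cauchy} and then specialize the free direction $z$ to the centered features $x_i - x_1$. Concretely, I would first fix a single high-probability event on which both the Gram-matrix comparability of Lemma~\ref{lemma; second moment concentration} and the self-normalized martingale bound of Lemma~\ref{lemma; self normalized bound second moment} hold; after rescaling $\delta$ by a constant, this event has probability at least $1-\delta$. On this event \eqref{equation; cauchy} holds \emph{simultaneously for every} $z \in \RR^d$, since its right-hand side factorizes into the deterministic quantity $\|z\|_{\bSigma_{\des}^{-1}}$ and the random self-normalized radius $\|\hat{\theta}_t - \theta^\star\|_{\Vb_t + \beta_t \Ib_d} \lesssim \sqrt{d\log(t/\delta)}$, which is controlled uniformly by one application of the self-normalized bound (the translation $\|z\|_{(\Vb_t + \beta_t \Ib_d)^{-1}} = \tfrac{1}{\sqrt{t}}\|z\|_{(\bSigma_{\des}+\lambda_t\Ib_d)^{-1}} \le \tfrac{1}{\sqrt{t}}\|z\|_{\bSigma_{\des}^{-1}}$ via $\Vb_t = t\bSigma_{\des}$ is already subsumed there).

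With this in hand, I would set $z = x_i - x_1$ and invoke the design guarantee of Theorem~\ref{theorem; optimal variance design}, which yields $\|x_i - x_1\|_{\bSigma_{\des}^{-1}} \le 2\sqrt{d}$ for every $i \in [K]$. Substituting into \eqref{equation; cauchy} gives, on the same event and for all $i$,
\[
\bigl|(x_i - x_1)^\top(\hat{\theta}_t - \theta^\star)\bigr| \lesssim \frac{1}{\sqrt{t}}\sqrt{d\log\bigl(\tfrac{t}{\delta}\bigr)} \cdot 2\sqrt{d} \;\asymp\; \frac{d\sqrt{\log(t/\delta)}}{\sqrt{t}}.
\]
Because this bound holds on one event valid for all directions at once, taking the maximum over $i \in [K]$ introduces no additional failure probability and hence no $\log K$ factor, which is precisely the $K$-independence being claimed.

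The step I expect to matter most is not any calculation but the structural observation that the self-normalized confidence radius is uniform over directions: this is what lets us pay a factor $\sqrt{d}$ (from the confidence ellipsoid) rather than $\sqrt{\log K}$ (from a union bound over the $K$ arms, as in the sharper Theorem~\ref{theorem; dimension optimal error bound}-based analysis). The trade-off is that the two $\sqrt{d}$ factors---one from the self-normalized radius and one from the design bound $\|x_i - x_1\|_{\bSigma_{\des}^{-1}} \le 2\sqrt{d}$---compound into the stated rate $d/\sqrt{t}$, which is worse in $d$ but free of $K$, and is therefore preferable exactly in the regime $\log K \gtrsim d$ flagged before the corollary. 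All remaining manipulations are routine and already packaged inside \eqref{equation; cauchy}, so once the uniform self-normalized event is fixed the argument is essentially a one-line substitution.
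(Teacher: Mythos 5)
Your proposal is correct and follows essentially the same route as the paper's own proof: apply the Cauchy--Schwarz bound with the self-normalized radius $\|\hat{\theta}_t-\theta^\star\|_{\bSigma_{\des}+\lambda_t \Ib_d} \lesssim \sqrt{d\log(t/\delta)}/\sqrt{t}$ from Lemma~\ref{lemma; self normalized bound second moment}, then substitute the design guarantee $\|x_i-x_1\|_{\bSigma_{\des}^{-1}} \le 2\sqrt{d}$ from Theorem~\ref{theorem; optimal variance design}. Your remark that uniformity over directions is what removes the $\log K$ factor is exactly the point of this corollary, and the argument needs nothing further.
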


\begin{corollary}[\(K\)-independent PAC bound]\label{corollary; K-independent sample complexity}
For a fixed \(\varepsilon>0\), using the policy \(\pb^{\des}\), we sample from that policy for \(t\) rounds and then exit, as same as Section~\ref{subsection; pure exploration}. 
Then if 
\[
t \geq  C_3 \frac{d^2}{\varepsilon^2} \log\Bigl(\frac{d}{\varepsilon \delta}\Bigr)
=\tilde{\Ocal}\Bigl(\frac{d}{\varepsilon^2}\Bigr),
\]
for some absolute constant \(C_3\), our greedy policy with the estimator \(\hat{\theta}_t\) is \((\varepsilon, \delta)\)-PAC.
\end{corollary}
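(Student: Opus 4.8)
The plan is to reuse the template of the proof of Corollary~\ref{corollary; dimension optimal sample complexity} almost verbatim, substituting the $K$-independent error bound of Corollary~\ref{corollary: K-independent error bound} for the two-term bound of Theorem~\ref{theorem; dimension optimal error bound}. The first half of the argument is a PAC reduction that does not depend on which error bound is used: recall that the greedy output satisfies $a_t = \arg\max_{i\in[K]} x_i^\top \hat{\theta}_t = \arg\max_{i\in[K]} (x_i - x_1)^\top \hat{\theta}_t$, and condition on the good event $\max_{i\in[K]} |(x_i - x_1)^\top(\hat{\theta}_t - \theta^\star)| \leq \varepsilon/2$. Combining the optimality of $a_t$ under $\hat{\theta}_t$ with the optimality of $a^\star$ under $\theta^\star$, both written in the shifted features $x_i - x_1$, the two one-sided estimation errors add to yield $(x_{a^\star} - x_{a_t})^\top \theta^\star \leq \varepsilon$, which is exactly the $\varepsilon$-PAC guarantee. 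This step is identical to Appendix~\ref{section; proof sample complexity} and needs no change.

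It then remains only to choose $t$ large enough that the good event holds with probability at least $1-\delta$. For this I invoke Corollary~\ref{corollary: K-independent error bound}, which gives, with probability at least $1-\delta$,
\[
\max_{i\in[K]} \bigl|(x_i - x_1)^\top(\hat{\theta}_t - \theta^\star)\bigr| \lesssim \frac{d\sqrt{\log(t/\delta)}}{\sqrt{t}}.
\]
Thus it suffices to enforce $d\sqrt{\log(t/\delta)}/\sqrt{t} \lesssim \varepsilon$, equivalently $t/\log(t/\delta) \gtrsim d^2/\varepsilon^2$. Unlike the bound of Theorem~\ref{theorem; dimension optimal error bound}, this has no second $\varepsilon^{-1}$ term, so the final sample complexity will carry only a single leading term.

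The main (and essentially only nonroutine) obstacle is inverting the transcendental inequality $t/\log(t/\delta) \gtrsim d^2/\varepsilon^2$ into an explicit lower bound on $t$. I would handle this exactly as the $\varepsilon^{-2}$ branch of the proof of Corollary~\ref{corollary; dimension optimal sample complexity}: split $\log(t/\delta)$ into a $\log t$ piece and a $\log(1/\delta)$ piece, apply the standard fact that $t/\log t \geq A$ whenever $t \gtrsim A \log A$, and substitute $A \asymp d^2/\varepsilon^2$ so that $\log A \asymp \log(d/\varepsilon)$. This yields the claimed threshold
\[
t \gtrsim \frac{d^2}{\varepsilon^2} \log\Bigl(\frac{d}{\varepsilon \delta}\Bigr),
\]
the factor $d^2$ (rather than the $d$ of the $K$-dependent bound) reflecting the extra $\sqrt{d}$ that the Cauchy--Schwarz-based $K$-independent analysis pays. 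Since the good event already holds with probability $1-\delta$ and no union bound over arms is required, this completes the argument.
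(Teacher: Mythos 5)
Your proposal is correct and matches the paper's own proof, which simply states that the argument of Corollary~\ref{corollary; dimension optimal sample complexity} is reused with the error bound of Corollary~\ref{corollary: K-independent error bound} in place of Theorem~\ref{theorem; dimension optimal error bound}; you have just spelled out the details (the greedy reduction and the inversion of $t/\log(t/\delta)\gtrsim d^2/\varepsilon^2$) that the paper leaves implicit.
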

Corollary~\ref{corollary: K-independent error bound} and Corollary~\ref{corollary; K-independent sample complexity} give the sample complexity to achieve an \(\varepsilon\)-estimation error of the estimator.
This coincides with the known optimal result for linear bandits, without any dependence on \(K\).

\paragraph{Adaptive Low-Regret Algorithm.}
For \(\algob\), there is a sampling part with each arm pulled \(n_\ell\) times.
With a slight modification of the sampling number \(n_\ell\) in our low-regret algorithm \(\algob\), we can obtain an adaptive regret that enjoys 
\[
\min\Bigl(\tilde{\Ocal}\bigl(\sqrt{dT\log K}\bigr), \tilde{\Ocal}\bigl(d\sqrt{T}\bigr)\Bigr)
\]
while maintaining PAC and BAI properties.
This algorithm is exactly the same as \(\algob\), except setting 
\[
n_\ell 
= 4 \min\Bigl( 
C_2\bigl\lceil  \frac{d}{\varepsilon_\ell^2}\log\bigl(\frac{dK\ell(\ell+1)}{\delta\varepsilon_\ell}\bigr) +\frac{d^{\frac{3}{2}}}{\varepsilon_\ell}\log\bigl(\frac{dK \ell (\ell+1)}{\delta}\bigr) \bigr\rceil  ,
C_3\bigl\lceil \frac{d^2}{\varepsilon_\ell^2}\log\bigl(\frac{d\ell(\ell+1)}{\delta\varepsilon_\ell}\bigr)\bigr\rceil
\Bigr).
\]
We then have the following regret bound for this modified algorithm.

\begin{theorem}[Adaptive regret bound]\label{theorem; adaptive regret bound}
The adaptive version of \algob\ has cumulative regret bound
\[
\Regret(T) \lesssim \min\Bigl(\tilde{\Ocal}\bigl(\sqrt{dT\log K}\bigr),\tilde{\Ocal}\bigl(d\sqrt{T}\bigr)\Bigr)
\]
with probability at least \(1-\frac{1}{T^2}\).
\end{theorem}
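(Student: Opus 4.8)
The plan is to replay the regret analysis of Theorem~\ref{theorem; regret bound without gap} (the framework of Lemmas~\ref{lemma; key 1}--\ref{lemma; upper bound phase}), pushing the minimum through every step rather than comparing against either pure schedule. Write $A_\ell$ for the $K$-dependent count of \cref{equation; n_ell} and $B_\ell \asymp \frac{d^2}{\varepsilon_\ell^2}\log(\frac{d\ell(\ell+1)}{\delta\varepsilon_\ell})$ for the $K$-independent count of Corollary~\ref{corollary; K-independent sample complexity}, so that the adaptive algorithm uses $n_\ell = \min(A_\ell,B_\ell)$. Everything then reduces to re-deriving the good event and the phase-count bound with this $n_\ell$.

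First I would re-establish the good event $\Ecr_T$. The only place the proof of Lemma~\ref{lemma; key 1} uses the phase length is to invoke a per-phase PAC guarantee at accuracy $\varepsilon_\ell$ and confidence $\delta/(\ell(\ell+1))$. Since $n_\ell$ equals whichever of $A_\ell,B_\ell$ is smaller, and each of these \emph{individually} suffices for that guarantee --- $A_\ell$ by Corollary~\ref{corollary; dimension optimal sample complexity} and $B_\ell$ by Corollary~\ref{corollary; K-independent sample complexity} --- the minimum is sufficient as well. The union bound over phases then yields $\Ecr_T$ exactly as before, and taking $\delta = 1/T^2$ produces the stated $1-1/T^2$ probability. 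Lemmas~\ref{lemma; key 2} and \ref{lemma; key 3} are untouched, so under $\Ecr_T$ we retain $\Regret(T) \lesssim \sum_{\ell=1}^{L_T} n_\ell \varepsilon_\ell$.

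The core step is to bound this sum. I would factor the leading behavior as $\min(A_\ell,B_\ell) \asymp \frac{d\,G_\ell}{\varepsilon_\ell^2}$, plus the lower-order $\frac{d^{3/2}}{\varepsilon_\ell}\log(\frac{dK\ell(\ell+1)}{\delta})$ term that is active only when $A_\ell$ is the smaller count, where the \emph{min-coefficient} is $G_\ell := \min\bigl(\log(\frac{dK\ell(\ell+1)}{\delta\varepsilon_\ell}),\, d\log(\frac{d\ell(\ell+1)}{\delta\varepsilon_\ell})\bigr)$. The point is that this single coefficient is carried through both the budget and the regret: the time constraint $\sum_{\ell<L_T} n_\ell \leq T$ gives $d\,G_{L_T}4^{L_T}\lesssim T$, i.e.\ $2^{L_T}\lesssim\sqrt{T/(d\,G_{L_T})}$ (the analogue of Lemma~\ref{lemma; upper bound phase}), while $\sum_{\ell} n_\ell\varepsilon_\ell \lesssim d\,G_{L_T}2^{L_T} + d^{3/2}L_T\log(\frac{dKL_T}{\delta})$. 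Substituting the phase-count bound yields $\Regret(T)\lesssim\sqrt{d\,G_{L_T}T}+\tilde{\Ocal}(d^{3/2})$, and since $\sqrt{d\,G_{L_T}T} = \min\bigl(\tilde{\Ocal}(\sqrt{dT\log K}),\,\tilde{\Ocal}(d\sqrt{T})\bigr)$ once $L_T\lesssim\log T$ is absorbed into $\tilde{\Ocal}$, the claimed bound follows after noting the $\tilde{\Ocal}(d^{3/2})$ remainder is dominated for $T\gtrsim d$.

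The step I expect to be the main obstacle is exactly the coupling through $L_T$. The naive bound $\sum_\ell\min(A_\ell,B_\ell)\varepsilon_\ell\leq\min\bigl(\sum_\ell A_\ell\varepsilon_\ell,\sum_\ell B_\ell\varepsilon_\ell\bigr)$ is correct but useless on its own, because both right-hand sums run to the \emph{adaptive} phase count $L_T$, which can strictly exceed the count either pure schedule would have produced; hence neither sum is literally the pure $K$-dependent or $K$-independent regret, and one cannot simply quote Theorem~\ref{theorem; regret bound without gap}. The resolution is to never compare against the pure schedules and instead push the minimum all the way into the coefficient $G_\ell$, so the same $G_{L_T}$ sits in the denominator of the phase-count bound and the numerator of the regret bound; the two occurrences cancel to give $\sqrt{d\,G_{L_T}T}$, which is itself the desired minimum. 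A minor secondary check is that whenever the $A_\ell$ branch is active its extra $\frac{d^{3/2}}{\varepsilon_\ell}$ term remains lower order in both the budget (negligible once $2^{L_T}\gtrsim\sqrt{d}$) and the regret (an additive $\tilde{\Ocal}(d^{3/2})$), precisely as in the proofs of Theorems~\ref{theorem; regret bound without gap} and \ref{theorem; regret bound with gap}.
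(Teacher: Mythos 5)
Your proof is correct, and it reaches the bound by a genuinely different bookkeeping than the paper's. The paper never works with the per-phase sum you worry about: it converts the regret to a per-time sum $\sum_{t=1}^T e(t)$ with $e(t)=2^{-\ell(t)}$, observes that because the adaptive phase lengths are the pointwise minimum of the two pure schedules the adaptive run is always in a phase at least as advanced as either pure run would be at the same time $t$, so $e(t)\le \min\bigl(e_1(t),e_2(t)\bigr)$ pointwise, and then bounds $\min\bigl(\sum_t e_1(t),\sum_t e_2(t)\bigr)$ by invoking the two pure analyses verbatim, each with its \emph{own} phase count $L_T^{(k)}$. This is precisely the "compare against the pure schedules" move you declare unusable -- but indexed by time rather than by phase, which is exactly what neutralizes your objection that the adaptive $L_T$ exceeds the pure phase counts. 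Your alternative -- pushing the minimum into a single coefficient $G_\ell$ and letting $G_{L_T}$ cancel between the budget constraint $d\,G_{L_T}4^{L_T}\lesssim T$ and the regret sum $d\,G_{L_T}2^{L_T}$ -- is a self-contained per-phase argument that avoids introducing the hypothetical pure runs altogether; it requires the (easily verified) monotonicity and slow variation of $G_\ell$ so that the geometric sums are dominated by their last terms, which you implicitly use and which holds since $G_\ell$ grows only logarithmically in $\ell$. Both routes are valid; the paper's buys direct reuse of Theorems~\ref{theorem; regret bound without gap} and the $K$-independent corollaries, while yours makes the $\min$ structure of the final bound visible inside a single analysis. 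Your treatment of the good event (the minimum of two individually sufficient sample counts equals one of them, hence suffices), of the lower-order $d^{3/2}$ term, and of the choice $\delta=1/T^2$ all match the paper.
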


\paragraph{Discussion.}
Our regret bound is robust for the case \(d \ll K\).
It is strictly sharper than the previous works \citet{kim2019contextual,krishnamurthy2018semiparametric}, which exhibit \(\tilde \cO(d\sqrt{T})\) or \(\tilde \cO(d^{\frac{3}{2}} \sqrt{T})\)-type bounds.

\section{Proofs for $K$-independent Results}\label{section; proof K independent results}

\subsection{Cauchy-Schwarz Based Error Analysis}\label{appendix; self-normalized concentrations}
\noindent
In this section, we present the concentration of the estimation error bound using the self-normalized inequality from \citet{krishnamurthy2018semiparametric}, specifically Lemma 11 of that work.
We consider the situation in which arms are sampled from a fixed policy.

\begin{lemma}[Modified Lemma 11 from \citealt{krishnamurthy2018semiparametric}]\label{lemma; modified cmu self normalization concentration}
Under Assumption~\ref{assumption; boundedness} (boundedness), with probability at least $1-\delta$, the following holds for $t$:
\begin{align*}
\|\hat{\theta}_t-\theta^\star \|_{\widehat{\Vb}_t +\beta \Ib_d} \leq \sqrt{\beta}+\sqrt{27 d \log \bigl(1+\frac{t}{d\delta}\bigr)+54 \log \bigl(\frac{t}{\delta}\bigr)},
\end{align*}
for \(\beta = \log (t/\delta)\) under the fixed policy.
\end{lemma}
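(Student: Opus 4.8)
The plan is to split $\hat{\theta}_t-\theta^\star$ into a deterministic bias term that accounts for the leading $\sqrt{\beta}$ and a stochastic term that is controlled by a self-normalized martingale argument. Writing $w_s := \bar{x}_{\pb}^\top\theta^\star + \nu_s + \eta_s$ so that $r_s = \tilde{x}_{a_s}^\top\theta^\star + w_s$, and recalling $\widehat{\Vb}_t = \sum_{s=1}^t \tilde{x}_{a_s}\tilde{x}_{a_s}^\top$, the definition \cref{equation; estimator} yields
\begin{align*}
\hat{\theta}_t - \theta^\star = \bigl(\widehat{\Vb}_t + \beta\Ib_d\bigr)^{-1}\sum_{s=1}^t \tilde{x}_{a_s}w_s - \beta\bigl(\widehat{\Vb}_t + \beta\Ib_d\bigr)^{-1}\theta^\star.
\end{align*}
Taking the $\|\cdot\|_{\widehat{\Vb}_t + \beta\Ib_d}$ norm and applying the triangle inequality, the bias term is immediate: since $\bigl(\widehat{\Vb}_t + \beta\Ib_d\bigr)^{-1}\preceq\frac{1}{\beta}\Ib_d$ and $\|\theta^\star\|_2\le 1$, we get $\beta\|(\widehat{\Vb}_t + \beta\Ib_d)^{-1}\theta^\star\|_{\widehat{\Vb}_t + \beta\Ib_d} = \sqrt{\beta\,(\theta^\star)^\top(\widehat{\Vb}_t + \beta\Ib_d)^{-1}\theta^\star}\le\sqrt{\beta}$. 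It then remains to show that the self-normalized quantity $\|S_t\|_{(\widehat{\Vb}_t + \beta\Ib_d)^{-1}}$, with $S_t := \sum_{s=1}^t \tilde{x}_{a_s}w_s$, is at most $\sqrt{27d\log(1 + \tfrac{t}{d\delta}) + 54\log(\tfrac{t}{\delta})}$ with probability $1-\delta$.

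The next step is to record the martingale structure. Because the policy is fixed, $\EE[\tilde{x}_{a_s}\mid\cH_{s-1}] = 0$ and $\EE[\tilde{x}_{a_s}\tilde{x}_{a_s}^\top\mid\cH_{s-1}] = \bSigma_{\pb}$; moreover $q_s := \bar{x}_{\pb}^\top\theta^\star + \nu_s$ is $\cH_{s-1}$-measurable with $|q_s|\le 2$ under Assumption~\ref{assumption; boundedness}, and $\eta_s$ is independent, mean-zero, and $1$-sub-Gaussian. Hence each increment satisfies $\EE[\tilde{x}_{a_s}w_s\mid\cH_{s-1}] = 0$, so $\{S_s\}$ is a vector-valued martingale adapted to $\{\cH_s\}$. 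The feature that distinguishes this from the linear-bandit setting is that the effective noise $w_s = q_s + \eta_s$ has a \emph{nonzero} conditional mean $q_s$, and the martingale property is recovered only through the centering $\EE[\tilde{x}_{a_s}\mid\cH_{s-1}]=0$.

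I would then run the method-of-mixtures / pseudo-maximization argument (as in the proof of Lemma~11 of \citet{krishnamurthy2018semiparametric}) on the supermartingale $M_t(\lambda)=\exp\bigl(\lambda^\top S_t - \psi_t(\lambda)\bigr)$. For each increment, taking the expectation over $\eta_s$ first (conditionally on $a_s$) and using $1$-sub-Gaussianity absorbs a term $\tfrac12(\lambda^\top\tilde{x}_{a_s})^2$ into $\tfrac12\lambda^\top\widehat{\Vb}_t\lambda$, exactly as in the standard self-normalized bound; the contribution of the nonzero mean is then controlled by a Bernstein-type moment bound on the bounded, conditionally mean-zero scalar $\lambda^\top\tilde{x}_{a_s}\,q_s$, handled through its small conditional variance $q_s^2\,\lambda^\top\bSigma_{\pb}\lambda$ rather than its range, so that the extra compensator stays proportional to $\lambda^\top\Vb_t\lambda$. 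Integrating $M_t(\lambda)$ against a Gaussian prior, invoking Ville's inequality at the final time, and bounding $\det(\widehat{\Vb}_t + \beta\Ib_d)\le(\beta + 4t/d)^d$ (from $\|\tilde{x}_{a_s}\|_2\le 2$) then produces the stated $27d\log(1+\tfrac{t}{d\delta}) + 54\log(\tfrac{t}{\delta})$ term. An alternative route, more aligned with the rest of the paper, is to prove the analogous bound with the predictable normalizer $\Vb_t = t\bSigma_{\pb}$ in place of $\widehat{\Vb}_t$ and then transfer it using the two-sided comparability $\tfrac1c(\Vb_t+\beta\Ib_d)\preceq\widehat{\Vb}_t+\beta\Ib_d\preceq c(\Vb_t+\beta\Ib_d)$ of Lemma~\ref{lemma; second moment concentration}.

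The main obstacle is the interaction of two non-standard features: the vector $\tilde{x}_{a_s}$ is \emph{not} predictable (it is realized at time $s$ together with $\eta_s$), and $w_s$ is \emph{not} conditionally mean-zero. A naive application of Hoeffding's lemma to the $q_s$ term would charge the range $|\lambda^\top\tilde{x}_{a_s}|\le 2\|\lambda\|_2$ and yield a compensator growing linearly in $t$ uniformly in the direction, destroying the self-normalized $\log\det$ scaling; the delicate point is to exploit that $\lambda^\top\tilde{x}_{a_s}$ has variance only $\lambda^\top\bSigma_{\pb}\lambda$, which forces either a restriction of the mixing distribution to a bounded range of $\lambda$ or the detour through $\Vb_t$ and Lemma~\ref{lemma; second moment concentration}. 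Once this is resolved, the remaining determinant and union-bound steps are routine.
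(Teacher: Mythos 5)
Your proposal is correct and matches the intended argument: the paper's own proof simply invokes Lemma~11 of \citet{krishnamurthy2018semiparametric} and proves only the delta, namely that the sole role of $\beta$ in that proof is to guarantee $\frac{1}{c}(\widehat{\Vb}_t+\beta\Ib_d)\preceq \Vb_t+\beta\Ib_d\preceq c(\widehat{\Vb}_t+\beta\Ib_d)$, which under a fixed i.i.d.\ policy already holds for the smaller $\beta=\log(t/\delta)$ by Lemma~\ref{lemma; trace class concentration bounded}. Your ``alternative route'' through $\Vb_t$ and the comparability lemma is exactly this step, and your reconstruction of the error decomposition, the $\sqrt{\beta}$ bias term, and the mixture argument with the Bernstein-controlled $q_s$ contribution is a faithful, more self-contained account of what sits inside the cited lemma.
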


The original selection of \(\beta\) is \(\beta \asymp d\log (t/\delta)\); however, using Lemma~\ref{lemma; trace class concentration bounded}, we can relax it for the fixed policy case.
Even though we choose the larger \(\beta \asymp d\log (t/\delta)\), it yields the same bound when we use that lemma.

\begin{proof}
Following the proof of Lemma 11 of \citet{krishnamurthy2018semiparametric}, the only role of the ridge regularizer \(\beta\) is to guarantee
\[
\frac{1}{c}(\widehat{\Vb}_t + \beta \Ib_d) \preceq \Vb_t + \beta \Ib_d  \preceq c(\widehat{\Vb}_t + \beta \Ib_d ).
\]
Using Lemma~\ref{lemma; trace class concentration bounded}, we can prove that \(\beta \asymp \log (t/\delta)\) is enough to satisfy this relation.
The remaining proof is exactly the same as in \citet{krishnamurthy2018semiparametric}.
\end{proof}

\begin{lemma}[Cauchy-Schwarz based estimation error bound]\label{lemma; self normalized bound second moment}
Under the boundedness assumption, with probability at least $1-\delta$, the following holds for $t$:
\begin{align*}
\sqrt{t}\|\hat{\theta}_t-\theta^\star\|_{\bSigma + \lambda \Ib_d} = \|\hat{\theta}_t-\theta^\star\|_{{\Vb}_t +\beta \Ib_d} \lesssim \sqrt{d\log(t/\delta)}.
\end{align*}
when we choose \(\beta=  \log (t/\delta)\) under the fixed policy.
Here, $c$ is an absolute constant.   
\end{lemma}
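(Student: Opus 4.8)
The plan is to reduce the claim to the empirical self-normalized bound of Lemma~\ref{lemma; modified cmu self normalization concentration} (Modified Lemma 11) via the spectral comparability between the empirical and population Gram matrices. First I would dispatch the stated equality, which is purely algebraic. Since \(\beta = \log(t/\delta)\) and \(\lambda = \beta/t\), we have \(\beta = t\lambda\), so \(\Vb_t + \beta \Ib_d = t\bSigma + t\lambda \Ib_d = t(\bSigma + \lambda \Ib_d)\). Hence for \(u = \hat{\theta}_t - \theta^\star\) we get \(\|u\|_{\Vb_t + \beta \Ib_d}^2 = u^\top(\Vb_t + \beta \Ib_d)u = t\, u^\top(\bSigma + \lambda \Ib_d)u = t\|u\|_{\bSigma + \lambda \Ib_d}^2\), and taking square roots yields \(\|\hat{\theta}_t - \theta^\star\|_{\Vb_t + \beta \Ib_d} = \sqrt{t}\|\hat{\theta}_t - \theta^\star\|_{\bSigma + \lambda \Ib_d}\). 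It therefore suffices to bound \(\|\hat{\theta}_t - \theta^\star\|_{\Vb_t + \beta \Ib_d}\).

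Next I would transfer from the population to the empirical norm. By Lemma~\ref{lemma; second moment concentration}, with our choice \(\beta = \log(t/\delta)\) the comparability \(\Vb_t + \beta \Ib_d \preceq c\bigl(\widehat{\Vb}_t + \beta \Ib_d\bigr)\) holds with probability at least \(1 - \delta/10\) for an absolute constant \(c\). On this event,
\[
\|\hat{\theta}_t - \theta^\star\|_{\Vb_t + \beta \Ib_d}^2 \leq c\,\|\hat{\theta}_t - \theta^\star\|_{\widehat{\Vb}_t + \beta \Ib_d}^2,
\]
so \(\|\hat{\theta}_t - \theta^\star\|_{\Vb_t + \beta \Ib_d} \leq \sqrt{c}\,\|\hat{\theta}_t - \theta^\star\|_{\widehat{\Vb}_t + \beta \Ib_d}\). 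I would then invoke Modified Lemma 11, which on its own event (probability at least \(1-\delta/10\) after adjusting constants) gives \(\|\hat{\theta}_t - \theta^\star\|_{\widehat{\Vb}_t + \beta \Ib_d} \leq \sqrt{\beta} + \sqrt{27 d \log(1 + \frac{t}{d\delta}) + 54\log(\frac{t}{\delta})}\). Plugging in \(\beta = \log(t/\delta)\), the first term is \(\sqrt{\log(t/\delta)} \leq \sqrt{d\log(t/\delta)}\), and since \(\log(1 + \frac{t}{d\delta}) \lesssim \log(\frac{t}{\delta})\), the radicand of the second term is \(\lesssim d\log(t/\delta)\). A union bound over the two events yields overall probability at least \(1-\delta\) (after rescaling constants), and the combined estimate is \(\lesssim \sqrt{d\log(t/\delta)}\), as claimed.

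The routine content is the algebra and the constant bookkeeping. The only point requiring genuine care is that the spectral comparability must hold with the \emph{small} regularizer \(\beta = \log(t/\delta)\) rather than the larger \(\beta \asymp d\log(t/\delta)\) used in the original Lemma 11 of \citet{krishnamurthy2018semiparametric}; this is precisely what Lemma~\ref{lemma; trace class concentration bounded} supplies for the fixed-policy case, and it is what allows Modified Lemma 11 to go through with the smaller regularizer. The main obstacle is thus ensuring that invoking Modified Lemma 11 at the smaller \(\beta\) is legitimate (which the preceding lemmas guarantee) and verifying the elementary inequality \(\log(1 + \frac{t}{d\delta}) \lesssim \log(\frac{t}{\delta})\); both are straightforward but worth stating precisely so that all the high-probability events and absolute constants combine cleanly into the final \(1-\delta\) guarantee.
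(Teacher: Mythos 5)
Your proposal is correct and follows essentially the same route as the paper: the paper's proof likewise establishes the spectral comparability \(\frac{1}{c}(\widehat{\Vb}_t + \beta \Ib_d) \preceq \Vb_t + \beta \Ib_d \preceq c(\widehat{\Vb}_t + \beta \Ib_d)\) at the small regularizer \(\beta = \log(t/\delta)\) via Lemma~\ref{lemma; trace class concentration bounded} and then combines it with Lemma~\ref{lemma; modified cmu self normalization concentration}. You simply spell out the algebraic identity \(\Vb_t + \beta \Ib_d = t(\bSigma + \lambda \Ib_d)\) and the union-bound bookkeeping that the paper leaves implicit.
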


\begin{proof}
Using Lemma~\ref{lemma; trace class concentration bounded}, our choice of \(\beta\) makes the following with probability at least $1-\frac{\delta}{10}$:
\[
\frac{1}{c}(\widehat{\Vb}_t + \beta \Ib_d) \preceq  \Vb_t + \beta \Ib_d \preceq c(\widehat{\Vb}_t + \beta \Ib_d),
\]
and it follows directly by combining Lemma~\labelcref{lemma; modified cmu self normalization concentration}.
\end{proof}

\subsection{Proof of Corollary~\ref{corollary: K-independent error bound}}
\noindent
For any $z \in \Xcal-x_1$, using Lemma~\ref{lemma; self normalized bound second moment}, we get 
\begin{align*}
|z^\top (\hat{\theta}-\theta^\star)| &\leq    \|z \|_{(\bSigma_{\des} + \lambda \Ib_d)^{-1}}\|\hat{\theta}_t-\theta^\star\|_{\bSigma_{\des} + \lambda \Ib_d} \\
&\lesssim  \|z \|_{(\bSigma_{\des} + \lambda \Ib_d)^{-1}} \frac{\sqrt{d \log(t/\delta)}}{\sqrt{t}} \\
&\lesssim \frac{d\sqrt{ \log(t/\delta)}}{\sqrt{t}}
\end{align*}
with probability $1-\delta$.

\subsection{Proofs of Corollary~\ref{corollary; K-independent sample complexity}.}
\noindent
We can use the same argument from the proof of Corollary~\ref{corollary; dimension optimal sample complexity}, combined with the result of Corollary~\ref{corollary: K-independent error bound}. 

\hfill \BlackBox

\subsection{Proof of Theorem~\ref{theorem; adaptive regret bound}}
\noindent
We now prove our theorem.
\begin{lemma}[Good events]
With probability at least $1-2\delta$,
\begin{align*}
|x^T(\hat{\theta}_{(\ell)}-\theta^\star)| \leq \frac{\varepsilon_{\ell}}{2}
\end{align*}
for any arm $x \in \cA -\cA_\ell(1)$ and any phase $1 \leq \ell \leq L_T$.
We call this event \(\Fcr_T\).
\end{lemma}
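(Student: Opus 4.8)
The plan is to mirror the proof of Lemma~\ref{lemma; key 1} (the good-event lemma for the non-adaptive \algob), the single new ingredient being that the adaptive sample count $n_\ell$ is now a minimum of two PAC thresholds, so one must argue that at least one of the two PAC corollaries remains applicable at every phase. First I would fix a phase $\ell$ and condition on the history up to the start of that phase. Because \algob\ resets $\Bb_0$ and $\bb_0$ at the beginning of each phase, the samples collected during phase $\ell$ are fresh draws from the fixed policy $\pb^{\des}_\ell$; conditionally on the start-of-phase history, the surviving set $\cA_\ell$, its reference arm $\cA_\ell(1)$, and the policy $\pb^{\des}_\ell$ are all deterministic. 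This is exactly the fixed-policy pure-exploration setting to which Corollary~\ref{corollary; dimension optimal sample complexity} and Corollary~\ref{corollary; K-independent sample complexity} apply, with $x_1$ replaced by $\cA_\ell(1)$ and the feature set replaced by $\cA_\ell$.

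Next I would observe that, by construction, $n_\ell$ equals four times the smaller of the two PAC sample-complexity thresholds (up to the ceiling): the first is the $K$-dependent threshold of Corollary~\ref{corollary; dimension optimal sample complexity} at accuracy $\varepsilon_\ell$ and confidence $\tfrac{\delta}{\ell(\ell+1)}$, and the second is the $K$-independent threshold of Corollary~\ref{corollary; K-independent sample complexity} with the same parameters. Hence $n_\ell$ necessarily meets at least one of the two thresholds, namely the one attaining the minimum. Whichever branch is active, the corresponding corollary guarantees $\max_{x\in\cA_\ell-\cA_\ell(1)}\lvert x^\top(\hat{\theta}_{(\ell)}-\theta^\star)\rvert \le \tfrac{\varepsilon_\ell}{2}$ with conditional failure probability at most $\tfrac{\delta}{\ell(\ell+1)}$. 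Since $\cA_\ell$ is data-dependent and I do not wish to track which branch is active, I would union-bound over both possible routes, bounding the conditional failure probability at phase $\ell$ by $\tfrac{2\delta}{\ell(\ell+1)}$; taking expectation over the start-of-phase history preserves this bound.

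Finally I would union-bound over all phases $1\le \ell \le L_T$. Using $\sum_{\ell\ge 1}\tfrac{1}{\ell(\ell+1)} = 1$, the total failure probability is at most $\sum_{\ell\ge 1}\tfrac{2\delta}{\ell(\ell+1)} = 2\delta$, which yields $\PP[\Fcr_T]\ge 1-2\delta$. The main obstacle is not any single calculation but the bookkeeping around the minimum: one must verify that the adaptive $n_\ell$ genuinely clears whichever threshold attains the minimum (the factor $4$ and the matched constants $C_2,C_3$ are precisely what make this hold), and that the fixed-policy corollaries may be invoked conditionally even though $\cA_\ell$ is determined by earlier data. The freshness of the per-phase samples, secured by the reset of $\Bb_0,\bb_0$, is what legitimizes this conditioning and is the crux of the argument.
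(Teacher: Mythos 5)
Your proof is correct and follows essentially the same route as the paper, which simply invokes Corollary~\ref{corollary; dimension optimal sample complexity} and Corollary~\ref{corollary; K-independent sample complexity} for whichever threshold the minimum in $n_\ell$ selects and then repeats the union-bound argument of Lemma~\ref{lemma; key 1}. Your additional care about conditioning on the start-of-phase history (justified by the reset of $\Bb_0,\bb_0$) and your union bound over both branches, which is exactly what produces the $2\delta$ in the statement, fills in details the paper leaves implicit but does not change the argument.
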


\begin{proof}
We use the same argument from the proof of Lemma~\ref{lemma; key 1}.
Combining Corollary~\ref{corollary; dimension optimal sample complexity} and Corollary~\ref{corollary; K-independent sample complexity}, we can prove it in the same way.
\end{proof}

\begin{lemma}\label{lemma; key 4}
Under the event \(\Fcr_T\), for each phase \(\ell\), the action set \(A_\ell\) contains \(a^\star\).
    Any arm $a$ in $\cA_\ell$ satisfies 
    \begin{align*}
     x_a^\top \theta^\star >    x_{a^\star}^\top \theta^\star -4\varepsilon_{\ell}.
    \end{align*}
\end{lemma}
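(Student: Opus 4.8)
The plan is to observe that the event $\Fcr_T$ supplies exactly the same per-phase concentration guarantee as $\Ecr_T$, namely $\lvert x^\top(\hat\theta_{(\ell)}-\theta^\star)\rvert \le \varepsilon_\ell/2$ for all $x \in \cA_\ell - \cA_\ell(1)$ and all $\ell \le L_T$. Since the adaptive algorithm uses an identical elimination rule (only the sample count $n_\ell$ changes, and $n_\ell$ does not enter the elimination test), the arguments of Lemma~\ref{lemma; key 2} and Lemma~\ref{lemma; key 3} transfer verbatim with $\Ecr_T$ replaced by $\Fcr_T$. Thus the proof is a direct repetition, and there is no genuine new obstacle—the only thing to verify is that the larger confidence set induced by $\Fcr_T$ still matches the elimination threshold.

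For the first claim ($a^\star \in \cA_\ell$), I would argue by contradiction as in Lemma~\ref{lemma; key 2}. If $a^\star$ were eliminated at phase $\ell$, there exists $a'$ with $x_{a'}^\top \hat\theta_{(\ell)} > x_{a^\star}^\top \hat\theta_{(\ell)} + \varepsilon_\ell$, which rewrites in centered form as $(x_{a'}-x_{\cA_\ell(1)})^\top \hat\theta_{(\ell)} > (x_{a^\star}-x_{\cA_\ell(1)})^\top \hat\theta_{(\ell)} + \varepsilon_\ell$. Applying the $\Fcr_T$ concentration bound to both $x_{a'}-x_{\cA_\ell(1)}$ and $x_{a^\star}-x_{\cA_\ell(1)}$ (each at most $\varepsilon_\ell/2$), I would deduce $(x_{a'}-x_{\cA_\ell(1)})^\top\theta^\star > (x_{a^\star}-x_{\cA_\ell(1)})^\top\theta^\star$, i.e. $x_{a'}^\top\theta^\star > x_{a^\star}^\top\theta^\star$, contradicting optimality of $a^\star$.

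For the second claim (suboptimality bound), I would follow Lemma~\ref{lemma; key 3}. Since $a^\star \in \cA_{\ell-1}$ by the first claim and arm $a$ survived phase $\ell-1$, we have $x_a^\top \hat\theta_{(\ell-1)} > x_{a^\star}^\top \hat\theta_{(\ell-1)} - \varepsilon_{\ell-1}$. Rewriting in centered form and invoking the $\Fcr_T$ bound at phase $\ell-1$ (again $\varepsilon_{\ell-1}/2$ for each of the two relevant vectors) yields $x_a^\top\theta^\star > x_{a^\star}^\top\theta^\star - 2\varepsilon_{\ell-1}$. Using $\varepsilon_{\ell-1} = 2\varepsilon_\ell$ gives $x_a^\top\theta^\star > x_{a^\star}^\top\theta^\star - 4\varepsilon_\ell$, as claimed. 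The entire argument is conditioned on $\Fcr_T$, whose probability is at least $1-2\delta$ by the preceding Good-events lemma, so no additional union bound is needed here.
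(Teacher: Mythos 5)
Your proposal is correct and matches the paper's approach exactly: the paper's own proof of this lemma simply states that it is the same as the proofs of Lemma~\ref{lemma; key 2} and Lemma~\ref{lemma; key 3}, with the event \(\Fcr_T\) playing the role of \(\Ecr_T\), which is precisely the observation you make and then spell out. The detailed contradiction argument for \(a^\star \in \cA_\ell\) and the \(2\varepsilon_{\ell-1} = 4\varepsilon_\ell\) bookkeeping for the suboptimality bound both agree with the paper's Lemmas~\ref{lemma; key 2} and~\ref{lemma; key 3}.
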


\begin{proof}
    The same as the proof of Lemma~\labelcref{lemma; key 2,lemma; key 3}.
\end{proof}

\paragraph{Main Proof of Theorem~\ref{theorem; adaptive regret bound}}
Define
\begin{align*}
n_\ell &= \min\left(
4 \bigl\lceil   C_4 \frac{d^2}{\varepsilon_\ell^2} \log\bigl(\frac{ d\ell (\ell+1)}{ \varepsilon \delta} \bigr)\bigr\rceil  , 
\bigr\lceil C_2\frac{d\log\bigl(\frac{dK \ell(\ell+1)}{\delta \varepsilon_\ell }\bigr)}{\varepsilon_\ell^2} 
+ C_3\frac{d^{\frac{3}{2}} \log^2\bigl(\frac{dK \ell(\ell+1)}{\delta}\bigr)}{\varepsilon_\ell}\bigr\rceil \right)  \\
&:= \min(n_\ell^{(1)}, n_\ell^{(2)}).
\end{align*} 
For each time \(t\), define the phase \(\ell(t)\) that \(t\) belongs to and define \(e(t):=\frac{1}{2^{\ell(t)}}\).
Under the event $\Fcr_T$, by Lemma~\ref{lemma; key 4}, we have
\begin{align*}
\Regret(T) &\lesssim \sum_{t=1}^T e(t) \\
&\lesssim \sum_{t=1}^T \min(e_1(t),e_2(t)) \\
&\lesssim \min\Bigl(\sum_{t=1}^T e_1(t),\sum_{t=1}^T e_2(t)\Bigr).
\end{align*}
First, using the same argument from the proof of Theorem~\ref{theorem; regret bound without gap}, we have 
\begin{align*}
\sum_{t=1}^T e_1(t)  \lesssim \sqrt{dT \log\bigl(\frac{K}{\delta}\bigr) } + \sqrt{dT} \log T +d^{\frac{3}{2}}\log\bigl(\frac{dKT}{\delta}\bigr).
\end{align*}
Next, we aim to bound \(\sum_{t=1}^T e_2(t)\).
Let \(L_T^{(k)}\) for \(k \in \{0,1\}\) be the smallest value of \(L\) satisfying
\begin{align*}
\sum_{\ell=1}^{L} n_{\ell}^{(k)} \geq T.
\end{align*}
Also, observe that
\begin{align*}
\sum_{t=1}^T e_2(t) &\lesssim \sum_{\ell=1}^{L_T^{(2)}} n_{\ell}^{(2)} \varepsilon_{\ell-1} \\
&\lesssim \sum_{\ell=1}^{L_T^{(2)}} n_{\ell}^{(2)} \varepsilon_{\ell}\\
&\lesssim \sum_{\ell=1}^{L_T^{(2)}} \frac{d^2\log \bigl(\frac{d \ell(\ell+1)}{\varepsilon \delta}\bigr)}{\varepsilon_\ell^2}  \varepsilon_\ell \\
& \lesssim d^2  \log \bigl(\frac{d L_T^{(2)}(L_T^{(2)}+1)}{\varepsilon_{L_T^{(2)}} \delta}\bigr)\sum_{\ell=1}^{L_T^{(2)}}\frac{1}{\varepsilon_\ell}  \\
&\lesssim d^2  \left(\log \bigl(\frac{d L_T^{(2)}}{\delta}\bigr) + L_T^{(2)} \right)L_T^{(2)}2^{L_T^{(2)}}.
\end{align*}
Since 
\begin{align*}
T &\gtrsim \sum_{\ell=1}^{L_T^{(2)}-1}  d^2 \log(1/\delta) 4^\ell \\
&\gtrsim  d^2 \log(1/\delta) 4^{L_T^{(2)}},
\end{align*}
we get the upper bound of \(L_T^{(2)}\) as 
\begin{align*}
2^{L_T^{(2)}} \lesssim \sqrt{\frac{T}{ d^2 \log(1/\delta)}}.
\end{align*}
Using the above result, we finally get
\begin{align*}
\sum_{t=1}^T e_2(t) \lesssim  d \sqrt{T} \log\Bigl(\frac{T}{d}\Bigr)\log\Bigl(\frac{dT}{\delta}\Bigr) = \tilde{\cO}(d\sqrt{T}).
\end{align*}
By setting $\delta = \frac{1}{T^2}$, it completes the proof.

\section{Suboptimality of Cauchy-Schwarz Based Analysis}\label{section; suboptimality of Cauchy}
\noindent 
Using the self-normalized bound of \citet{krishnamurthy2018semiparametric,kim2019contextual}, one obtains the following bound with probability at least \(1-\delta\) for all \(t\):
\begin{align*}
\|\hat{\theta}_t-\theta^\star \|_{\widehat{\Vb}_t +\beta \Ib} \leq \sqrt{\beta}+\sqrt{27 d \log \bigl(1+1/d \delta\bigr)+54 \log \bigl(\frac{t}{\delta}\bigr)}
\end{align*}
for \(\beta \asymp d\log \bigl(\frac{t}{\delta}\bigr)\).
All studies \citet{krishnamurthy2018semiparametric,kim2019contextual,choi2023semi} used this method.
It gives 
\begin{align*}
\|\hat{\theta}_t-\theta^\star \|_{\widehat{\Vb}_t +\beta \Ib_d} \lesssim \sqrt{d \log \bigl(\frac{t}{\delta}\bigr)}.
\end{align*}
Using this, they bound the estimation error as
\begin{align*}
|z^\top(\widehat{\theta}_t -\theta^\star )| \lesssim \|z \|_{(\widehat \bSigma_t + \lambda_t \Ib_d)^{-1}} \frac{1}{\sqrt{t}} \times \sqrt{d\log\bigl(\frac{t}{\delta}\bigr)}.
\end{align*} 
If $z \notin \operatorname{span}(\cX-x_1)$, the lower bound result of Theorem~\ref{theorem; dimension optimal error bound} implies we may not get finite estimation error bound.
For \(z \in \operatorname{span}(\cX-x_1)\), for sufficiently large \(t\), we have 
\begin{align*}
\|z \|_{(\widehat{\bSigma}_t + \lambda_t \Ib_d)^{-1}} \asymp   2\|z \|_{\bSigma_\pb^{-1}} 
\end{align*} 
and get
\begin{align*}
|z^\top(\hat{\theta}_t -\theta^\star )| \lesssim \|z \|_{\bSigma_\pb^{-1}}\frac{ \sqrt{d\log(\frac{t}{\delta})}}{\sqrt{t}},
\end{align*}
which has the additional factor \(\sqrt{d}\) compared to Theorem~\ref{theorem; dimension optimal error bound}.

\section{Discussion for Computational Efficiency and Size of Absolute Constants}\label{subsection; computational efficiency}
\noindent

\paragraph{Compuational Effeiciency of \algob}
BOSE algorithm \citep{krishnamurthy2018semiparametric} requires to solve minimax quadratic optimization problem every time and it requires heavy \(\Omega(T)\) computations.
Semiparametric-TS algorithm \citep{kim2019contextual} requires to calculate \textbf{exact} probability of Thompson sampling policy and it requires large sampling every time.
Both makes heavy computations (quadratic optimization, gram matrix inverse, sampling) with $\Omega(T)$ times.
Our algorithm requires \(\Ocal(\log T)\) times of policy updates, so we need to calculate poicy and matrix inverse at most \(\Ocal(\log T)\) times.
Also, if there is a gap, then \(\Ocal(\log(1/\Delta_{\min}))\) policy update is required, which is significantly computational efficient.

\paragraph{Size of Absolute Constants.}
The absolute constants in our analysis depend critically on the constants from Theorem~\ref{theorem; optimal variance design} and Theorem~\ref{theorem; dimension optimal error bound}.
The constant in Theorem~\ref{theorem; optimal variance design} is specified as value $2$.
The constant in Theorem~\ref{theorem; dimension optimal error bound}, as detailed in its proof, is a product of the constant for second-moment comparability (from \cref{equation; constant for comparability of second moment}) and constants from the standard Bernstein inequality. The proof is presented in Appendix~\ref{section; proof error analysis}. The constants from standard concentration inequalities are well-known and are typically mild.

We now investigate the magnitude of the constant $c$ related to second-moment comparability, which satisfies:
\begin{align}\label{equation; constant for comparability of second moment}
    \frac{1}{c}(\widehat \Vb_t + \log (t/\delta) \Ib) \preceq \Vb_t + \log(t/\delta) \Ib \preceq c(\widehat \Vb_t + \log (t/\delta) \Ib)
\end{align}
The following lemma shows that this constant $c$ is also mild. In our algorithm, the length of each phase is on the order of $n_\ell \gtrsim \frac{d}{\varepsilon^2} +\frac{d^{3/2}}{\varepsilon}$, so the condition in Lemma~\ref{lemma:concentration} is easily satisfied.

\begin{lemma}\label{lemma:concentration}
Under the same setup as Theorem~\ref{theorem; dimension optimal error bound}, when $t \gtrsim M\log(d/\delta)$, the following holds with a probability of at least \(1-\delta/10\):
    \begin{align*}
        \frac{1}{2}\left(\widehat{\bSigma}_t + \frac{\log (t/\delta)}{t} \Ib\right) \preceq \bSigma + \frac{\log (t/\delta)}{t} \Ib \preceq  \frac{3}{2} \left(\widehat{\bSigma}_t  + \frac{\log (t/\delta)}{t} \Ib\right)
    \end{align*}
Recall that when we use our design \algoa, we have $M =d$.
\end{lemma}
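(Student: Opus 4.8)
The plan is to obtain this two-sided Loewner bound as a direct corollary of the whitened covariance concentration already proved in Lemma~\ref{lemma; covariance concentration}, which holds for any fixed $\lambda > 0$. Write $\lambda_t := \log(t/\delta)/t$ and introduce the whitening map $W := (\bSigma + \lambda_t \Ib)^{-1/2}$. First I would invoke Lemma~\ref{lemma; covariance concentration} with the choice $\lambda = \lambda_t$ (and with confidence parameter $\delta/10$ in place of $\delta$), which yields, with probability at least $1 - \delta/10$,
\[
\|E\|_{\op} \lesssim \sqrt{\frac{M\log(d/\delta)}{t}}, \qquad E := W\bigl(\widehat{\bSigma}_t - \bSigma\bigr)W.
\]
Since $\log(10d/\delta) \lesssim \log(d/\delta)$, replacing $\delta$ by $\delta/10$ only affects absolute constants.

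The second step is a purely algebraic conversion into the whitened coordinates. Adding and subtracting $\bSigma$ inside the whitening gives the key identity
\[
W\bigl(\widehat{\bSigma}_t + \lambda_t \Ib\bigr)W = W\bigl(\widehat{\bSigma}_t - \bSigma\bigr)W + W\bigl(\bSigma + \lambda_t\Ib\bigr)W = \Ib + E,
\]
because $W(\bSigma + \lambda_t\Ib)W = \Ib$ by definition of $W$. Consequently the bound $\|E\|_{\op} \le \epsilon$ is equivalent to the sandwich $(1-\epsilon)\Ib \preceq W(\widehat{\bSigma}_t + \lambda_t\Ib)W \preceq (1+\epsilon)\Ib$. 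Third, I would undo the whitening by conjugating throughout with $W^{-1} = (\bSigma + \lambda_t\Ib)^{1/2}$; since a congruence transformation preserves the Loewner order, this produces
\[
(1-\epsilon)\bigl(\bSigma + \lambda_t\Ib\bigr) \preceq \widehat{\bSigma}_t + \lambda_t\Ib \preceq (1+\epsilon)\bigl(\bSigma + \lambda_t\Ib\bigr),
\]
and inverting the scalar factors gives $\tfrac{1}{1+\epsilon}(\widehat{\bSigma}_t + \lambda_t\Ib) \preceq \bSigma + \lambda_t\Ib \preceq \tfrac{1}{1-\epsilon}(\widehat{\bSigma}_t + \lambda_t\Ib)$.

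To land on the stated constants $\tfrac12$ and $\tfrac32$ it suffices that $\tfrac{1}{1+\epsilon} \ge \tfrac12$ and $\tfrac{1}{1-\epsilon} \le \tfrac32$, i.e.\ that $\epsilon \le \tfrac13$; the binding constraint is the right-hand one. The hypothesis $t \gtrsim M\log(d/\delta)$, with the hidden constant chosen to absorb the constant from Lemma~\ref{lemma; covariance concentration}, forces $\epsilon \le \tfrac13$ and completes the argument. I do not anticipate a genuine obstacle, since the statement is essentially a repackaging of Lemma~\ref{lemma; covariance concentration}; the only point that requires care is tracking the direction of the Loewner inequalities through the congruence and the subsequent scalar inversion, so that the final constants emerge as $\tfrac12$ and $\tfrac32$ rather than their reciprocals.
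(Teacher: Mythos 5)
Your proof is correct and follows essentially the same route as the paper's: both reduce the claim to the whitened concentration bound $\|(\bSigma+\lambda_t\Ib)^{-1/2}(\widehat{\bSigma}_t-\bSigma)(\bSigma+\lambda_t\Ib)^{-1/2}\|_{\op}\lesssim\sqrt{M\log(d/\delta)/t}$ from Lemma~\ref{lemma; covariance concentration} and then convert the operator-norm bound into the two-sided Loewner sandwich by undoing the whitening. If anything you are slightly more careful than the paper, which stops at $\epsilon\le\tfrac12$ (yielding constants $\tfrac23$ and $2$ for the inequality as stated, with $\bSigma+\lambda_t\Ib$ in the middle); your observation that one needs $\epsilon\le\tfrac13$ to land exactly on $\tfrac12$ and $\tfrac32$ is the right way to close that small gap, and is absorbed by the hidden constant in $t\gtrsim M\log(d/\delta)$.
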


\begin{proof}
In the proof of Theorem~\ref{theorem; dimension optimal error bound}, for $\lambda_t = \log(t/\delta)/t$, we showed that with a probability of at least \(1-\delta/10\):
\begin{align*}
\left\|(\bSigma + \lambda_t  \Ib_d)^{-\frac{1}{2}}\left(\bSigma - \widehat{\bSigma}_t\right)(\bSigma + \lambda_t  \Ib_d)^{-\frac{1}{2}}\right\|_{\op}
\lesssim \sqrt{\frac{M\log\left(\frac{10d}{\delta}\right)}{t}}
\lesssim \sqrt{\frac{M\log\left(\frac{d}{\delta}\right)}{t}}.
\end{align*}
Hence, when \(t \gtrsim d\log(d/\delta)\), we have:
\begin{align*}
\left\|(\bSigma + \lambda_t  \Ib_d)^{-\frac{1}{2}}\left(\bSigma - \widehat{\bSigma}_t\right)(\bSigma + \lambda_t  \Ib_d)^{-\frac{1}{2}}\right\|_{\op}
\leq \frac{1}{2},
\end{align*}
which leads to the desired result.
\end{proof}

\section{Experiments}\label{appendix; experiments}
\noindent
We conducted experiments with two shifts \(\nu_t^1 =1 + \sin(2t)\) and \(\nu_t^2 = \max(\frac{\log (t+1)}{5},2) \times (-1)^{q_t}\) where \(q_t \) is remainder of \(t\) dividing by \(3\).
We first simulate the cumulative regret of \algob \  and compare them with two existing semiparametric bandit algorithms: BOSE from \citet{krishnamurthy2018semiparametric} and semiparametric-TS of \citet{kim2019contextual}.
Second, to demonstrate our algorithm is PAC, we examine the maximum estimation error, $\mathbf{e}_t :=\max_{i \in [K]} |(x_i-x_1)^\top (\hat{\theta}_t-\theta^\star) |$ over time \(t\) and we plot the value of \(\sqrt{t} \cdot \mathbf{e}_t\) over time. 
If \(\sqrt{t} \cdot \mathbf{e}_t\) is bounded by some constant, it means that our algorithm has \(\sqrt{t}\)-rate PAC property.

\subsection{Cumulative Regret}
\noindent
For the comparison of cumulative regret, we conducted to cases: \(d=5, K=10\) and \(d=20, K =30\) with noise level \(\sigma = 1\).
We constructed feature set with suboptimality gap \(\Delta_\star = 1/2\).
All previous literature \citet{kim2019contextual,krishnamurthy2018semiparametric} set one parameter as tuning parameter, so we also regard the exploration paramaters as hyperparameters.

\paragraph{Hyperparameters.}
For each algorithm, we tuned a single exploration parameter to its theoretical value, up to a constant factor.

For semiparametric-TS algorithm \citep{kim2019contextual}, we set $v$, which is an exploration parameter for Thompson sampling, as \(v= c\sqrt{d\log T} \) for \(c = 1,2,4,8\) and we chose the optimal hyperparameter.

For BOSE algorithm, we set ridge regularizer $\lambda = 4 d \log (9 T)+8 \log (4 T / \delta)$ as suggested in their algorithm and we only tune $\gamma$, which is a parameter related to exploration level of the algorithm.
We set \(\gamma = c (\sqrt{\lambda}+\sqrt{d \log(\frac{T}{d})+\log ( T / \delta)})\) for \(c=1,2,4,8\), up to constant with their theoretical value.

For our algorithm, we set \(n_\ell = c \min \bigg(\big \lceil\frac{d\log(\frac{d K \ell }{\delta \varepsilon_\ell})}{\varepsilon_\ell^2} + \frac{d^{\frac{3}{2}} \log(\frac{dK \ell }{\delta})}{\varepsilon_\ell}\big\rceil,  \lceil  \frac{d^2}{\varepsilon_\ell^2} \log(\frac{d\ell}{\varepsilon_\ell \delta} )\rceil \bigg)\) for hyperparameter \(c \in \{1,2,4,8\}\) and used the best one.

We can see that our algorithm \algob 's plot became flat after certain time. 
It means that our algorithm did a best arm identification.
Although the graph shapes for the sine shift and cosine shift appear similar in the case of \(d=20, K=30\), the actual regret values differ: the cumulative regret of our algorithm for \(\nu_t^1\) is 2677.66 but for \(\nu_t^2\) is 2703.10.

\begin{figure}[H]
  \centering
  \begin{subfigure}[]{}
  \includegraphics[width=0.43\textwidth]{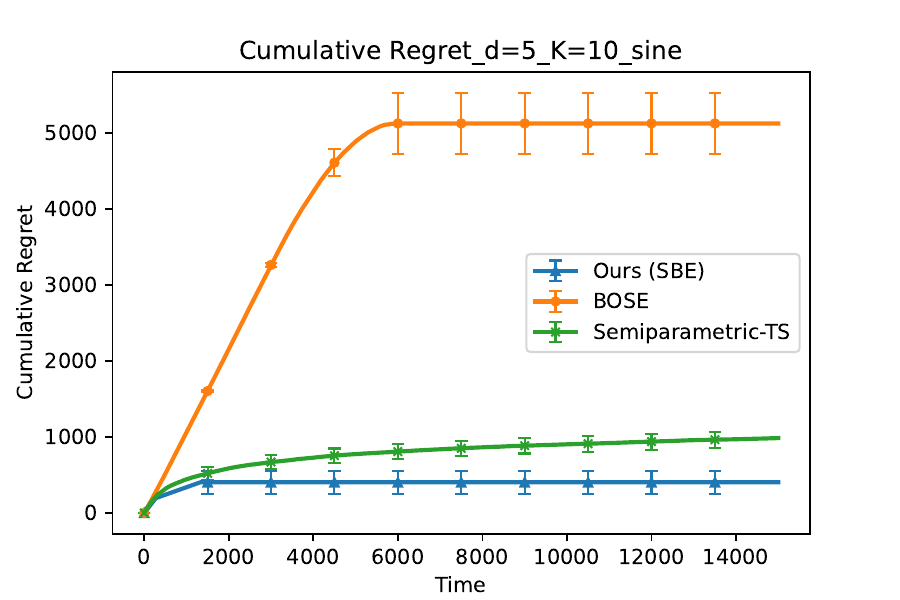}
  \label{fig:f2}
  \end{subfigure}
  \begin{subfigure}[]{}
  \includegraphics[width=0.43\textwidth]{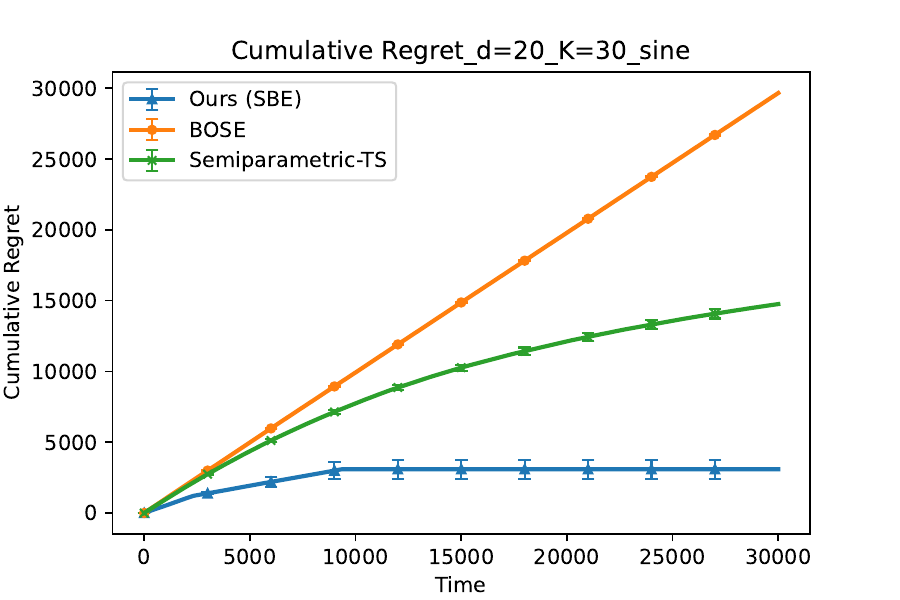}
  \label{fig:f2}
  \end{subfigure}
  \caption{Plots (a) and (b) are the cumulative regret plots of \algob \  with sine shift \(\nu_t^1\). 
  We conducted two cases: \(d=5, K=10\) and \(d= 20, K=30\).}
  \end{figure}
\begin{figure}[H]
\centering
\begin{subfigure}[]{}
\includegraphics[width=0.43\textwidth]{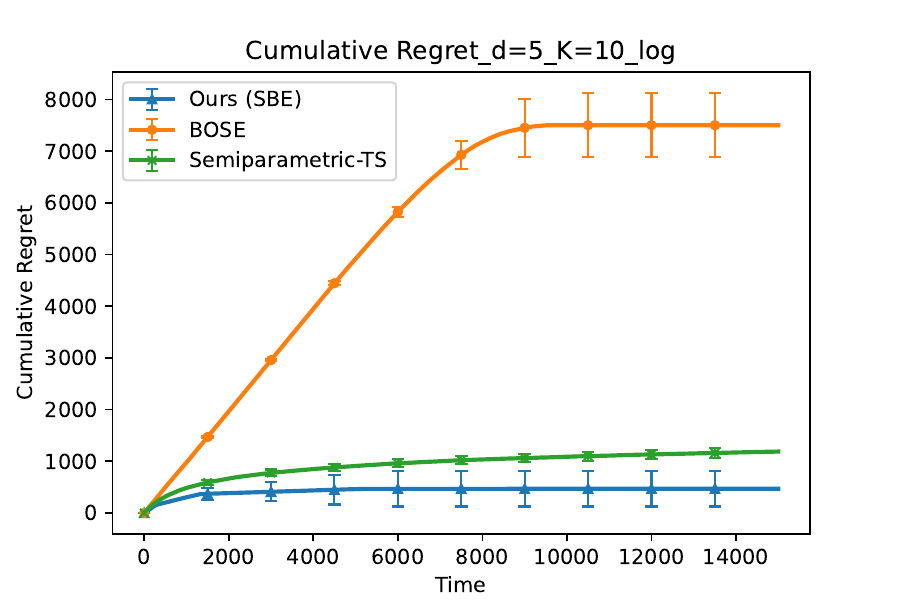}
\label{fig:f2}
\end{subfigure}
\begin{subfigure}[]{}
\includegraphics[width=0.43\textwidth]{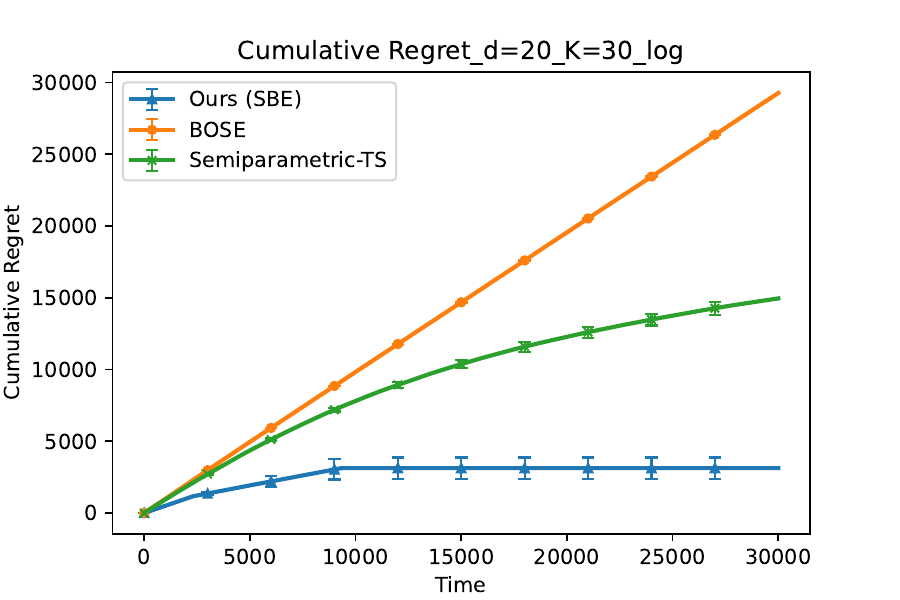}
\label{fig:f2}
\end{subfigure}
\caption{Plots (a) and (b) are the cumulative regret plots of \algob \  with log shift, \(\nu_t^2\).
 We conducted two cases: \(d=5, K=10\) and \(d= 20, K=30\).}
\end{figure}

\subsection{Experiments for PAC Property}
\noindent
Next we conducted experiment to investigate the maximum estimation error over arms, \[ \mathbf{e}_t := \max_{i \in [K]} |(x_i-x_1)^\top (\hat{\theta}_t- \theta^\star) |\] for fixed policy of \algoa.
We plot the $\sqrt{t} \mathbf{e}_t$ over time \(t \ge 1\).
We run 10 times and plot the error bar. 
We set same $K =30$, but change $d=5$ and $d=30$ to see the dimension dependency. 
The theoretical value is \(\sqrt{t}\mathbf{e}_t \lesssim \tilde{\cO}(\sqrt{d \log K})\) and we can see the value of \(\sqrt{t}\mathbf{e}_t \leq 10\sqrt{d \log K}\) for both cases.
Since we calculate every arm \(x_i, i \in[K]\)'s estimation error, this means that our design \algoa \  performs good pure exploration and matches our theory.

\begin{figure}[H]
\centering
\begin{subfigure}[]{}
\includegraphics[width=0.43\textwidth]{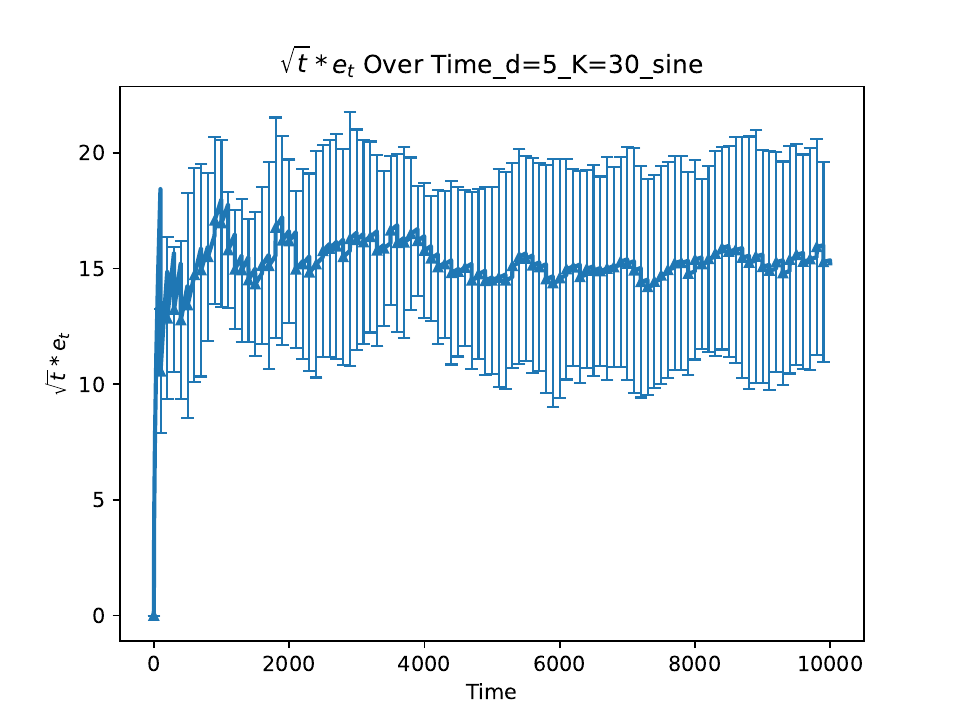}
\label{fig:f2}
\end{subfigure}
\begin{subfigure}[]{}
\includegraphics[width=0.43\textwidth]{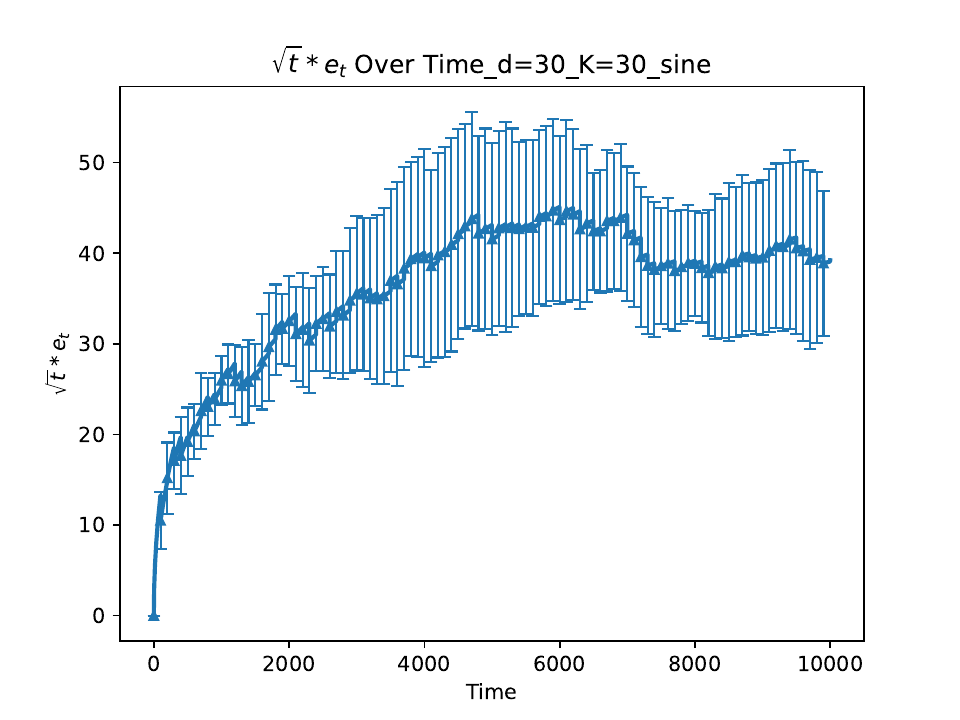}
\label{fig:f2}
\end{subfigure}
\caption{
(a), (b) are plots of \(\sqrt{t}\eb_t\) for \(d=5, K=30\) and \(d=30,K=30\).}
\end{figure}

\begin{figure}[H]
\centering
\begin{subfigure}[]{}
\includegraphics[width=0.43\textwidth]{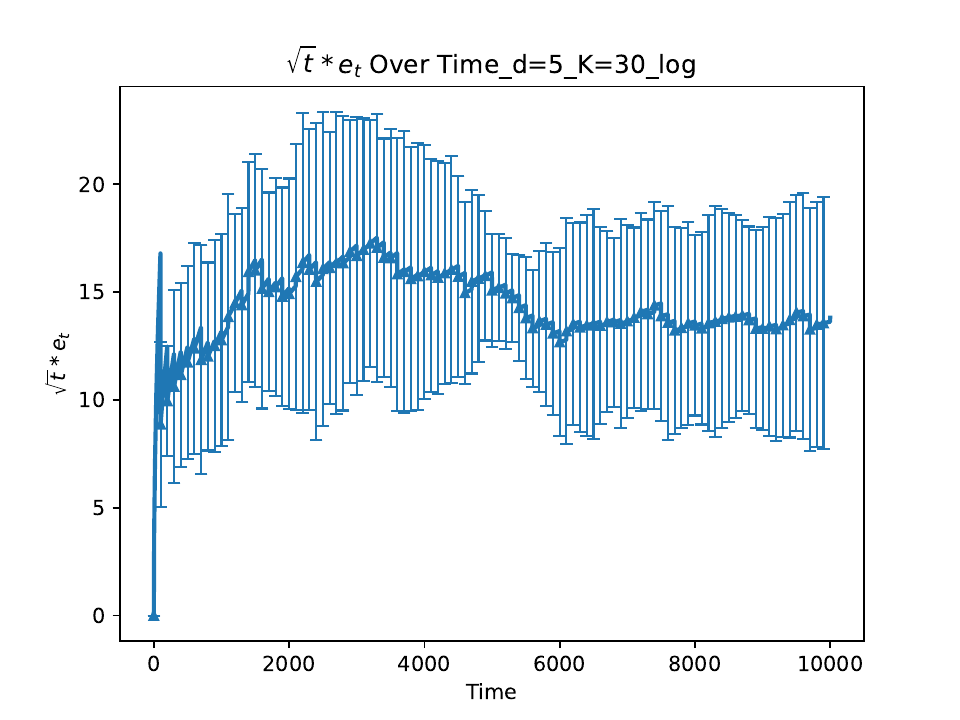}
\label{fig:f2}
\end{subfigure}
\begin{subfigure}[]{}
\includegraphics[width=0.43\textwidth]{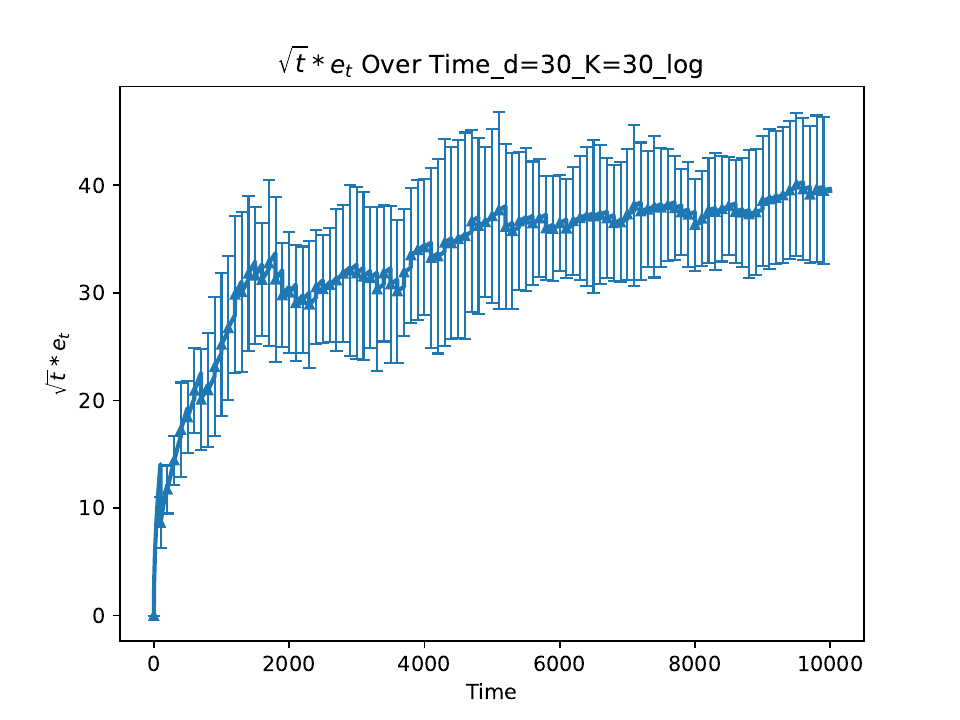}
\label{fig:f2}
\end{subfigure}
\caption{
(a), (b) is plot of \(\sqrt{t}\eb_t\) for \(d=5, K=30\) and \(d=30,K=30\).}
\end{figure}

\section{Technical Lemmas}\label{appendix; second moment concentrations}

\begin{lemma}\label{lemma; second moment concentration}
Assume \(z_i \sim \cP\) i.i.d. for \(1 \leq i \leq t\) and define \(\widehat \Vb_t := \sum_{i=1}^t z_iz_i^\top\) and \(\Vb_t = t\EE[z_1z_1^\top]\).
If $\|z_i\|_2 \le R$ for some absolute constant $R>0$ almost surely, then for \(\beta= \log(t/\delta)\), we have
\begin{align}\label{equation; comparability}
\frac{1}{c}( \widehat{\Vb}_t + \beta \Ib_d) \preceq  \Vb_t + \beta \Ib_d   \preceq c(\widehat{\Vb}_t + \beta \Ib_d)
\end{align}
holds with probability at least \(1-\frac{\delta}{10}\) for some absolute constant \(c>0\).   
\end{lemma}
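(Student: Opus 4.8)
The plan is to reduce the two-sided operator inequality to a single operator-norm deviation bound and then apply a matrix Bernstein inequality to the whitened, centered sum. Write $\bSigma := \EE[z_1 z_1^\top]$, so that $\Vb_t = t\bSigma = \EE[\widehat{\Vb}_t]$, and set $\mathbf{W} := \Vb_t + \beta\Ib_d$. Conjugating by $\mathbf{W}^{-1/2}$ gives
\[
\mathbf{W}^{-1/2}(\widehat{\Vb}_t + \beta\Ib_d)\mathbf{W}^{-1/2} = \Ib_d + \mathbf{M}, \qquad \mathbf{M} := \mathbf{W}^{-1/2}(\widehat{\Vb}_t - \Vb_t)\mathbf{W}^{-1/2}.
\]
If $\|\mathbf{M}\|_{\op} \le \tfrac12$, then $\tfrac12\Ib_d \preceq \Ib_d + \mathbf{M} \preceq \tfrac32\Ib_d$, i.e.\ $\tfrac12\mathbf{W} \preceq \widehat{\Vb}_t+\beta\Ib_d \preceq \tfrac32\mathbf{W}$, and both directions yield the claimed bound \eqref{equation; comparability} with $c=2$. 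So it suffices to control $\|\mathbf{M}\|_{\op}$.

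Next I would write $\mathbf{M} = \sum_{i=1}^t X_i$ with $X_i := \mathbf{W}^{-1/2}(z_iz_i^\top - \bSigma)\mathbf{W}^{-1/2}$, a sum of i.i.d.\ mean-zero symmetric matrices (mean-zero since $\EE[z_iz_i^\top] = \bSigma$). The two matrix-Bernstein ingredients are then computed directly. Using $\mathbf{W} \succeq \beta\Ib_d$ gives $\|\mathbf{W}^{-1/2}z_i\|_2^2 = z_i^\top \mathbf{W}^{-1}z_i \le R^2/\beta$, and using $t\bSigma \preceq \mathbf{W}$ gives $\|\mathbf{W}^{-1/2}\bSigma\mathbf{W}^{-1/2}\|_{\op} \le 1/t$; together $\|X_i\|_{\op} \lesssim R^2/\beta$. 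For the variance, write $Y_i := \mathbf{W}^{-1/2}z_iz_i^\top\mathbf{W}^{-1/2} \succeq 0$ so $X_i = Y_i - \EE[Y_i]$; the rank-one identity $Y_i^2 = \|\mathbf{W}^{-1/2}z_i\|_2^2\, Y_i \preceq (R^2/\beta)Y_i$ yields
\[
\sum_{i=1}^t \EE[X_i^2] \preceq \sum_{i=1}^t \EE[Y_i^2] \preceq \frac{R^2}{\beta}\, t\, \mathbf{W}^{-1/2}\bSigma\mathbf{W}^{-1/2} \preceq \frac{R^2}{\beta}\Ib_d,
\]
so the variance proxy is $\sigma^2 \lesssim R^2/\beta$.

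Finally I would invoke a matrix Bernstein inequality \citep{wainwright2019high}: since both $\sigma^2$ and the almost-sure norm bound scale as $R^2/\beta$, taking the deviation level $\tfrac12$ gives
\[
\PP\bigl[\|\mathbf{M}\|_{\op} \ge \tfrac12\bigr] \lesssim d\exp\!\Bigl(-\frac{c'\,\beta}{R^2}\Bigr)
\]
for an absolute constant $c'>0$. Plugging in $\beta = \log(t/\delta)$ turns the right-hand side into $d\,(\delta/t)^{c'/R^2}$, which is at most $\delta/10$ for a suitable absolute constant, delivering $\|\mathbf{M}\|_{\op} \le \tfrac12$ and hence the claim with $c=2$.

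The hard part is the dimension prefactor $d$ that matrix concentration inequalities carry in front of the exponential: taken naively it forces $\beta \gtrsim \log(d/\delta)$ rather than $\log(t/\delta)$. I would address this with the intrinsic-dimension (trace-class) form of matrix Bernstein, controlling the effective dimension through $\tr\bigl(\sum_i \EE[X_i^2]\bigr) \lesssim \tfrac{R^2}{\beta}\, t\,\tr\bigl(\bSigma(t\bSigma+\beta\Ib_d)^{-1}\bigr)$; in the worst case this still contributes a $\log d$ factor, which is absorbed into the $\tilde{\Ocal}$ convention, and, consistent with the remark in Section~\ref{section; optimal design} and Lemma~\ref{lemma:concentration}, the constant sharpens to $c\in[1,2]$ once $t \gtrsim d\log(dt/\delta)$.
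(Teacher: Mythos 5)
Your whitening-plus-matrix-Bernstein route is a legitimate alternative in spirit to the paper's proof (which simply invokes the packaged trace-class result, Lemma~\ref{lemma; trace class concentration bounded}, and then rescales the regularizer), but as written it has a genuine gap at the final step. Having fixed the deviation level at $1/2$ (to get $c=2$) and the regularizer at exactly $\beta=\log(t/\delta)$, your tail bound reads $d\exp(-c'\beta/R^2)=d\,(\delta/t)^{c'/R^2}$, and at that point there is no parameter left to tune: $c'$ is whatever Bernstein's inequality produces from your proxies $\sigma^2, L \lesssim R^2/\beta$, so $c'/R^2$ is a fixed and possibly small absolute constant. If $c'/R^2<1$, the factor $(\delta/t)^{c'/R^2}$ decays too slowly to be at most $\delta/10$ (fix $\delta$ and let $t\to\infty$), so the claim ``at most $\delta/10$ for a suitable absolute constant'' does not close the argument. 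The slack in the lemma lives in $c$, not in $\beta$, and the missing idea --- which is exactly how the paper proceeds --- is to first prove the two-sided comparison with $\gamma=1/2$ for an \emph{inflated} regularizer $\xi=c_0\log(t/\delta)$, where $c_0$ is an absolute constant large enough (as a function of $R$) for the concentration inequality to apply, and then transfer the conclusion to $\beta=\xi/c_0$ via the elementary bound $\widehat{\Vb}_t+\beta\Ib_d \succeq \frac{\beta}{\xi}\bigl(\widehat{\Vb}_t+\xi\Ib_d\bigr)=\frac{1}{c_0}\bigl(\widehat{\Vb}_t+\xi\Ib_d\bigr)$ and its analogue for $\Vb_t$. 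This costs only the absolute factor $c_0$ in the final comparability constant, which is why the lemma asserts ``some absolute constant $c$'' rather than $c=2$.

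Your diagnosis of the ambient-dimension prefactor is correct and the intrinsic-dimension form of matrix Bernstein is the right tool, but the proposed resolution --- that a residual $\log d$ ``is absorbed into the $\tilde{\Ocal}$ convention'' --- is not valid here: this is a finite-sample probability statement with a prescribed $\beta$, not an asymptotic rate, so there is nothing to absorb a $\log d$ into. The correct observation is that the trace of your variance proxy satisfies $\tr\bigl(t(\Vb_t+\beta\Ib_d)^{-1/2}\bSigma(\Vb_t+\beta\Ib_d)^{-1/2}\bigr)\le \min\bigl(d,\, tR^2/\beta\bigr)$, so the effective-dimension prefactor is polynomial in $t$ and $R$ and free of $d$; once the regularizer-rescaling step above makes the exponent at least, say, $2\log(t/\delta)$, this prefactor is beaten and the bound closes. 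That is precisely what the paper's cited Corollary E.1 of \citet{wang2023pseudo} packages, being stated in a separable Hilbert space with no ambient-dimension dependence. With these two repairs your derivation would constitute a valid, self-contained alternative to the paper's citation-based proof.
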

\begin{proof}
By applying Lemma~\ref{lemma; trace class concentration bounded} directly, for some absolute constant $c_0 >1 $ and \(\xi =c_0 \log(\frac{t}{\delta})\), with probability at least \(1-\frac{\delta}{10}\), 
\begin{align*}
\frac{1}{2}( \widehat{\Vb}_t + \xi \Ib_d) \preceq  \Vb_t + \xi \Ib_d   \preceq 2(\widehat{\Vb}_t + \xi \Ib_d)
\end{align*}
holds.
Thus for \(\beta =  \log(\frac{t}{\delta})\), we have 
\begin{align*}
\widehat{\Vb}_t + \beta \Ib_d \succeq ( \widehat{\Vb}_t + \xi \Ib_d) \frac{\beta}{\xi} \succeq \frac{1}{c_0}( \widehat{\Vb}_t + \xi \Ib_d) \succeq \frac{1}{2c_0}( {\Vb}_t + \xi \Ib_d) \succeq \frac{1}{2c_0}( {\Vb}_t + \beta \Ib_d).
\end{align*}
Similarly, we have 
\begin{align*}
\widehat{\Vb}_t + \beta \Ib_d  \preceq 2c_0 ({\Vb}_t + \xi \Ib_d).
\end{align*}
Hence there exists absolute constant \(c>0\) with probability at least \(1-\frac{\delta}{10}\), 
\begin{align*}
\frac{1}{c}( {\Vb}_t + \beta \Ib_d) \preceq   \widehat{\Vb}_t + \beta \Ib_d  \preceq c( {\Vb}_t + \beta \Ib_d).
\end{align*}
\end{proof}

\begin{remark}\label{remark; regularizer choice}
We can choose \(\beta = c\log(t/\delta)\) for any absolute constant \(c\), but for simplicity, we just choose \(\beta = \log(\frac{t}{\delta})\).
\end{remark}

\begin{lemma}[Corollary E.1 from \citealt{wang2023pseudo}]\label{lemma; trace class concentration bounded}
Let $\left\{\boldsymbol{x}_i\right\}_{i=1}^n$ be i.i.d. random elements in a separable Hilbert space $\mathbb{H}$ with $\boldsymbol{\Sigma}=$ $\mathbb{E}\left(\boldsymbol{x}_i \otimes \boldsymbol{x}_i\right)$ being trace class. Define $\widehat{\boldsymbol{\Sigma}}=\frac{1}{n} \sum_{i=1}^n \boldsymbol{x}_i \otimes \boldsymbol{x}_i$. Choose any constant $\gamma \in(0,1)$ and define an event $\mathcal{A}=\{(1-\gamma)(\boldsymbol{\Sigma}+\lambda \boldsymbol{I}) \preceq \widehat{\boldsymbol{\Sigma}}+\lambda \boldsymbol{I} \preceq(1+\gamma)(\boldsymbol{\Sigma}+\lambda \boldsymbol{I})\}$.
1. If $\left\|\boldsymbol{x}_i\right\|_{\mathbb{H}} \leq M$ holds almost surely for some constant $M$, then there exists a constant $C \geq 1$ determined by $\gamma$ such that $\mathbb{P}(\mathcal{A}) \geq 1-\delta$ holds so long as $\delta \in(0,1 / 14]$ and $\lambda \geq \frac{C M^2 \log (n / \delta)}{n}$.    
\end{lemma}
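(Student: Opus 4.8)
\noindent The plan is to recognize the two-sided operator sandwich as a single relative spectral bound and then invoke a matrix Bernstein inequality in its intrinsic-dimension form. First I would observe that the event $\mathcal{A}$ holds if and only if $\bigl\|(\boldsymbol\Sigma+\lambda\boldsymbol I)^{-1/2}(\widehat{\boldsymbol\Sigma}-\boldsymbol\Sigma)(\boldsymbol\Sigma+\lambda\boldsymbol I)^{-1/2}\bigr\|_{\op}\le\gamma$. Indeed, conjugating the sandwich $(1-\gamma)(\boldsymbol\Sigma+\lambda\boldsymbol I)\preceq\widehat{\boldsymbol\Sigma}+\lambda\boldsymbol I\preceq(1+\gamma)(\boldsymbol\Sigma+\lambda\boldsymbol I)$ by the positive-definite operator $(\boldsymbol\Sigma+\lambda\boldsymbol I)^{-1/2}$ turns it into $-\gamma\boldsymbol I\preceq \boldsymbol W-\boldsymbol M_0\preceq\gamma\boldsymbol I$, where $\boldsymbol W:=(\boldsymbol\Sigma+\lambda\boldsymbol I)^{-1/2}\widehat{\boldsymbol\Sigma}(\boldsymbol\Sigma+\lambda\boldsymbol I)^{-1/2}$ and $\boldsymbol M_0:=(\boldsymbol\Sigma+\lambda\boldsymbol I)^{-1/2}\boldsymbol\Sigma(\boldsymbol\Sigma+\lambda\boldsymbol I)^{-1/2}\preceq\boldsymbol I$. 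This reduces the claim to a concentration statement for the whitened empirical second moment.

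Next I would set $\boldsymbol y_i:=(\boldsymbol\Sigma+\lambda\boldsymbol I)^{-1/2}\boldsymbol x_i$ and write $\boldsymbol W-\boldsymbol M_0=\tfrac1n\sum_{i=1}^n\boldsymbol Z_i$ with $\boldsymbol Z_i:=\boldsymbol y_i\otimes\boldsymbol y_i-\boldsymbol M_0$, a sum of i.i.d.\ centered self-adjoint operators. Two parameters drive the bound. For the uniform bound, $\|\boldsymbol y_i\otimes\boldsymbol y_i\|_{\op}=\|\boldsymbol y_i\|_{\mathbb H}^2\le\|\boldsymbol x_i\|_{\mathbb H}^2/\lambda\le M^2/\lambda$ (using $(\boldsymbol\Sigma+\lambda\boldsymbol I)^{-1}\preceq\lambda^{-1}\boldsymbol I$), so $\|\boldsymbol Z_i\|_{\op}\le 2M^2/\lambda=:L$. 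For the matrix variance, $\mathbb E[\boldsymbol Z_i^2]=\mathbb E[(\boldsymbol y_i\otimes\boldsymbol y_i)^2]-\boldsymbol M_0^2\preceq\mathbb E[\|\boldsymbol y_i\|_{\mathbb H}^2\,\boldsymbol y_i\otimes\boldsymbol y_i]\preceq(M^2/\lambda)\boldsymbol M_0$, whence the variance statistic of $\boldsymbol W-\boldsymbol M_0$ satisfies $v:=\bigl\|\tfrac1n\mathbb E[\boldsymbol Z_1^2]\bigr\|_{\op}\le \tfrac{M^2}{\lambda n}\|\boldsymbol M_0\|_{\op}$.

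Then I would apply the intrinsic-dimension version of the matrix Bernstein inequality (Tropp). Crucially, because $\mathbb H$ may be infinite-dimensional, this version replaces the ambient dimension by the effective rank of the variance, so the argument never uses finiteness of $\dim\mathbb H$; this is exactly what the trace-class hypothesis on $\boldsymbol\Sigma$ buys. The inequality gives, with probability at least $1-\delta$, a bound of the form $\|\boldsymbol W-\boldsymbol M_0\|_{\op}\lesssim\sqrt{v\log(r/\delta)}+L\log(r/\delta)$, where $r$ is the intrinsic dimension, itself controlled by $\operatorname{tr}(\boldsymbol M_0)/\|\boldsymbol M_0\|_{\op}$. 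The main obstacle is the bookkeeping inside this logarithm: a priori $\|\boldsymbol M_0\|_{\op}$ can be tiny and $r$ large. The resolution is that $v$ carries a compensating factor $\|\boldsymbol M_0\|_{\op}$, and $\|\boldsymbol M_0\|_{\op}\log\!\bigl(\tfrac{\operatorname{tr}(\boldsymbol M_0)}{\|\boldsymbol M_0\|_{\op}}\bigr)$ stays bounded by $\operatorname{tr}(\boldsymbol M_0)+\|\boldsymbol M_0\|_{\op}\log\operatorname{tr}(\boldsymbol M_0)$; using $\operatorname{tr}(\boldsymbol M_0)\le\operatorname{tr}(\boldsymbol\Sigma)/\lambda\le M^2/\lambda$ and $\|\boldsymbol M_0\|_{\op}\le1$, on the regime $\lambda\gtrsim M^2\log(n/\delta)/n$ one has $\operatorname{tr}(\boldsymbol M_0)\lesssim n$, so $\log r=O(\log n)$ and $v\log(r/\delta)\lesssim \tfrac{M^2\log(n/\delta)}{\lambda n}$.

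Finally, requiring both $\sqrt{v\log(r/\delta)}\le\gamma/2$ and $L\log(r/\delta)\le\gamma/2$ is met once $\lambda n\ge C_\gamma M^2\log(n/\delta)$ for a constant $C_\gamma$ depending only on $\gamma$, which rearranges to the stated threshold $\lambda\ge C M^2\log(n/\delta)/n$. Folding the absorbed effective-dimension logarithm and the restriction $\delta\le 1/14$ (which fixes the leading constant in the intrinsic-dimension tail) into $C$ recovers Corollary~E.1 of \citet{wang2023pseudo} exactly.
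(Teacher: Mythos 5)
The paper does not prove this statement at all: it is imported verbatim as Corollary~E.1 of \citet{wang2023pseudo}, so your attempt is a from-scratch reconstruction of an external result rather than something to compare against an in-paper argument. Your reduction of the event $\mathcal{A}$ to $\bigl\|(\bm\Sigma+\lambda\Ib)^{-1/2}(\widehat{\bm\Sigma}-\bm\Sigma)(\bm\Sigma+\lambda\Ib)^{-1/2}\bigr\|_{\op}\le\gamma$ is correct, as are the whitening, the uniform bound $\|\bm y_i\|^2\le M^2/\lambda$, and the variance bound $\mathbb{E}[\bm Z_i^2]\preceq(M^2/\lambda)\bm M_0$.

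There is, however, a genuine gap at the step ``$\operatorname{tr}(\bm M_0)\lesssim n$, so $\log r=O(\log n)$.'' The intrinsic dimension is $r=\operatorname{tr}(\bm M_0)/\|\bm M_0\|_{\op}$, and nothing in the hypotheses bounds $\|\bm M_0\|_{\op}=\|\bm\Sigma\|_{\op}/(\|\bm\Sigma\|_{\op}+\lambda)$ away from zero: when $\|\bm\Sigma\|_{\op}\ll\lambda$ the quantity $r$ is essentially the effective rank $\operatorname{tr}(\bm\Sigma)/\|\bm\Sigma\|_{\op}$, which can be arbitrarily large (this is exactly the case the separable-Hilbert-space, trace-class phrasing is meant to cover). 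Your compensation argument — that $v$ carries a factor $\|\bm M_0\|_{\op}$ so that $\|\bm M_0\|_{\op}\log(r/\delta)$ stays controlled — rescues the variance term $\sqrt{v\log(r/\delta)}$, but it does nothing for the large-deviation term, which is $\tfrac{L}{n}\log(r/\delta)=\tfrac{2M^2}{\lambda n}\log(r/\delta)$ (note also the missing $1/n$ in your displayed Bernstein bound) and has no $\|\bm M_0\|_{\op}$ prefactor. A concrete instance: $\bm x_i=M e_{j_i}$ with $j_i$ uniform on $[d]$ and $d\gg n$; the lemma holds there with $\lambda\gtrsim M^2/(\gamma n)$ by direct computation, yet the intrinsic-dimension Bernstein bound carries a $\log(d/\delta)$ factor and becomes vacuous as $d\to\infty$. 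So your route only yields the threshold $\lambda\gtrsim\frac{M^2}{n}\log\bigl(\frac{\operatorname{tr}(\bm\Sigma)}{\|\bm\Sigma\|_{\op}\,\delta}\bigr)$, not $\frac{M^2}{n}\log(n/\delta)$; the dimension-free version requires a different tool for the upper tail (e.g.\ a PAC-Bayes/variational argument in the style of Oliveira or Zhivotovskiy, which is the kind of machinery \citet{wang2023pseudo} rely on). Two mitigating remarks: in finite dimension one always has $r\le d$, and every invocation of this lemma in the present paper has sample size $t\gtrsim d$, so your argument does suffice for all of the paper's actual applications — it just does not prove the lemma as stated.
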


\begin{lemma}[Lemma E.3 from \citealt{wang2023pseudo}]\label{lemma; operator covariance concentration}
Let $\left\{\boldsymbol{x}_i\right\}_{i=1}^n$ be i.i.d. random elements in a separable Hilbert space $\mathbb{H}$ with $\boldsymbol{\Sigma}=$ $\mathbb{E}\left(\boldsymbol{x}_i \otimes \boldsymbol{x}_i\right)$ being trace class. Define $\widehat{\boldsymbol{\Sigma}}=\frac{1}{n} \sum_{i=1}^n \boldsymbol{x}_i \otimes \boldsymbol{x}_i$.
1. If $\left\|\boldsymbol{x}_i\right\|_{\mathbb{H}} \leq S$ holds almost surely for some constant $S$, then for any $v^2 \geq\|\boldsymbol{\Sigma}\|$ and $r \geq \operatorname{Tr}(\boldsymbol{\Sigma}) / v^2$,

\begin{align*}
\mathbb{P}\left(\|\widehat{\boldsymbol{\Sigma}}-\boldsymbol{\Sigma}\|_{\op} \leq \sqrt{\frac{8 S^2 v^2 \log (r / \delta)}{n}}+\frac{6 S^2 \log (r / \delta)}{n}\right) \geq 1-\delta, \quad \forall \delta \in(0, r / 14]
\end{align*}

\end{lemma}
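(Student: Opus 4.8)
The plan is to recognize the claim as an effective-rank (intrinsic-dimension) operator Bernstein inequality for a normalized sum of i.i.d.\ bounded self-adjoint operators, and to derive it from a dimension-free matrix Bernstein bound. Since $\mathbb{H}$ may be infinite-dimensional, the ambient dimension $d$ that appears in ordinary matrix Bernstein is unavailable, and its role must instead be played by the effective rank $r$. First I would recenter: setting $\bm S_i := \bm x_i \otimes \bm x_i - \bSigma$, the quantity of interest is $\Sighat - \bSigma = \tfrac{1}{n}\sum_{i=1}^n \bm S_i$, a normalized sum of i.i.d., self-adjoint, mean-zero, trace-class operators. A tail bound for $\bigl\|\sum_{i=1}^n \bm S_i\bigr\|_{\op}$ divided by $n$ then yields the stated bound on $\|\Sighat - \bSigma\|_{\op}$.

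The two quantitative inputs are a uniform bound and a variance proxy. Because $\bSigma = \EE[\bm x\otimes\bm x]\succeq 0$ satisfies $\|\bSigma\|_{\op}\le\EE\|\bm x\|_{\mathbb H}^2\le S^2$, each summand obeys $\|\bm S_i\|_{\op}\le\|\bm x_i\|_{\mathbb H}^2+\|\bSigma\|_{\op}\le 2S^2=:L$. For the variance, the identity $(\bm x\otimes\bm x)^2=\|\bm x\|_{\mathbb H}^2\,(\bm x\otimes\bm x)$ gives $\EE[\bm S_i^2]\preceq\EE[(\bm x\otimes\bm x)^2]=\EE[\|\bm x\|_{\mathbb H}^2\,\bm x\otimes\bm x]\preceq S^2\bSigma$, so the variance proxy $\bm V:=\sum_{i=1}^n\EE[\bm S_i^2]\preceq nS^2\bSigma$ has $\|\bm V\|_{\op}\le nS^2\|\bSigma\|_{\op}\le nS^2v^2$ and $\operatorname{Tr}(\bm V)\le nS^2\operatorname{Tr}(\bSigma)$. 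Feeding $L$, the variance scale $v^2\ge\|\bSigma\|_{\op}$, and the effective rank $r\ge\operatorname{Tr}(\bSigma)/v^2$ into the dimension-free Bernstein tail produces, for $t$ above the threshold $\sqrt{\|\bm V\|_{\op}}+L/3$, a bound of the shape $\PP\bigl(\|\sum_i\bm S_i\|_{\op}\ge t\bigr)\le 14\,r\,\exp\!\bigl(-\tfrac{t^2/2}{\|\bm V\|_{\op}+Lt/3}\bigr)$. Setting the right-hand side equal to $\delta$ and solving the resulting quadratic for the smallest admissible $t$ gives $t\lesssim\sqrt{\|\bm V\|_{\op}\log(r/\delta)}+L\log(r/\delta)$; substituting $\|\bm V\|_{\op}\le nS^2v^2$ and $L=2S^2$ and dividing by $n$ recovers the claimed $\sqrt{8S^2v^2\log(r/\delta)/n}+6S^2\log(r/\delta)/n$. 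The restriction $\delta\in(0,r/14]$ is exactly what lets one absorb the prefactor $14r$ into $\log(r/\delta)$ during this inversion.

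The main obstacle is establishing the dimension-free tail itself: because $\mathbb{H}$ is only separable and possibly infinite-dimensional, the classical finite-dimensional matrix Bernstein inequality does not apply, and one must replace the ambient dimension by the effective rank $r$. The delicate core is to bound the matrix moment generating function $\EE\operatorname{Tr}\exp(\theta\sum_i\bm S_i)$ using Lieb's concavity theorem and subadditivity of matrix cumulants, and then to pass to a tail through a trace-exponential Markov argument that pays only a factor of order $r$ rather than $\dim\mathbb{H}$; this is precisely the content of the intrinsic-dimension Bernstein inequality of Tropp, together with its Hilbert-space extension due to Minsker, and here it is imported as the cited result from \citet{wang2023pseudo}. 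A secondary, purely bookkeeping difficulty is propagating the exact numerical constants ($8$ and $6$) through the MGF-to-tail conversion and the quadratic inversion: the loose constants above must be sharpened using the precise form of the effective-rank bound rather than a generic textbook version.
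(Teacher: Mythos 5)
The paper does not prove this lemma at all: it is imported verbatim as Lemma E.3 of \citet{wang2023pseudo} and used as a black box (in the proof of Lemma~\ref{lemma; covariance concentration}), so there is no in-paper argument to compare yours against. That said, your reduction is the standard and correct route to such a statement, and the concrete computations you perform check out: $\|\bSigma\|_{\op}\le \EE\|\bm{x}\|_{\mathbb{H}}^2\le S^2$ gives the uniform bound $\|\bm{x}_i\otimes\bm{x}_i-\bSigma\|_{\op}\le 2S^2$, and the identity $(\bm{x}\otimes\bm{x})^2=\|\bm{x}\|_{\mathbb{H}}^2\,(\bm{x}\otimes\bm{x})$ gives the variance proxy $\EE[(\bm{x}_i\otimes\bm{x}_i-\bSigma)^2]\preceq S^2\bSigma$, which are exactly the two inputs needed for an intrinsic-dimension (Tropp/Minsker-type) operator Bernstein inequality in a separable Hilbert space. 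You are also right that the genuinely hard step --- the dimension-free trace-MGF bound that replaces $\dim\mathbb{H}$ by the effective rank $r\ge\operatorname{Tr}(\bSigma)/v^2$ and produces the $\delta\le r/14$ restriction --- is precisely the content of the cited result, so your proposal honestly defers to the same external ingredient the paper does rather than reproving it. The one loose end you correctly flag yourself is the constant bookkeeping (the $8$ and $6$, and the fact that the natural intrinsic dimension of the majorant $nS^2\bSigma$ is $\operatorname{Tr}(\bSigma)/\|\bSigma\|_{\op}$ rather than $\operatorname{Tr}(\bSigma)/v^2$, so one must use the form of the bound that is monotone in the pair $(v^2,r)$); this is a presentational gap, not a mathematical one.
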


\end{document}